\documentclass{article}

\usepackage[final]{neurips_2025}

\usepackage[utf8]{inputenc}
\usepackage[T1]{fontenc}
\usepackage{url}
\usepackage{booktabs}
\usepackage{amsfonts}
\usepackage{nicefrac}
\usepackage{microtype}
\usepackage[dvipsnames, svgnames, x11names]{xcolor}
\usepackage{enumitem}

\usepackage{algorithm}
\usepackage{algorithmic}
\usepackage{booktabs}

\usepackage{amsmath}
\usepackage{amsfonts,amssymb}
\usepackage{graphicx}
\usepackage[colorlinks]{hyperref}
\usepackage{natbib}
\usepackage{amsthm}

\usepackage{makecell}
\usepackage{booktabs}
\usepackage{threeparttable}
\usepackage{bm}

\usepackage{subfigure}
\usepackage{wrapfig}

\usepackage{float}

\hypersetup{
    citecolor = DeepSkyBlue3
}

\newtheorem{lemma}{Lemma}
\newtheorem{theorem}{Theorem}
\newtheorem{proposition}{Proposition}

\newtheorem{corollary}[theorem]{Corollary}
\newtheorem{remark}{Remark}

\ifx\assumption\undefined
\newtheorem{assumption}{Assumption}

\fi
\theoremstyle{remark}

\newcommand{\Norm}[1]{\left\|#1\right\|}

\newcommand{\dotprod}[2]{\left\langle#1,#2\right\rangle}
\newcommand{\Abs}[1]{\left\vert #1 \right\vert}
\newcommand{\tr}[1]{{\rm tr}\left( #1 \right)}

\newcommand{\cW}{\mathcal{W}}
\newcommand{\bw}{\mathbf{w}}

\newcommand{\EE}{\mathbb{E}}
\newcommand{\cO}{\mathcal{O}}
\newcommand{\cB}{\mathcal{B}}

\newcommand{\RR}{\mathbb{R}}

\newcommand{\tW}{\Delta W}
\newcommand{\tG}{\tilde{G}}

\allowdisplaybreaks[4]

\title{ASGO: Adaptive Structured Gradient Optimization}
\author{
{Kang An$^{1}$\thanks{Equal Contribution. Ordering for the first two authors is determined by a coin flip.},
Yuxing Liu$^{2}$\footnotemark[1],
Rui Pan$^{2}$, Yi Ren$^{3}$,
Shiqian Ma$^{1}$,
Donald Goldfarb$^{4}$,
Tong Zhang$^{2}$}
\\ \\
{\normalsize $^{1}$Rice University 
$^{2}$University of Illinois Urbana-Champaign} \\ {\normalsize
$^{3}$Meta Platforms, Inc.
$^{4}$Columbia University} \\
\tiny\texttt{\{kang.an,shiqian.ma\}@rice.edu, \{yuxing6,ruip4,tozhang\}@illinois.edu, yiren94@meta.com, goldfarb@columbia.edu}
}
\date{}

\begin{document}
\maketitle

\begin{abstract}
    Training deep neural networks is a structured optimization problem, because the parameters are naturally represented by matrices and tensors rather than by vectors. Under this structural representation, it has been widely observed that gradients are low-rank and Hessians are approximately block diagonal. These structured properties are crucial for designing efficient optimization algorithms, but are not utilized by many current popular optimizers like Adam. In this paper, we present a novel optimization algorithm ASGO that capitalizes on these properties by employing a preconditioner that is adaptively updated using structured gradients. By
    a fine-grained theoretical analysis, ASGO is proven to achieve superior convergence rates compared to existing structured gradient methods. Based on this convergence theory, we further demonstrate that ASGO can benefit from low-rank gradients and block diagonal Hessians. We also discuss practical modifications of ASGO and empirically verify ASGO's effectiveness on language model tasks.
    Code is available at \url{https://github.com/infinity-stars/ASGO}.
\end{abstract}

\section{Introduction}
Numerical optimization algorithms, especially those that can efficiently train large foundation models~\citep{devlin2018bert, brown2020gpt3, touvron2023llama, touvron2023llama2, ouyang2022instructgpt}, 
play an important role in the modern machine learning field. Among them, adaptive gradient methods like AdaGrad~\citep{duchi2011adaptive} and Adam~\citep{kingma2014adam} are popular choices, gaining huge success in training state-of-the-art models in many tasks. These algorithms typically apply a diagonal matrix preconditioner to the gradient $g_t$ to update the deep neural network (DNN) parameters $w_t$;
\begin{align*}
    w_{t+1} = w_t - \eta_t \Lambda_t^{-1} g_t, \text{ where $w_t\in\RR^d$, $g_t\in\RR^d$, and $\Lambda_t \in \RR^{d\times d}$ is a diagonal matrix}.
\end{align*}
This coordinate-wise step size design has been theoretically verified to be effective as it can exploit the sparsity of the gradient vectors $g_t$~\citep{duchi2011adaptive}. Also, when the Hessian {is well-approximated by a diagonal matrix whose diagonal entries have very different scales}, adaptive gradient methods have been proven to be beneficial~\citep{liu2024adagrad,jiang2024convergence,xie2024adam}. While these results seem to be convincing, common DNNs do not necessarily have sparse gradients or Hessians that are well-approximated by ill-conditioned diagonal matrices. Instead, if we take the matrix structure of gradients in DNNs
into account, it has been widely observed that {these structured} gradients are usually low-rank~\citep{zhao2021zero,yang2023spectral,cosson2023low}, and the Hessians {are well-approximated by} block diagonal matrices~\citep{collobert2004large,zhang2024transformers,zhang2024adam}. Since adaptive gradient methods like Adam, treat the parameters as vectors and ignore the matrix structure of gradients, they are generally unable to exploit these structured properties. This leads us to ask the following question: \textit{How can we 
effectively use the matrix structure of gradients 
to exploit their low-rank and block diagonal properties?}

One possible answer is provided by Shampoo~\citep{gupta2018shampoo}:
\begin{align*}
    W_{t+1} = W_t - \eta_t L_t^{-\frac{1}{4}} G_t R_t^{-\frac{1}{4}}, \text{ where $W_t, G_t\in \RR^{m\times n}$ and $L_t \in \RR^{m\times m}$, $R_t \in \RR^{n\times n}$ are full matrices.}
\end{align*}
The main motivation for such a design is that if we apply vectorization to the update, the Shampoo preconditioner is a single matrix that is the Kronecker product of $L_t^{-\frac{1}{4}}$ and $R_t^{-\frac{1}{4}}$, which approximates the full-matrix preconditioner for AdaGrad~\citep{duchi2011adaptive}. However, the {theoretical convergence of Shampoo is worse than that for} AdaGrad or even SGD when the dimension is large. Also, Shampoo needs more memory and much heavier computation than adaptive gradient methods because it requires two preconditioners, making it less suitable for training large-scale DNNs.

In this paper, we provide an answer to the aforementioned question by proposing ASGO (\textbf{A}daptive \textbf{S}tructured \textbf{G}radient \textbf{O}ptimization), which significantly improves the convergence guarantees of Shampoo while requiring less memory and computation. In light of the analysis that demonstrates the benefits of adaptive gradient methods~\citep{duchi2011adaptive,liu2024adagrad,jiang2024convergence,xie2020linear}, we use appropriate assumptions to enable a more fine-grained convergence analysis, that yields superior convergence results and an explanation of how ASGO can benefit from the low-rank and block diagonal properties of
a given
problem. {We also discuss the connection between ASGO and Muon~\citep{jordan2024muon}, a structured gradient method based upon the steepest descent algorithm with
respect  to the spectral norm, to conjecture a relation between ASGO and Muon analogous to the relation between AdaGrad and SignSGD~\citep{bernstein2018signsgd,kunstner2023noise}.} Furthermore, we develop an efficient design for query-key attention parameters in transformer models and examine 
its
empirical performance on pretraining transformer model tasks. Our main contributions are summarized as follows.
\begin{itemize}[leftmargin=*]
    \item We propose the structured gradient based algorithm ASGO, theoretically analyze its convergence, and show that it converges faster than full-matrix AdaGrad and Shampoo.
    \item We further demonstrate that ASGO can effectively exploit the low-rankness of gradients as well as the approximate block diagonal property of Hessians that is typically observed in training DNNS, indicating good theoretical properties under realistic settings.
    \item We develop a practical implementation of ASGO that is efficient while enhancing ASGO's effectiveness. We further empirically validate the effectiveness and efficiency of this implementation on language model tasks, demonstrating the algorithm's great potential in real applications. 
\end{itemize}

\section{Related Work}\label{sec:related_work}

\paragraph{Adaptive Gradient Methods.} Adaptive gradient methods that use diagonal preconditioners to speedup 
convergence are extremely popular for solving many real-world optimization problems. {To the best of our knowledge,} the first method of this kind {for machine learning,} AdaGrad~\citep{duchi2011adaptive,streeter2010less}, was developed based on rigorous theory that showed the benefits of using this kind of preconditioner. Adam~\citep{kingma2014adam,loshchilov2017decoupled} modified AdaGrad and has become the default choice for training large foundation models. In theory, it has been proven that adaptive gradient methods can benefit from sparse gradients and approximately ill-conditioned Hessians~\citep{duchi2011adaptive,liu2024adagrad,jiang2024convergence,xie2024adam}. It is worth noting that the original AdaGrad paper~\citep{duchi2011adaptive} also proposed a version of AdaGrad that uses a full-matrix preconditioner instead of the diagonal one, which is believed to perform even better. However, this full-matrix AdaGrad method suffers from large memory costs for storing the preconditioner, 
while having 
no better convergence guarantee compared to diagonal AdaGrad and SGD under 
the assumption of objective function convexity
as described 
in Section~\ref{sec:nonsmooth} 
below.

\paragraph{Optimization with Matrix Structure.} In the standard optimization literature (i.e., excluding areas such as conic optimization), it is common to consider variables as vectors. However, recently, optimization methods for machine learning that consider variables as matrices have been rapidly gaining attention. Adafactor~\citep{shazeer2018adafactor}, LAMB~\citep{you2019large}, and Adam-mini~\citep{zhang2024adam} consider the matrix or layer structure to help reduce the memory cost of Adam and enable more efficient training. Shampoo~\citep{gupta2018shampoo} and KFAC~\citep{martens2015optimizing} are two pioneering works that approximate, {respectively, the full-matrix preconditioner of AdaGrad and the \textbf{true} Fisher matrix (FM), using Kronecker products of smaller matrices, making the memory cost more affordable. } Unfortunately, the analysis in \citep{gupta2018shampoo} shows that the rate of convergence for Shampoo is no better than that for the full-matrix AdaGrad. {We note that there is another method, TNT~\citep{ren2021tensornormaltrainingdeep} that is very closely related to Shampoo. TNT was developed as a natural gradient method, approximating the \textbf{true} FM by the covariance of block-wise sampling-based gradients assuming that they are Tensor-Normally distributed. The main difference between TNT and Shampoo, is that TNT uses true FMs and inverses of their Kronecker factors, whereas Shampoo uses empirical FMs and the -1/4 power of their factors.} {More discussion on 
related
recent and concurrent work is presented in Appendix~\ref{appendix:additional_related_work}.}

\section{Our ASGO Algorithm}

\subsection{Notation and problem setting}

Throughout this paper, we use capital letters such as $W$ to represent matrices and $[W]_{i,j}$ to denote the $(i,j)$-th entry of $W$. For an arbitrary matrix $W \in \RR^{m\times n}$, we denote
\begin{itemize}[leftmargin=*]
    \item $\Norm{W}_{\mathrm{op}}$ as the spectral norm of a matrix $W$, i.e., 
    its
    largest singular value;
    \item $\Norm{W}_*$ as the trace norm of a matrix $W$, i.e., the summation of its singular values, which is well-known as the dual norm of the spectral norm;
    \item $\Norm{W}_{\mathrm{F}}$ as the Frobenius norm of $W$, which also equals $\tr{W^\top W}$, where $\tr{\cdot}$ is the trace;
    \item $\Norm{W}_L \triangleq \tr{W^\top L W}$, where $L \in \RR^{m\times m}$ is a real symmetric positive definite matrix.
\end{itemize}
For symmetric square matrices, $A,B \in \RR^{m\times m}$, $A \preceq B$ denotes that $B-A$ is positive semidefinite and $A \prec B$ denotes that $B-A$ is positive definite. $\succ$ and $\succeq$ are defined accordingly. We study the following stochastic optimization problem:
\begin{align}\label{eq:prob_setting}
    \min_{W \in \RR^{m\times n}} f(W) \triangleq \EE_\xi\left[f(W, \xi)\right],
\end{align}
where we only have access to a stochastic gradient oracle $\nabla f(W;\xi)$ at $W$.

\subsection{ASGO (Algorithm~\ref{alg:asgo})}
We propose ASGO (\textbf{A}daptive \textbf{S}tructured \textbf{G}radient \textbf{O}ptimization) in Algorithm~\ref{alg:asgo}, which is an algorithm with a single-side preconditioner. Compared to full-matrix AdaGrad, ASGO preserves and utilizes the matrix structure of $W_t$ and $G_t$, avoiding the huge memory cost for storing its preconditioner. ASGO's preconditioner consists of a single matrix compared to the two matrices used by Shampoo, leading to the following main update rule:
\begin{align*}
    W_{t+1} = W_t - \eta_t V_t^{-\frac{1}{2}} G_t, \text{ where $W_t \in \RR^{m\times n}$ and $V_t \in \RR^{m\times m}$ is a full matrix.}
\end{align*}
ASGO only needs to store one preconditioner matrix and compute its matrix square root and inverse on each iteration, versus two matrices for Shampoo. On the other hand, ASGO's preconditioner may not be as good an approximation to the full-matrix AdaGrad preconditioner or the empirical Fisher matrix~\citep{gupta2018shampoo,morwani2024new} as Shampoo's. However, as we shall see in the following sections, this design can 
exploit the low-rankness of gradients and the block diagonal nature of Hessians
to achieve
better convergence rates.

\begin{algorithm}[tb]
   \caption{ASGO (\textbf{A}daptive \textbf{S}tructured \textbf{G}radient \textbf{O}ptimization)}
   \label{alg:asgo}
\begin{algorithmic}[1]
   \STATE {\bfseries Input:} $W_0\in \RR^{m\times n}$, schedule $\{\eta_t\}$ batch size $M\in \mathbb{N}$, and the number of iterations $T$, \\ $\epsilon >0$, ($\epsilon$ should be small, similar to the $\epsilon$ for Adam or AdaGrad.)
   \STATE Initialize $V_{-1}=0 \in \RR^{m\times m}$
   \FOR{$t=0$ {\bfseries to} $T-1$}
   \STATE Sample mini-batch $\cB_t$ with $\Abs{\cB_t} \equiv M$ uniformly
   \STATE $G_t = \frac{1}{M} \sum_{\xi\in\cB_t} \nabla_W f(W_t;\xi)$
   \STATE $V_t = V_{t-1} + G_t G_t^\top$
   \STATE $\Lambda_t = V_t^{\frac{1}{2}} + \epsilon I_m$
   \STATE
   $W_{t+1} = W_t - \eta_t \Lambda_t^{-1}G_t$
   \ENDFOR
\end{algorithmic}
\end{algorithm}

\section{Nonsmooth Theory}\label{sec:nonsmooth}

Although DNNs are generally nonconvex globally, convexity may apply locally in some regions. Thus, convex analysis can be helpful for understanding the behavior of DNN training algorithms.
\begin{assumption}[Convexity]\label{asm:convex}
    $f(\cdot)$ is convex and $W_*$ is one of its minimizers.
\end{assumption}

\begin{theorem}[Nonsmooth convergence]\label{thm:nonsmooth}
    Under Assumption~\ref{asm:convex}, for Algorithm~\ref{alg:asgo} with $\eta_t \equiv \eta = D_{\mathrm{op}}$, it holds that
    \begin{align*}
        \frac{1}{T} \sum_{t=0}^{T-1} \EE[f(W_t)] - f(W_*) \le
        \frac{1}{T} \EE\left[ \Norm{\left( \sum_{t=0}^{T-1} G_t G_t^\top \right)^{\frac{1}{2}}}_* \right] \cdot D_{\mathrm{op}} + \frac{\epsilon D_{\mathrm{F}}^2}{D_{\mathrm{op}} T},
    \end{align*}
    where $D_{\mathrm{op}} \triangleq \max_{0\le t\le T-1}\Norm{W_t - W_*}_{\mathrm{op}}$ and $D_{\mathrm{F}} \triangleq \max_{0\le t\le T-1}\Norm{W_t - W_*}_{\mathrm{F}}$.
\end{theorem}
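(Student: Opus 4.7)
The plan is to follow the standard adaptive-gradient analysis template, lifted to the matrix setting with the Frobenius trace inner product and the single-sided preconditioner $\Lambda_t$. First, I would invoke convexity and unbiasedness of $G_t$: since $\EE[G_t \mid \mathcal{F}_t] = \nabla f(W_t)$ and $f$ is convex, $\EE[f(W_t)] - f(W_*) \le \EE\left[\langle G_t,\, W_t - W_*\rangle\right]$, so the goal reduces to bounding $\sum_{t=0}^{T-1} \EE\left[\langle G_t, D_t\rangle\right]$ with $D_t := W_t - W_*$.

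Next, I would derive the one-step Bregman-style identity for the update $W_{t+1} = W_t - \eta\,\Lambda_t^{-1} G_t$. Expanding $\|D_{t+1}\|_{\Lambda_t}^2 = \tr{D_{t+1}^\top \Lambda_t D_{t+1}}$ and using symmetry of $\Lambda_t$ yields
\[
\langle G_t, D_t \rangle \;=\; \tfrac{1}{2\eta}\bigl(\|D_t\|_{\Lambda_t}^2 - \|D_{t+1}\|_{\Lambda_t}^2\bigr) + \tfrac{\eta}{2}\,\|G_t\|_{\Lambda_t^{-1}}^2,
\]
where $\|G\|_{\Lambda^{-1}}^2 := \tr{G^\top \Lambda^{-1} G}$. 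Summing over $t$ leaves a telescoping distance sum and a gradient-energy sum to control.

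For the distance sum, the key observation is that $V_t = V_{t-1} + G_t G_t^\top \succeq V_{t-1}$, so by operator monotonicity of the matrix square root, $\Lambda_t \succeq \Lambda_{t-1}$ (with the convention $\Lambda_{-1} := \epsilon I_m$). Rearranging the telescoping so the mismatch between successive $\Lambda_t$'s appears as an increment, then bounding $\tr{(\Lambda_t - \Lambda_{t-1}) D_t D_t^\top} \le \|D_t D_t^\top\|_{\mathrm{op}} \tr{\Lambda_t - \Lambda_{t-1}} \le D_{\mathrm{op}}^2\,\tr{\Lambda_t - \Lambda_{t-1}}$ (valid because both factors are PSD), and dropping the nonnegative endpoint $\|D_T\|_{\Lambda_{T-1}}^2$, I obtain
\[
\sum_{t=0}^{T-1}\bigl(\|D_t\|_{\Lambda_t}^2 - \|D_{t+1}\|_{\Lambda_t}^2\bigr) \;\le\; \epsilon\,\|D_0\|_{\mathrm{F}}^2 + D_{\mathrm{op}}^2\,\Norm{V_{T-1}^{1/2}}_* .
\]

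The main obstacle is the gradient-energy sum. Because $\Lambda_t \succeq V_t^{1/2}$ and hence $\Lambda_t^{-1} \preceq V_t^{-1/2}$, the task reduces to the matrix analog of the AdaGrad potential lemma,
\[
\sum_{t=0}^{T-1} \tr{V_t^{-1/2} G_t G_t^\top} \;\le\; 2\,\tr{V_{T-1}^{1/2}} \;=\; 2\,\Norm{V_{T-1}^{1/2}}_* .
\]
The scalar version $a^2/\sqrt{v_t} \le 2(\sqrt{v_t}-\sqrt{v_{t-1}})$ is elementary, but the matrix lift is subtle because $V_t^{-1/2}$ and $G_t G_t^\top$ generally do not commute. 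I would invoke the operator inequality $\tr{X^{-1/2}(X-Y)} \le 2\,\tr{X^{1/2} - Y^{1/2}}$ for $X \succeq Y \succeq 0$ — a well-known building block in the full-matrix AdaGrad and Shampoo literature, provable either via the integral representation $X^{1/2} = \pi^{-1}\int_0^\infty X(X + sI)^{-1} s^{-1/2}\,ds$ combined with operator monotonicity, or by a direct eigenvalue/interlacing argument. This is the step that carries essentially all of the technical weight. Plugging both bounds into the summed one-step identity, taking expectation, using $\|D_0\|_{\mathrm{F}} \le D_{\mathrm{F}}$, and choosing $\eta = D_{\mathrm{op}}$ to balance the two contributions yields the stated rate after dividing by $T$.
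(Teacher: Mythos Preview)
Your proposal matches the paper's proof almost step for step: the same one-step identity, the same telescoping bound $\tr{(\Lambda_t-\Lambda_{t-1})D_tD_t^\top}\le D_{\mathrm{op}}^2\,\tr{\Lambda_t-\Lambda_{t-1}}$ via the duality of $\Norm{\cdot}_{\mathrm{op}}$ and $\Norm{\cdot}_*$, and the same matrix AdaGrad potential inequality for the gradient-energy sum. The one wrinkle is your reduction $\Lambda_t^{-1}\preceq V_t^{-1/2}$, which is ill-posed when $V_t$ is singular (as it typically is in early iterations); the paper avoids this by working directly with $\Lambda_t$ throughout, using $G_tG_t^\top\preceq\Lambda_t^2-\Lambda_{t-1}^2$ and then proving $\tr{(\Lambda_t^2-\Lambda_{t-1}^2)\Lambda_t^{-1}}\le 2\,\tr{\Lambda_t-\Lambda_{t-1}}$ via the elementary algebraic decomposition $\Lambda_t^2-\Lambda_{t-1}^2 = 2(\Lambda_t-\Lambda_{t-1})\Lambda_t - (\Lambda_t-\Lambda_{t-1})^2 + (\Lambda_{t-1}\Lambda_t-\Lambda_t\Lambda_{t-1})$. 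That is precisely the lemma you invoke, just applied with $X^{1/2}=\Lambda_t$ and $Y^{1/2}=\Lambda_{t-1}$ rather than $V_t^{1/2}$ and $V_{t-1}^{1/2}$, so the fix is trivial.
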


\begin{corollary}\label{cor:nonsmooth_convergence}
    If we also assume an upper bound for each stochastic sub-gradient such that $\EE\left[ G_t G_t^\top \right] \preceq Q^2$, where $Q \in \RR^{m\times m}$ is a positive definite matrix, Theorem~\ref{thm:nonsmooth} also implies
    \begin{align*}
        \frac{1}{T} \sum_{t=0}^{T-1} \EE[f(W_t)] - f(W_*) \le \cO\left( \frac{\Norm{Q}_* D_{\mathrm{op}}}{\sqrt{T}} + \frac{\epsilon D_{\mathrm{F}}^2}{D_{\mathrm{op}} T} \right) .
    \end{align*}
\end{corollary}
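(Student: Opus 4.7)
The plan is to reduce the corollary to a bound on $\EE[\,\Norm{(\sum_{t} G_t G_t^\top)^{1/2}}_*\,]$, since Theorem~\ref{thm:nonsmooth} already handles the rest. Observe that for any positive semidefinite matrix $A$, the trace norm $\Norm{A^{1/2}}_*$ coincides with $\tr{A^{1/2}}$, because the singular values of $A^{1/2}$ are its eigenvalues. So the task is to show
\[
\EE\!\left[\tr{\left(\sum_{t=0}^{T-1} G_t G_t^\top\right)^{\!1/2}}\right] \le \sqrt{T}\, \Norm{Q}_*.
\]

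The key ingredient is that the map $X \mapsto \tr{X^{1/2}}$ is concave on the positive semidefinite cone, which is a standard consequence of operator concavity of $x \mapsto x^{1/2}$ (equivalently, the spectral function $\tr{\varphi(\cdot)}$ inherits concavity from a concave scalar $\varphi$). I would first invoke Jensen's inequality in its concave form to pull the expectation inside:
\[
\EE\!\left[\tr{\left(\sum_{t=0}^{T-1} G_t G_t^\top\right)^{\!1/2}}\right] \le \tr{\left(\sum_{t=0}^{T-1} \EE[G_t G_t^\top]\right)^{\!1/2}}.
\]
Next, using the assumption $\EE[G_t G_t^\top] \preceq Q^2$ for every $t$, summing yields $\sum_{t=0}^{T-1} \EE[G_t G_t^\top] \preceq T Q^2$ in the Lowner order. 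I would then appeal to operator monotonicity of $x \mapsto x^{1/2}$ (so that $A \preceq B$ implies $A^{1/2} \preceq B^{1/2}$, hence $\tr{A^{1/2}} \le \tr{B^{1/2}}$) to conclude
\[
\tr{\left(\sum_{t=0}^{T-1} \EE[G_t G_t^\top]\right)^{\!1/2}} \le \tr{(T Q^2)^{1/2}} = \sqrt{T}\,\tr{Q} = \sqrt{T}\,\Norm{Q}_*,
\]
where the final identity uses that $Q$ is symmetric positive definite, so its trace equals the sum of its singular values.

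Plugging this into the bound of Theorem~\ref{thm:nonsmooth} and dividing by $T$ yields the desired $\cO\!\left(\Norm{Q}_* D_{\mathrm{op}}/\sqrt{T} + \epsilon D_{\mathrm{F}}^2/(D_{\mathrm{op}} T)\right)$ rate. The only subtle step is justifying concavity of $X \mapsto \tr{X^{1/2}}$ and operator monotonicity of the square root; both are classical facts about operator monotone/concave functions (Lowner--Heinz), so I would simply cite them rather than reproving them. No other obstacle is expected, as the argument is a short deterministic matrix-inequality chain after Jensen.
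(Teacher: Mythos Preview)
Your proof is correct, but it takes a different route from the paper. You invoke concavity of the spectral function $X \mapsto \tr{X^{1/2}}$ on the PSD cone to apply Jensen directly, then use operator monotonicity of the square root (the paper's Lemma~\ref{lem:matrix_operator_monotone}) on $\sum_t \EE[G_t G_t^\top] \preceq T Q^2$. The paper instead applies its Lemma~\ref{lem:inequality_bound_sum_square_root} (the bound $\Norm{G}_* \le \sqrt{\Norm{\Lambda}_* \tr{G^\top \Lambda^{-1} G}}$) with $\Lambda = Q$ and $G = (\sum_t G_t G_t^\top)^{1/2}$, which reduces the problem to bounding the \emph{scalar} quantity $\tr{\sum_t G_t G_t^\top Q^{-1}}$; only then do they use concavity, but merely of the scalar square root, and finish via $\tr{\EE[G_t G_t^\top] Q^{-1}} \le \tr{Q^2 Q^{-1}} = \Norm{Q}_*$. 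Your argument is more direct and avoids Lemma~\ref{lem:inequality_bound_sum_square_root} entirely, at the cost of citing the slightly less elementary fact that $\tr{(\cdot)^{1/2}}$ is matrix-concave (which does follow from operator concavity of $x\mapsto x^{1/2}$, as you note). The paper's route has the advantage that Lemma~\ref{lem:inequality_bound_sum_square_root} is reused in the smooth analysis, so it costs them nothing extra here; your route is cleaner if one is proving the corollary in isolation.
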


\begin{remark}
    { Note that we treat $D_{\mathrm{op}}$ as a constant here, but it may increase as the iteration number $T$ increases. This can be addressed, for example, by invoking a projection onto a bounded convex set $\cW$ with respect to the norm $\Norm{\cdot}_{\Lambda_t}$ in each iteration, as in AdaGrad~\citep{duchi2011adaptive}. $D_{\mathrm{op}}$ would then be bounded by the spectral norm of $\cW$. However, since this projection is rarely used in training DNNs, we follow \citet{gupta2018shampoo} and omit it in Algorithm \ref{alg:asgo}. We refer interested readers to Appendix~\ref{appendix:projection_operation} for
    a discussion of how one can incorporate the projection operation and obtain $D_{\mathrm{op}}$ and $D_{\mathrm{F}}$ independent of $T$. More importantly, this theoretical bound depends on the trace norm of gradients and the spectral norm of weights, showing that the algorithm can make use of the low-rank property of gradients.
}
\end{remark}

One can easily check that this convergence rate for convex nonsmooth problems is $\cO(1/\sqrt{T})$, the same as SGD~\citep{zinkevich2003online} (see below) and AdaGrad~\citep{duchi2011adaptive}.

\textbf{Comparison with SGD.} The convergence rate for SGD under the assumptions of Corollary 2 is:
    \begin{align*}
        \text{SGD:} \quad \frac{1}{T} \sum_{t=0}^{T-1} \EE[f(W_t)] - f(W_*) \le \cO\left( \frac{D_{\mathrm{F}} \Norm{Q}_{\mathrm{F}}}{\sqrt{T}} \right) ,
    \end{align*}
    where $D_{\mathrm{F}}$ and $\Norm{Q}_{\mathrm{F}}$ are the Frobenius norm upper bounds for the weights and gradients, respectively. By comparing this bound with Corollary~\ref{cor:nonsmooth_convergence}, we have
    \begin{itemize}[leftmargin=*]
        \item $\Norm{Q}_{\mathrm{F}} \le \Norm{Q}_* \le \sqrt{r_G} \Norm{Q}_{\mathrm{F}} $, where $r_G$ is the rank of $Q$. Thus when 
        the
        $G_t$ are low-rank, or have very imbalanced singular values, $\Norm{Q}_*$ can be close to $\Norm{Q}_{\mathrm{F}}$;
        \item $D_{\mathrm{F}}/\sqrt{r_D} \le D_{\mathrm{op}} \le D_{\mathrm{F}}$, where $r_D = \text{max rank of} (W_t - W_*)$. Thus, when 
        the
        $W_t - W_*$ are relatively high-rank or have lots of singular values of a similar scale, $D_{\mathrm{op}}$ can be much smaller than $D_{\mathrm{F}}$.
    \end{itemize}
    Therefore, ASGO should work well when $G_t$ are low-rank and $W_t - W_*$ are relatively high-rank.
    
\textbf{Intuition about ASGO in practice.}
We argue that 
$G_t$ is low-rank 
and 
$W_t - W_*$ is often
relatively high-rank 
in many practical tasks, revealing the potential of ASGO in applications. As we have discussed in Section~\ref{sec:related_work}, gradients are commonly low-rank in DNNs, as verified by \citet{zhao2021zero,yang2023spectral,cosson2023low}. Meanwhile, given the huge success of LoRA~\citep{hu2022lora} in fine-tuning foundation models, which 
employs
low-rank total updates of
$W$, there may seem to be a conflict to assume that $W_t - W_*$ has a high rank. However, it has been observed that $W_0 - W_*$ should be relatively high-rank to obtain a better result, at least in pretraining and some complex fine-tuning tasks for large foundation models~\citep{lialin2023relora,jiang2024mora,huang2025hira}. Further exploration of the connection between the rank of weight updates and the performance of algorithms is an interesting topic for future research.

\textbf{Comparison with Full-Matrix AdaGrad and Shampoo.} Shampoo
and the full-matrix AdaGrad
achieve the following convergence rates under the same settings as Theorem~\ref{thm:nonsmooth}.
    \begin{align*}
    &\text{Full-Matrix AdaGrad:} \quad \cO\left( D_{\mathrm{F}} \sum_{j=1}^{m} \sum_{i=1}^n \sqrt{\sum_{t=0}^{T-1} [G_{t}]_{i,j}^2} \right) \\
    &\text{Shampoo:} \quad \cO\left( \sqrt{r} D_{\mathrm{F}} \cdot \tr{\left( \sum_{t=0}^{T-1} G_t G_t^\top \right)^{\frac{1}{4}}} \cdot \tr{\left( \sum_{t=0}^{T-1} G_t^{\top} G_t \right)^{\frac{1}{4}}} \right) .
    \end{align*}
    We can check that Theorem~\ref{thm:nonsmooth} indicates a convergence speed that is at least $D_{\mathrm{F}}/D_{\mathrm{op}}$ times faster than full-matrix AdaGrad and $\sqrt{r_G} D_{\mathrm{F}}/D_{\mathrm{op}}$ times faster than Shampoo; (see Appendix~\ref{appendix:proof_nonsmooth} for proofs). This also provides theoretical evidence that single-side preconditioning may be better at exploiting low-rankness of gradients, yielding a faster convergence speed compared to Shampoo-like preconditioning.

\begin{remark}
    { Proofs of all of the results in this section are given in Appendix~\ref{appendix:proof_nonsmooth}. We keep the matrix structure of $W_t$ and $G_t$ throughout our analysis, in contrast to standard analyses of the convergence of AdaGrad and Shampoo, which are based on vectorizations of $W_t$ and $G_t$. This is important for proving that ASGO can exploit the structured properties.
    We note that for simplicity in Algorithm \ref{alg:asgo} above, the preconditioner operates on left-side of $G_t$. However, in a practical implementation of ASGO,
    using the right-sided preconditioner is preferable unless the number of columns of $G_t$ is much greater than the number of rows. See the discussion toward the end of Section~\ref{sec:smooth} and in Section~\ref{sec:ablation_side} and Algorithm 2 in Appendix B.
    }

\end{remark}

\section{Smooth Theory}\label{sec:smooth}
It is also important to study the performance of Algorithm~\ref{alg:asgo} in smooth settings, as many real training tasks have been widely observed to be smooth, at least locally. Also, only in smooth settings can we properly describe the importance of batch size.
\begin{assumption}[Smoothness]\label{asm:smooth_block}
    $f$ is $1$-smooth with respect to $\Norm{\cdot}_L$, where $L \in \RR^{m\times m}$ is a symmetric positive definite matrix and for any $X \in \RR^{m\times n}$,
    \begin{align*}
        \Norm{X}_L^2 \triangleq \tr{X^\top L X} .
    \end{align*}
\end{assumption}

If $X = [x_1,\dots x_n]$, where each $x_i \in \RR^{m}$, and we vectorize $X$, i.e., $\mathbf{x} \equiv \mathrm{vec}(X) = (x_1^T, \ldots,x_n^T)^T$, and form the block diagonal matrix $\mathbf{L \equiv }\mathrm{diag}[L,\dots,L]$, we obtain that
$
    \tr{X^\top L X} = \sum_{i=1}^n x_i^\top L x_i = \mathbf{x}^T \mathbf{L} \mathbf{x}.
$
This means that Assumption~\ref{asm:smooth_block} is equivalent to the existence of a symmetric matrix $L \in \RR^{m\times m}, L\succ 0$ such that for any $w \in \RR^{mn}$,
$
-\mathbf{L}\preceq \nabla^2f_v(\mathbf{w}) \preceq
\mathbf{L},
$
where $f_v(\mathbf{w}) = f(W)$, $W = [w_1,\dots,w_n] \in \RR^{m\times n}$ and $\mathbf{w} \equiv \mathrm{vec}(W) = [w_1^T,\dots,w_n^T]^T \in \RR^{mn}$. Hence, it is closely related to the block-wise diagonal structure of the Hessian as observed by many researchers with each block corresponding to a column of $W$;
(see Figure 3 in Appendix \ref{appendix:additional_related_work}). Also note that Assumption~\ref{asm:smooth_block} implies the standard smoothness assumption with respect to the Frobenius norm by $\Norm{\nabla f_v(\bw)}_{\mathrm{op}} \le \Norm{L}_{\mathrm{op}}$. This block-wise diagonal smoothness is an extension of the standard smoothness, which is related to but different from the diagonal anisotropic smoothness employed for analyzing sign-based and adaptive gradient methods~\citep{bernstein2018signsgd,liu2024adagrad}.
\begin{assumption}[Variance]\label{asm:variance_matrix}
    Let $N_t \triangleq \nabla f(W_t;\xi) - \nabla f(W_t) \in \RR^{m\times n}$ be the stochastic gradient noise. We assume that $\EE[N_t] = 0$ and there exists a symmetric positive definite matrix $V$ such that
    \begin{align*}
        \EE\left[ N_t N_t^\top \right] \preceq V^2 .
    \end{align*}
\end{assumption}
One can check that Assumption~\ref{asm:variance_matrix} implies the standard variance bound $\EE[\Norm{N_t}_{\mathrm{F}}^2] \le \Norm{V}_{\mathrm{F}}^2$. The assumption shares some similarity with the coordinate-wise variance bounds in \citet{bernstein2018signsgd,crawshaw2022robustness,liu2024adagrad}, in the sense that it allows a more fine-grained analysis. This matrix-form variance upper bound may better describe the real case since it takes the structure of the noise into account, which is relevant to matrix rank and other 
structural
properties.

\begin{theorem}[Smooth Convergence]\label{thm:convex_smooth}
    Under Assumptions~\ref{asm:convex}, \ref{asm:smooth_block} and \ref{asm:variance_matrix}, for Algorithm~\ref{alg:asgo} with $\eta_t \equiv \eta = D_{\mathrm{op}}$ and a batch size of $M$, it holds that
    \begin{align*}
        \frac{1}{T} \sum_{t=0}^{T-1} \EE[f(W_t)] - f(W_*) \le \frac{4D_{\mathrm{op}}^2 \Norm{L}_*}{T} + \frac{2\sqrt{2} D_{\mathrm{op}} \Norm{V}_*}{\sqrt{MT}} + \frac{2\epsilon D_{\mathrm{F}}^2}{D_{\mathrm{op}} T} ,
    \end{align*}
    where $D_{\mathrm{op}} \triangleq \max_{0\le t\le T-1}\Norm{W_t - W_*}_{\mathrm{op}}$, and $D_{\mathrm{F}} \triangleq \max_{0\le t\le T-1}\Norm{W_t - W_*}_{\mathrm{F}}$.
\end{theorem}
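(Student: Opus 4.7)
The plan is to bootstrap Theorem~\ref{thm:nonsmooth}, which already supplies the $\epsilon D_{\mathrm{F}}^2/(D_{\mathrm{op}}T)$ term and reduces the problem to controlling $\EE[\tr(\mathcal{G}_T^{1/2})]$ where $\mathcal{G}_T\triangleq\sum_{t=0}^{T-1}G_tG_t^\top$. Writing $S\triangleq\sum_{t=0}^{T-1}\EE[f(W_t)-f(W_*)]$, the target is to turn this gradient quantity into $\sqrt{\Norm{L}_*\,S}$ (via smoothness) plus a $\sqrt{T/M}\,\Norm{V}_*$ mini-batch noise contribution (via the matrix variance bound), then solve the resulting self-bounding inequality $S\lesssim D_{\mathrm{op}}\sqrt{\Norm{L}_*\,S}+D_{\mathrm{op}}\Norm{V}_*\sqrt{T/M}+\epsilon D_{\mathrm{F}}^2/D_{\mathrm{op}}$ for $S/T$.

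First I would pull the outer expectation past the matrix square root. Operator concavity of $t\mapsto t^{1/2}$ implies concavity of $X\mapsto\tr(X^{1/2})$ on the PSD cone, so Jensen gives $\EE[\tr(\mathcal{G}_T^{1/2})]\le\tr(\EE[\mathcal{G}_T]^{1/2})$. Decomposing $G_t=\nabla f(W_t)+N_t$, the conditional expectation of the cross terms vanishes and Assumption~\ref{asm:variance_matrix} with batch size $M$ gives $\EE[N_tN_t^\top\mid\mathcal{F}_t]\preceq V^2/M$, so $\EE[\mathcal{G}_T]\preceq\sum_t\EE[H_t]+(T/M)V^2$ with $H_t\triangleq\nabla f(W_t)\nabla f(W_t)^\top$. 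Operator monotonicity of the square-root preserves this Loewner inequality under $\tr(\cdot^{1/2})$, and Rotfeld/Ando subadditivity $\tr((A+B)^{1/2})\le\tr(A^{1/2})+\tr(B^{1/2})$ peels off the variance piece as $\sqrt{T/M}\,\Norm{V}_*$, leaving $\tr\bigl((\sum_t\EE[H_t])^{1/2}\bigr)$ for the deterministic part.

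For that remaining term I would insert $L$ via a matrix Cauchy--Schwarz, $\tr(A^{1/2})=\langle L^{-1/2}A^{1/2},L^{1/2}\rangle_{\mathrm{F}}\le\sqrt{\tr(L)\,\tr(L^{-1}A)}=\sqrt{\Norm{L}_*\,\tr(L^{-1}A)}$, which with $A=\sum_t\EE[H_t]$ produces $\EE\sum_t\Norm{\nabla f(W_t)}_{L^{-1}}^2$ inside the root. Assumption~\ref{asm:smooth_block} ($1$-smoothness in the $\Norm{\cdot}_L$ geometry), together with the minimality of $W_*$, then gives the standard descent estimate $\Norm{\nabla f(W_t)}_{L^{-1}}^2\le 2(f(W_t)-f(W_*))$ by taking one ideal step in the $L$-geometry and comparing with $f(W_*)$, bounding the remaining term by $\sqrt{2\Norm{L}_*\,S}$. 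Plugging back into Theorem~\ref{thm:nonsmooth}, multiplying by $T$, and splitting $\sqrt{2}\,D_{\mathrm{op}}\sqrt{\Norm{L}_*\,S}\le\tfrac{1}{2}S+c\,D_{\mathrm{op}}^2\Norm{L}_*$ by AM--GM absorbs $S$ on the right and yields the three advertised terms up to tunable constants. I expect the main obstacle to be the first step: justifying that the PSD operations (concavity of $\tr(\cdot^{1/2})$, operator monotonicity, Rotfeld subadditivity) all apply and combine cleanly so that the mini-batch Loewner bound survives the nonlinearity; the smoothness--to--function-value conversion and the closing AM--GM are then essentially routine.
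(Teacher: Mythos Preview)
Your proposal is correct and follows the same overall architecture as the paper's proof: start from Theorem~\ref{thm:nonsmooth}, separate bias and variance, control the bias piece via the matrix Cauchy--Schwarz inequality $\tr(A^{1/2})\le\sqrt{\Norm{L}_*\tr(L^{-1}A)}$ (this is exactly the paper's Lemma~\ref{lem:inequality_bound_sum_square_root}) together with the smoothness estimate $\Norm{\nabla f_t}_{L^{-1}}^2\le 2(f(W_t)-f(W_*))$, control the variance via Assumption~\ref{asm:variance_matrix} and Lemma~\ref{lem:varaince_reduce_batch}, and close with a self-bounding quadratic argument. The PSD manipulations you flag as the ``main obstacle'' (operator concavity of $\tr(\cdot^{1/2})$, operator monotonicity of $t\mapsto t^{1/2}$, and trace subadditivity $\tr((A+B)^{1/2})\le\tr(A^{1/2})+\tr(B^{1/2})$, which is the paper's Lemma~\ref{lem:square_trace_sum_2}) are all standard and combine without issue.

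The one genuine difference is how the bias--variance split is executed. The paper works pointwise, using $(A+B)(A+B)^\top\preceq 2AA^\top+2BB^\top$ to bound $G_tG_t^\top\preceq 2\nabla f_t\nabla f_t^\top+2N_tN_t^\top$ before touching the square root (Lemma~\ref{lem:convex_smooth_seperate_bias_variance}), keeping the expectation outside; it then applies Lemma~\ref{lem:inequality_bound_sum_square_root} to each piece separately and only afterwards pushes the expectation through the \emph{scalar} $\sqrt{\cdot}$ by Jensen. You instead push the expectation through the \emph{matrix} $\tr(\cdot^{1/2})$ first via operator concavity, after which the cross terms $\nabla f_t N_t^\top$ vanish exactly in expectation and the factor of $2$ never appears. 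Your route is a touch cleaner---the variance piece becomes $\sqrt{T/M}\,\Norm{V}_*$ directly without a second application of Lemma~\ref{lem:inequality_bound_sum_square_root}---and actually yields slightly sharper constants than the stated theorem; the paper's route has the minor advantage of postponing all probabilistic reasoning to the last step.
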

As discussed in Section~\ref{sec:nonsmooth}, $D_{\mathrm{op}}$ and $D_{\mathrm{F}}$ can be bounded if we add a projection step in the update. The convergence rate specified in Theorem~\ref{thm:convex_smooth} is similar to the rate in Corollary~\ref{cor:nonsmooth_convergence} if the batch size $M$ is small, when the $\cO(1/\sqrt{MT})$ term dominates the rate, and thus shares the same properties as we discussed in Section~\ref{sec:nonsmooth}. This means that ASGO can benefit if the stochastic gradient noise $V$ is generally low-rank. Furthermore, when the batch size $M$ is large so that the $\cO(\Norm{L}_*/T)$ term contributes significantly to the bound, Theorem~\ref{thm:convex_smooth} implies more, which we now discuss.

\textbf{Comparison with SGD.} The convergence rate of SGD~\citep{garrigos2023handbook} is:
    \begin{align*}
        \text{SGD:} \quad \frac{1}{T} \sum_{t=0}^{T-1} \EE[f(W_t)] - f(W_*) \le \cO\left( \frac{D_{\mathrm{F}}^2 \Norm{L}_{\mathrm{op}}}{T} + \frac{D_{\mathrm{F}} \Norm{V}_{\mathrm{F}}}{\sqrt{MT}}\right) ,
    \end{align*}
    where $D_{\mathrm{F}}$ is defined in Theorem~\ref{thm:convex_smooth}. Since the comparison between the $\cO(1/\sqrt{MT})$ term is generally consistent with the discussion in Section~\ref{sec:nonsmooth}, we further compare the $\cO(1/T)$ term here to provide some more intuition.
    Specifically, we have
    \begin{itemize}[leftmargin=*]
        \item $\Norm{L}_{\mathrm{op}} \le \Norm{L}_* \le r_L \Norm{L}_{\mathrm{op}} $, where $r_L$ is the rank of $L$. Thus when $L$ is low-rank, or has very imbalanced singular values, $\Norm{L}_*$ can be close to $\Norm{L}_{\mathrm{op}}$;
        \item $D_{\mathrm{F}}/\sqrt{r_D} \le D_{\mathrm{op}} \le D_{\mathrm{F}}$, where $r_D =$ max rank of $(W_t - W_*)$. Thus when $W_t - W_*$ are of relatively high-rank or have lots of singular values of a similar scale, $D_{\mathrm{op}}$ can be much smaller than $D_{\mathrm{F}}$.
    \end{itemize}
    Therefore, we can see that in general, ASGO should work well when the Hessian can be well approximated by a block diagonal matrix with a low-rank $L$ for each block and $W_t - W_*$ are of relatively high-rank. Note that Hessians have been found to be low-rank in DNNs, especially after some steps of training~\citep{sagun2016eigenvalues,sagun2017empirical,wu2020dissecting}.
    
    \textbf{Block diagonal structure of DNNs.} It has been widely observed that the Hessian of MLPs and other DNNs, are approximately block-wise diagonal~\citep{collobert2004large,zhang2024transformers,zhang2024adam,bahamou2022miniblockfishermethoddeep}, as shown in Figure~\ref{fig:block_diagonal} in Appendix \ref{appendix:additional_related_work}. This block-wise diagonal structure naturally arises from the structure of MLPs, where all the parameters associated with a particular neuron (i.e., all elements of 
    a row of $W$) are more closely related and whose corresponding sub-block of the Hessian matrix is denser and has elements that are larger in absolute value
    than a set of parameters that do not share such an association. Intuitively, the elements of the rows of $W$ seem to be more likely to be closely related 
    than those of
    the columns of $W$. Based on Theorem~\ref{thm:convex_smooth}, our algorithm should perform well in this block Hessian setting, showing great potential in real applications.
    
    \textbf{Intuition on single-side preconditioners.} Since ASGO only uses a single-side preconditioner, it is important to determine on which side they should be applied. As demonstrated in Figure~\ref{fig:block_diagonal}, a block in the Hessian commonly corresponds to all input weights into a neuron in the next layer, i.e., as a row of the weight matrix $W$.     
    Therefore, based on the block-wise smoothness convergence results, we should put the preconditioner $\Lambda_t$ on the right-hand side of $G_t$, i.e., 
    computing 
    $\Lambda_t = V_t^{1/2}$, where $V_t = V_{t-1} + G_t^\top G_t \in \RR^{n\times n}$. In this way, we can better exploit the superior adaptability of block diagonal Hessians with correct block partitions (each row of $W$ as a block). One may also refer to Algorithm~\ref{alg:practical_asgo} for more details of our
    implementation. We also provide empirical results that show that right-sided 
    outperforms left-sided preconditioning in Section  \ref{sec:ablation_side}.

    

\begin{remark}
    The proof of Theorem~\ref{thm:convex_smooth} is given in Appendix~\ref{appendix:proof_smooth}. The analysis is similar in its general outline to the smooth analysis for AdaGrad~\citep{levy2018online,liu2024adagrad}, but it is more complex because of the involvement of matrix operations in ASGO.
\end{remark}

\section{Further Discussions on ASGO}\label{sec:discussions}

\paragraph{Connection with Muon.}
Muon~\citep{jordan2024muon} can be interpreted as a standard steepest descent algorithm utilizing the spectral norm~\citep{bernstein2024old} with momentum. If we ignore the incorporated momentum, Muon computes
\begin{align}\label{eq:muon_steepest_descent}
    W_{t+1} =& \underset{W \in \RR^{m\times n}}{\text{argmin}} \left\{ \dotprod{G_t}{W - W_t} + \frac{1}{2\eta_t} \Norm{W - W_t}_{\mathrm{op}}^2 \right\} ,
\end{align}
preserving the matrix structure of the problem and naturally exploiting the 
structural
properties because of the involvement of spectral norm in the steepest descent framework. This aligns with ASGO's exploitation of 
the structured properties
of the gradient. 
Moreover, if we ignore the momentum in both the gradient and the preconditioner, we can see that ASGO is equivalent to Muon.\footnote{A proof of this equivalence can be found in Appendix \ref{appendix:proof_discussion_section}.} Interestingly, this equivalence between ASGO and Muon in both theoretical basis and algorithm design is analogous to that between diagonal AdaGrad and SignSGD~\citep{bernstein2018signsgd,kunstner2023noise}. In this sense, we may interpret ASGO and Muon as AdaGrad and SignSGD for structured gradients, respectively. Note that Shampoo also admits such a relation with Muon in algorithmic design. However, as discussed in Section~\ref{sec:nonsmooth}, Shampoo has a worse convergence rate and thus fails to benefit from the structured properties as does ASGO. To some extent, this means that ASGO may be a more appropriate approach than Shampoo as such an analog to diagonal AdaGrad.

From this intuition, we may also conjecture what the nonconvex convergence rate of ASGO is based on what it is for Muon. Hence, we prove in Appendix~\ref{appendix:proof_discussion_section} that this rate for Muon is:
\begin{align}\label{eq:muon_convergence_rate}
    \min_{0\le t\le T-1} \Norm{\nabla f(W_t)}_*^2 \le \cO\left( \frac{\Norm{L}_* (f(W_0) - f^*) }{T} \right),
\end{align}
where we assume that $f^* \triangleq \inf f(W) > -\infty$. Since diagonal AdaGrad and SignSGD have the same convergence rate in nonconvex settings up to logarithmic factors~\citep{bernstein2018signsgd,sun2023momentum,liu2024adagrad}, we expect that ASGO has a nonconvex convergence rate comparable to \eqref{eq:muon_convergence_rate} obtained by Muon up to logarithmic factors. It is an interesting future topic to prove this conjecture and further explore the nonconvex behavior of ASGO theoretically. Also, we note that as an intuitively smoother version of Muon, ASGO has good convergence properties in nonsmooth settings as shown in Section~\ref{sec:nonsmooth}, where Muon, like SignSGD~\citep{bernstein2018signsgd}, may fail to converge~\citep{karimireddy2019error}.

\begin{remark}
    In smooth settings, nonconvex convergence analysis of Muon has been presented in \citet{li2025note}. 
    These convergence results are established under a more general setting, involving gradient noise and momentum following the analysis in \citet{cutkosky2020momentum}. However, if we only look at the deterministic case, their result is worse than \eqref{eq:muon_convergence_rate} because of the explicit dependence on the dimension $n$. The key here is that the standard smoothness condition with respect to the Frobenius norm is not a good fit for analyzing structured gradient algorithms like Muon or ASGO. Using Assumption~\ref{asm:smooth_block}, we obtain better convergence results for Muon in \eqref{eq:muon_convergence_rate}.
\end{remark}

\paragraph{Practical Implementations of ASGO.} We also provide Algorithm~\ref{alg:practical_asgo}, a practical implementation of ASGO, together with a memory-efficient diagonal variant of ASGO, named DASGO in Algorithm \ref{alg:practical_dasgo}. 
DASGO is basically a light-weight optimizer with the preconditioner $\Lambda_t$ of ASGO being diagonalized, which is efficient in both memory and computations. 
To make ASGO effective and efficient in practice, a critical aspect is the efficient computation of the preconditioner's inverse square root, $V_t^{-1/2}$. Standard methods such as Singular Value Decomposition (SVD) are prohibitively expensive for large-scale applications. To ensure our optimizer is practical, we adopt the Newton-Schultz (NS) iterative method as summarized in Algorithm~\ref{alg:sqrt-inverse-newton-schulz}, which substantially reduces the wall-clock time per step. The applicability of this approach stems from the deep connection between the matrix inverse square root and the matrix sign function. As established in Thm 5.2~\citep{high:FM}, computing $A^{-1/2}$ is equivalent to extracting a sub-block from the matrix sign function of an augmented matrix:
\[
\operatorname{msign}\left(\left[\begin{array}{cc}
0 & A \\
I & 0
\end{array}\right]\right)=\left[\begin{array}{cc}
0 & A^{1/2} \\
A^{-1/2} & 0
\end{array}\right].
\]
This equivalence implies that the computational complexity of our inverse square root step is on the same order as the matrix operation in the Muon optimizer. It is worth noting, however, that the NS iteration in ASGO is applied to this $2 \times 2$ block matrix, leading to a slightly heavier per-iteration cost compared to Muon. Finally, we highlight that our framework is modular. Recent advances~\citep{amsel2025polar,grishina2025accelerating} compute the matrix sign function more efficiently than classical Newton-Schultz, for instance, by optimizing the iteration's hyperparameters.
This modularity allows such methods to be readily incorporated.
In our empirical implementation, we validated this by experimenting with both the classical Newton-Schultz and PolarExpress~\citep{amsel2025polar} methods.
Please refer to Appendix \ref{appendix:discussions} for more details of other modifications in our practical implementation.

\section{Empirical Results}\label{sec:empiricalresults}
We empirically evaluated the effectiveness of ASGO (Algorithm~\ref{alg:practical_asgo}) and DASGO (Algorithm~\ref{alg:practical_dasgo}) on pretraining and finetuning tasks for Large Language models (LLMs). We compared our methods against the established optimizers, 
AdamW~\citep{kingma2014adam,loshchilov2017decoupled}, Shampoo~\citep{gupta2018shampoo}, and Muon~\citep{jordan2024muon}. Several important implementational details should be noted for fair comparison. First, since Muon is designed to operate exclusively on matrix parameters, we followed \citet{jordan2024muon} and applied AdamW update rules to all 1D parameters within the Muon optimizer to ensure that it can handle the complete model. All experiments were conducted using NVIDIA V100s SMX2 and NVIDIA GH200 GPUs. Specifically, for the larger-scale pretraining of GPT2, we utilized a configuration of four GH200 GPUs, while other experiments were performed on a single V100 GPU.

\subsection{Pretraining GPT2}\label{sec:pretrainGPT2}
To further evaluate the efficacy of ASGO and DASGO, we extended our investigation to larger-scale pretraining tasks.
We adopted the configuration of the GPT2 model as described by \citep{Karpathy2022},
comprised of 12 Transformer blocks, each with a hidden dimension of 768. All 1D parameters like bias and layer normaliztion and  the embedding layer were trained by AdamW.
There are 12 projection layers with dimensions 768$\times$2304 in the Transformer blocks.
For each of these layers alone, Shampoo needs to store two preconditioner matrices of size 768$\times$768 and size 2304$\times$230.
In contrast, ASGO requires storing only one preconditioner matrix of size 768$\times$768.
This substantial difference highlights ASGO's advantages in memory requirements compared with Shampoo.
We trained the model for 2400 steps using a batch size of 64, 4 H100 GPUs, a sequence length of 512, and 8 gradient accumulation steps.
This setup corresponds to a total token budget of: $512 \times 64 \times 4 \times 8 \times 2400 \approx 2.5$ Billion tokens. 
This budget represents approximately 20 tokens per parameter as suggested by the Chinchilla scaling laws~\citep{hoffmann2022training}. We used both the PolarExpress (PE) and Newton-Schultz (NS) algorithms to compute the inverse square root of $V_t$ in  ASGO.  The parameters that we used in these algorithms can be found in Section \ref{sec:HyperparamTuning}.
To ensure a fair comparison, we carefully tuned the learning rates and $\beta_2$ values (where applicable) for all optimizers. (Tuning details are provided in Section \ref{sec:HyperparamTuning}.)
Furthermore, we employed a learning rate schedule consisting of 240 linear warm-up steps followed by a cosine decay, which is a common practice for training LLMs.
Figure \ref{fig:gpt2_trainloss} and Table \ref{table:pretrain_GPT2_loss} present a comparison of the training and validation loss achieved by ASGO and DASGO against Shampoo, Muon and AdamW.
\begin{table}[h!]
    \caption{Pretraining GPT2 Train loss and Validation Loss}
    \label{table:pretrain_GPT2_loss}
    \centering
    \begin{tabular}{lcc}
    \toprule
    \textbf{Optimizer} & \textbf{Training Loss} & \textbf{Validation Loss} \\
    \midrule
    AdamW & 3.442 & 3.455 \\
    Muon  & 3.332 & 3.342 \\
    Shampoo & 3.405 & 3.417 \\
    \textbf{DASGO (Ours)} & 3.497 & 3.509 \\
    \textbf{ASGO-NewtonSchultz (Ours)} & 3.335 & 3.347 \\
    \textbf{ASGO-PolarExpress (Ours)} & 3.332 & 3.342 \\
    \bottomrule
    \end{tabular}

    \vspace{-0.1in}
\end{table}	

We now present the performance evaluation of ASGO against several baselines on the GPT-2 pretraining task. The final training and validation losses after 2400 steps are summarized in Table \ref{table:pretrain_GPT2_loss}, and the full training loss dynamics are depicted in Figure \ref{fig:gpt2_trainloss}. 
\begin{wrapfigure}{r}{0.4\textwidth}
    \centering
    \vspace{-0.4cm}
    \includegraphics[width=0.8\linewidth]{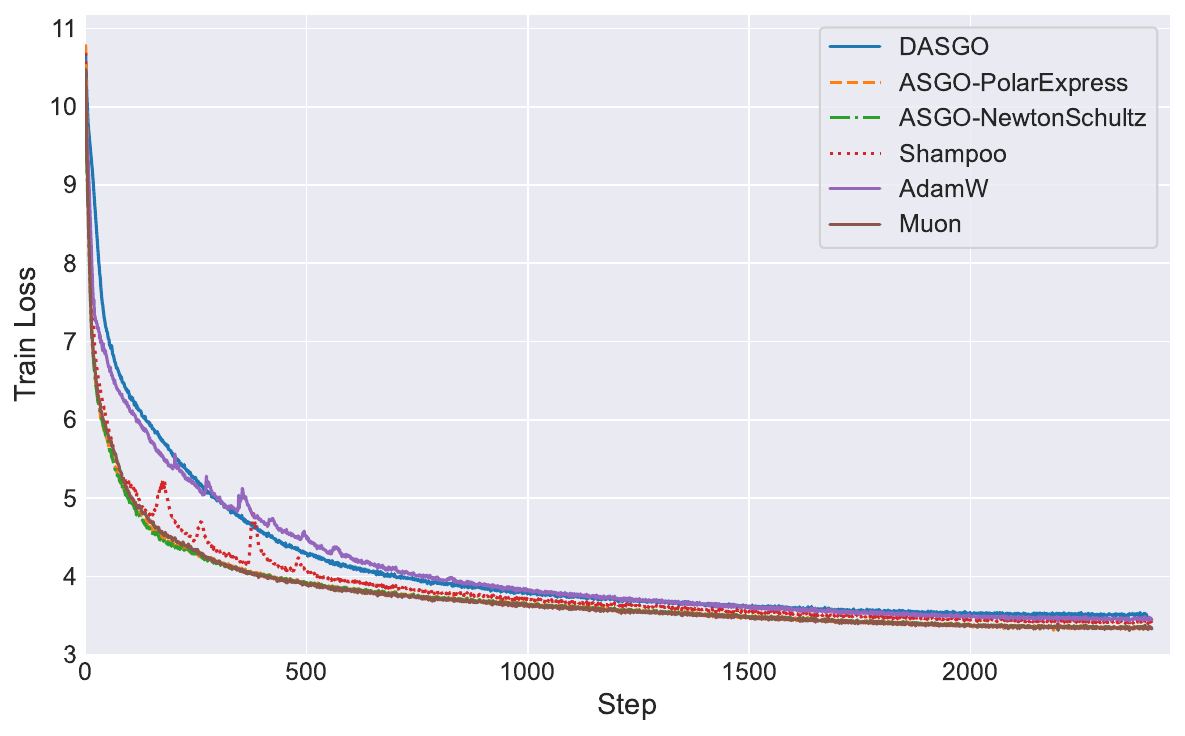}
    \caption{Pretraining GPT-2 Train Loss.}
    \label{fig:gpt2_trainloss}
    \vspace{-0.3cm}
\end{wrapfigure}
Although the results obtained using ASGO-PE and ASGO-NS are similar, ASGO-PE performs slightly better than ASGO-NS. These results clearly demonstrate that ASGO achieves state-of-the-art performance, matching the strongest baseline while significantly outperforming AdamW and Shampoo. Focusing on the final validation loss in Table \ref{table:pretrain_GPT2_loss}, ASGO-PolarExpress achieves a validation loss of 3.342.
This result is on par with the Muon optimizer (3.342). More importantly, ASGO's performance is substantially better than both the standard AdamW (3.455) and Shampoo (3.417).

The training curves in Figure \ref{fig:gpt2_trainloss} provide deeper insights into the training dynamics. ASGO and Muon exhibit nearly identical training trajectories; both converge faster than AdamW. In contrast, while Shampoo displays loss spikes during the training process. This phenomenon suggests potential numerical instability even with preconditioner update frequency 1, a behavior not observed in ASGO. 

These findings strongly support the efficacy of ASGO. It not only matches the final performance and rapid convergence of the highly-tuned Muon optimizer but also remedies the training instabilities observed in Shampoo.

However, DASGO (3.509) 
underperformed 
AdamW, Muon, Shampoo and ASGO in the GPT2 pretraining task. This performance gap was also observed in the pretraining of the smaller NanoGPT model. A primary reason for the difference between DASGO and ASGO likely stems from its diagonal preconditioning. By only retaining and using only the diagonal elements of the $GG^T$, DASGO essentially disregards the inter-dependencies between different parameter gradients from neurons within a layer, moving away from a true matrix-based adaptive approach towards a per-parameter scaling akin to vector-based methods. This highlights the value of preserving non-diagonal elements in the preconditioner matrices, particularly for capturing parameter interactions in attention-based architectures. Nevertheless, DASGO's significantly reduced memory footprint makes it a compelling option for resource-limited settings where computational efficiency is paramount.

\subsubsection{Wall time comparison for pretraining GPT2}

We now compare the Wall-Clock Times (WCTs) per training step for AdamW, Muon and ASGO, using the exact experimental setup from the GPT-2 pretraining task (Section \ref{sec:pretrainGPT2}).
    \begin{table}[h!]
    \centering
    \caption{Wall-clock time per training step, difference compared with AdamW update.}
    \label{table:perstepWCT}
    \begin{tabular}{lcc}
    \toprule
    \textbf{Optimizer} & \textbf{Single Train Step WCT} & \textbf{Time Difference} \\
    \midrule
    AdamW & 5.451 sec & 0 sec \\
    Muon  & 5.519 sec & 0.068 sec \\
    \textbf{ASGO (Ours)} & 5.947 sec & 0.468 sec  \\
    \bottomrule
    \end{tabular}
    \end{table}\\
For this analysis, we used 10 iterations of the NS algorithm for ASGO instead of 5 iterations used for Muon's update, which are the settings required to achieve the optimal convergence performance reported previously. The measured WCT for a single training step (including forward, backward, gradient accumulation passes, and optimizer steps) is shown in Table \ref{table:perstepWCT}. Although computing the inverse square root of $V_t$, in ASGO's optimizer step is more expensive than in AdamW, the forward and backward passes remain the dominant computational bottleneck.
As shown in Table \ref{table:perstepWCT}, a full training step with ASGO costs only 0.468 seconds more than the AdamW update. This overhead (8.6\% increase over AdamW) is acceptable, and consequently, the total end-to-end training time is only marginally affected.

These tables show that although AdamW is computationally cheapest at the optimizer level, the total training times for all three methods are very similar. This demonstrates that the total training time may not be a main concern for ASGO. Note that ASGO achieves the smallest training loss while AdamW achieves the largest loss. 

\subsubsection{Ablation Study on the preconditioning side:\label{sec:ablation_side}}

In Section \ref{sec:smooth}, we theoretically motivated our choice of applying the single-sided preconditioner to the right side of the gradient (or its momentum), based on block-wise smoothness convergence results. To empirically validate this crucial design choice, we conducted an ablation study comparing the performance of ASGO with left-sided versus right-sided preconditioning.
The experimental setup is identical to that used for the GPT-2 pretraining in Section \ref{sec:pretrainGPT2}. As presented in Table~\ref{table:precondside_comparison}, the results demonstrate that right-sided preconditioning consistently and significantly outperforms its left-sided counterpart across all tested learning rates (\{$5e-4, 1e-3, 5e-3, 1e-2$\}).

This finding empirically confirms our theoretical intuition and highlights the clearly asymmetric impact of the preconditioning side. This observed asymmetry also suggests a potential limitation of Shampoo-like double-sided preconditioners. Beyond their higher memory and computational costs, their symmetric approximation (derived from a Kronecker product) may fail to capture the true underlying structured curvature.
This could lead to misplacing the preconditioning effect on the less critical side, resulting in less effective preconditioning. We hypothesize that this structural mis-approximation could be a contributing factor to the sub-optimal performance and training instabilities (e.g., the loss spike in Figure \ref{fig:gpt2_trainloss}) observed in Shampoo.
\begin{table}[h!]
    \centering
    \caption{Performance comparison between precondition side}
    \label{table:precondside_comparison}
    \begin{tabular}{lcc}
    \toprule
    \textbf{Learning Rate} & \textbf{Validation Loss of left side} & \textbf{Validation Loss of right side} \\
    \midrule
    5e-4 & 3.582 & 3.569 \\
    1e-3 & 3.468 & 3.447 \\
    5e-3 & 3.399 & 3.362 \\
    1e-2 & 3.412 & 3.386 \\
    \bottomrule
    \end{tabular}
    
\end{table}

\subsection{Finetuning GPT2-Large on WikiText-2}
To further assess the performance of our proposed optimizers, we conducted fine-tuning experiments on the GPT2-Large(774M) model \citep{radford2019language} using the WikiText-2 dataset. We explored two distinct fine-tuning objectives:
\begin{itemize}[leftmargin=*]
    \item First, we fine-tuned GPT2-Large using the standard Causal Language Modeling (CLM) loss, which is the conventional approach for autoregressive language models.
    \item Second, we also fine-tuned the same GPT2-Large model on WikiText-2 but employed the Fill-in-the-Middle (FIM) training objective, following the setting in \citep{bavarian2022efficienttraininglanguagemodels}. The FIM objective modifies the training process to enable the model to learn to infill text by rearranging document spans.
\end{itemize}

To account for statistical variability, we conducted five experimental runs with different random seeds under the same hyperparameter settings. Table \ref{table:Finetune Result} presents the average perplexity results after fine-tuning for 2 epochs. For both objectives, we employed a cosine decay learning rate scheduler. Hyperparameter search details and 95\% confidence intervals are provided in Table \ref{table:Full Finetune Result} in Appendix C3.
\begin{wraptable}{r}{0.5\textwidth}
    \centering
    \vspace{-0.4cm}
    \caption{Finetuning GPT2-Large Perplexity}
    \begin{tabular}{l|cccc}
        \toprule
        & AdamW & Muon & ASGO & DASGO \\
        \midrule
        CLM  & 14.01 & 13.91 & 13.88 & 13.84 \\
        FIM  & 17.46 & 15.93 & 15.66 & 15.45 \\
        \bottomrule
    \end{tabular}
    \vspace{-0.4cm}
    \label{table:Finetune Result}
\end{wraptable}
Under the CLM objective, ASGO (13.88 perplexity) and DASGO (13.84 perplexity) achieved lower perplexity than both Muon (13.91 perplexity) and AdamW (14.01 perplexity). This suggests that both ASGO and its memory-efficient variant, DASGO, can be effective for traditional LLM fine-tuning. When the model was fine-tuned using the FIM objective, we observed a similar trend, with ASGO (15.66 perplexity) and DASGO (15.45 perplexity) outperforming both Muon (15.93 perplexity) and AdamW (17.46 perplexity). Across both fine-tuning scenarios, ASGO and DASGO demonstrated competitive or superior performance compared to these baseline optimizers.

\section{Conclusions}
In this paper, we proposed a novel algorithm ASGO, which achieves significantly better convergence rates compared to full-matrix AdaGrad and Shampoo. Based on the theory, we demonstrated that ASGO can benefit from low-rank gradients and block-wise diagonal Hessians, which are widely observed structural properties of DNNs. We further proposed some practical modifications to ASGO, and verified its effectiveness empirically. Currently, ASGO still has two major limitations: 1) computationally intensive compared to Adam because of the matrix operation; 2) it is not straightforward to extend the algorithm to apply to tensors. We plan to look into these issues in future work.

\newpage

\section*{Acknowledgment}
{Shiqian Ma was partially supported by ONR grant N00014-24-1-2705, NSF grants CCF-2311275 and ECCS-2326591}. This work was also partially supported by NSF grant No. 2416897 and ONR grant No. N000142512318,
and used both the DeltaAI advanced computing and data resource, supported by 
NSF(award OAC 2320345) and the State of Illinois, and the Delta advanced computing and data resource, 
supported by 
NSF
(award OAC 2005572) and the State of Illinois.

\bibliographystyle{plainnat}
\bibliography{sample}

\section*{NeurIPS Paper Checklist}

\begin{enumerate}

\item {\bf Claims}
    \item[] Question: Do the main claims made in the abstract and introduction accurately reflect the paper's contributions and scope?
    \item[] Answer: \answerYes{} 
    \item[] Justification: We claim our main contribution both in abstract and summerized in the end of introduction part.
    \item[] Guidelines:
    \begin{itemize}
        \item The answer NA means that the abstract and introduction do not include the claims made in the paper.
        \item The abstract and/or introduction should clearly state the claims made, including the contributions made in the paper and important assumptions and limitations. A No or NA answer to this question will not be perceived well by the reviewers. 
        \item The claims made should match theoretical and experimental results, and reflect how much the results can be expected to generalize to other settings. 
        \item It is fine to include aspirational goals as motivation as long as it is clear that these goals are not attained by the paper. 
    \end{itemize}

\item {\bf Limitations}
    \item[] Question: Does the paper discuss the limitations of the work performed by the authors?
    \item[] Answer: \answerYes{} 
    \item[] Justification: We discuss the limitations of the proposed algorithm in the conclusion section.
    \item[] Guidelines:
    \begin{itemize}
        \item The answer NA means that the paper has no limitation while the answer No means that the paper has limitations, but those are not discussed in the paper. 
        \item The authors are encouraged to create a separate "Limitations" section in their paper.
        \item The paper should point out any strong assumptions and how robust the results are to violations of these assumptions (e.g., independence assumptions, noiseless settings, model well-specification, asymptotic approximations only holding locally). The authors should reflect on how these assumptions might be violated in practice and what the implications would be.
        \item The authors should reflect on the scope of the claims made, e.g., if the approach was only tested on a few datasets or with a few runs. In general, empirical results often depend on implicit assumptions, which should be articulated.
        \item The authors should reflect on the factors that influence the performance of the approach. For example, a facial recognition algorithm may perform poorly when image resolution is low or images are taken in low lighting. Or a speech-to-text system might not be used reliably to provide closed captions for online lectures because it fails to handle technical jargon.
        \item The authors should discuss the computational efficiency of the proposed algorithms and how they scale with dataset size.
        \item If applicable, the authors should discuss possible limitations of their approach to address problems of privacy and fairness.
        \item While the authors might fear that complete honesty about limitations might be used by reviewers as grounds for rejection, a worse outcome might be that reviewers discover limitations that aren't acknowledged in the paper. The authors should use their best judgment and recognize that individual actions in favor of transparency play an important role in developing norms that preserve the integrity of the community. Reviewers will be specifically instructed to not penalize honesty concerning limitations.
    \end{itemize}

\item {\bf Theory assumptions and proofs}
    \item[] Question: For each theoretical result, does the paper provide the full set of assumptions and a complete (and correct) proof?
    \item[] Answer: \answerYes{} 
    \item[] Justification: We provide proofs and assumptions for all the theorems. 
    \item[] Guidelines:
    \begin{itemize}
        \item The answer NA means that the paper does not include theoretical results. 
        \item All the theorems, formulas, and proofs in the paper should be numbered and cross-referenced.
        \item All assumptions should be clearly stated or referenced in the statement of any theorems.
        \item The proofs can either appear in the main paper or the supplemental material, but if they appear in the supplemental material, the authors are encouraged to provide a short proof sketch to provide intuition. 
        \item Inversely, any informal proof provided in the core of the paper should be complemented by formal proofs provided in appendix or supplemental material.
        \item Theorems and Lemmas that the proof relies upon should be properly referenced. 
    \end{itemize}

    \item {\bf Experimental result reproducibility}
    \item[] Question: Does the paper fully disclose all the information needed to reproduce the main experimental results of the paper to the extent that it affects the main claims and/or conclusions of the paper (regardless of whether the code and data are provided or not)?
    \item[] Answer: \answerYes{} 
    \item[] Justification: We clearly clarify all details including experiment setting and hyperparameter tuning method in Appendix Section C which make the experiments reproducible.
    \item[] Guidelines:
    \begin{itemize}
        \item The answer NA means that the paper does not include experiments.
        \item If the paper includes experiments, a No answer to this question will not be perceived well by the reviewers: Making the paper reproducible is important, regardless of whether the code and data are provided or not.
        \item If the contribution is a dataset and/or model, the authors should describe the steps taken to make their results reproducible or verifiable. 
        \item Depending on the contribution, reproducibility can be accomplished in various ways. For example, if the contribution is a novel architecture, describing the architecture fully might suffice, or if the contribution is a specific model and empirical evaluation, it may be necessary to either make it possible for others to replicate the model with the same dataset, or provide access to the model. In general. releasing code and data is often one good way to accomplish this, but reproducibility can also be provided via detailed instructions for how to replicate the results, access to a hosted model (e.g., in the case of a large language model), releasing of a model checkpoint, or other means that are appropriate to the research performed.
        \item While NeurIPS does not require releasing code, the conference does require all submissions to provide some reasonable avenue for reproducibility, which may depend on the nature of the contribution. For example
        \begin{enumerate}
            \item If the contribution is primarily a new algorithm, the paper should make it clear how to reproduce that algorithm.
            \item If the contribution is primarily a new model architecture, the paper should describe the architecture clearly and fully.
            \item If the contribution is a new model (e.g., a large language model), then there should either be a way to access this model for reproducing the results or a way to reproduce the model (e.g., with an open-source dataset or instructions for how to construct the dataset).
            \item We recognize that reproducibility may be tricky in some cases, in which case authors are welcome to describe the particular way they provide for reproducibility. In the case of closed-source models, it may be that access to the model is limited in some way (e.g., to registered users), but it should be possible for other researchers to have some path to reproducing or verifying the results.
        \end{enumerate}
    \end{itemize}

\item {\bf Open access to data and code}
    \item[] Question: Does the paper provide open access to the data and code, with sufficient instructions to faithfully reproduce the main experimental results, as described in supplemental material?
    \item[] Answer: \answerYes{} 
    \item[] Justification: We plan to make the code publicly available in a GitHub repository upon publication. All datasets used are publicly available and cited, with setup details in Appendix \ref{sec:HyperparamTuning}.
    \item[] Guidelines:
    \begin{itemize}
        \item The answer NA means that paper does not include experiments requiring code.
        \item Please see the NeurIPS code and data submission guidelines (\url{https://nips.cc/public/guides/CodeSubmissionPolicy}) for more details.
        \item While we encourage the release of code and data, we understand that this might not be possible, so “No” is an acceptable answer. Papers cannot be rejected simply for not including code, unless this is central to the contribution (e.g., for a new open-source benchmark).
        \item The instructions should contain the exact command and environment needed to run to reproduce the results. See the NeurIPS code and data submission guidelines (\url{https://nips.cc/public/guides/CodeSubmissionPolicy}) for more details.
        \item The authors should provide instructions on data access and preparation, including how to access the raw data, preprocessed data, intermediate data, and generated data, etc.
        \item The authors should provide scripts to reproduce all experimental results for the new proposed method and baselines. If only a subset of experiments are reproducible, they should state which ones are omitted from the script and why.
        \item At submission time, to preserve anonymity, the authors should release anonymized versions (if applicable).
        \item Providing as much information as possible in supplemental material (appended to the paper) is recommended, but including URLs to data and code is permitted.
    \end{itemize}

\item {\bf Experimental setting/details}
    \item[] Question: Does the paper specify all the training and test details (e.g., data splits, hyperparameters, how they were chosen, type of optimizer, etc.) necessary to understand the results?
    \item[] Answer: \answerYes{} 
    \item[] Justification: All training and test details, including model setting, hyperparameter selection, and optimizer configurations, are specified in Appendix \ref{sec:HyperparamTuning}.
    \item[] Guidelines:
    \begin{itemize}
        \item The answer NA means that the paper does not include experiments.
        \item The experimental setting should be presented in the core of the paper to a level of detail that is necessary to appreciate the results and make sense of them.
        \item The full details can be provided either with the code, in appendix, or as supplemental material.
    \end{itemize}

\item {\bf Experiment statistical significance}
    \item[] Question: Does the paper report error bars suitably and correctly defined or other appropriate information about the statistical significance of the experiments?
    \item[] Answer: \answerYes{} 
    \item[] Justification: To account for statistical variability, we set the fixed random seed or report mean training performance among difference random seeds to verify our proposed algorithm performance.
    \item[] Guidelines: 
    \begin{itemize}
        \item The answer NA means that the paper does not include experiments.
        \item The authors should answer "Yes" if the results are accompanied by error bars, confidence intervals, or statistical significance tests, at least for the experiments that support the main claims of the paper.
        \item The factors of variability that the error bars are capturing should be clearly stated (for example, train/test split, initialization, random drawing of some parameter, or overall run with given experimental conditions).
        \item The method for calculating the error bars should be explained (closed form formula, call to a library function, bootstrap, etc.)
        \item The assumptions made should be given (e.g., Normally distributed errors).
        \item It should be clear whether the error bar is the standard deviation or the standard error of the mean.
        \item It is OK to report 1-sigma error bars, but one should state it. The authors should preferably report a 2-sigma error bar than state that they have a 96\% CI, if the hypothesis of Normality of errors is not verified.
        \item For asymmetric distributions, the authors should be careful not to show in tables or figures symmetric error bars that would yield results that are out of range (e.g. negative error rates).
        \item If error bars are reported in tables or plots, The authors should explain in the text how they were calculated and reference the corresponding figures or tables in the text.
    \end{itemize}

\item {\bf Experiments compute resources}
    \item[] Question: For each experiment, does the paper provide sufficient information on the computer resources (type of compute workers, memory, time of execution) needed to reproduce the experiments?
    \item[] Answer: \answerYes{} 
    \item[] Justification: We report the computer resources we use in Section 7.
    \item[] Guidelines:
    \begin{itemize}
        \item The answer NA means that the paper does not include experiments.
        \item The paper should indicate the type of compute workers CPU or GPU, internal cluster, or cloud provider, including relevant memory and storage.
        \item The paper should provide the amount of compute required for each of the individual experimental runs as well as estimate the total compute. 
        \item The paper should disclose whether the full research project required more compute than the experiments reported in the paper (e.g., preliminary or failed experiments that didn't make it into the paper). 
    \end{itemize}
    
\item {\bf Code of ethics}
    \item[] Question: Does the research conducted in the paper conform, in every respect, with the NeurIPS Code of Ethics \url{https://neurips.cc/public/EthicsGuidelines}?
    \item[] Answer: \answerYes{} 
    \item[] Justification: Our research conducted in the paper conform, in every respect, with the NeurIPS Code of Ethics.
    \item[] Guidelines:
    \begin{itemize}
        \item The answer NA means that the authors have not reviewed the NeurIPS Code of Ethics.
        \item If the authors answer No, they should explain the special circumstances that require a deviation from the Code of Ethics.
        \item The authors should make sure to preserve anonymity (e.g., if there is a special consideration due to laws or regulations in their jurisdiction).
    \end{itemize}

\item {\bf Broader impacts}
    \item[] Question: Does the paper discuss both potential positive societal impacts and negative societal impacts of the work performed?
    \item[] Answer: \answerNA{} 
    \item[] Justification: This paper proposes a foundational optimization algorithm aimed at improving the training efficiency and effectiveness of deep neural networks. This work does not introduce direct applications or systems with immediate societal consequences beyond the general implications of advancing ML research.
    \item[] Guidelines:
    \begin{itemize}
        \item The answer NA means that there is no societal impact of the work performed.
        \item If the authors answer NA or No, they should explain why their work has no societal impact or why the paper does not address societal impact.
        \item Examples of negative societal impacts include potential malicious or unintended uses (e.g., disinformation, generating fake profiles, surveillance), fairness considerations (e.g., deployment of technologies that could make decisions that unfairly impact specific groups), privacy considerations, and security considerations.
        \item The conference expects that many papers will be foundational research and not tied to particular applications, let alone deployments. However, if there is a direct path to any negative applications, the authors should point it out. For example, it is legitimate to point out that an improvement in the quality of generative models could be used to generate deepfakes for disinformation. On the other hand, it is not needed to point out that a generic algorithm for optimizing neural networks could enable people to train models that generate Deepfakes faster.
        \item The authors should consider possible harms that could arise when the technology is being used as intended and functioning correctly, harms that could arise when the technology is being used as intended but gives incorrect results, and harms following from (intentional or unintentional) misuse of the technology.
        \item If there are negative societal impacts, the authors could also discuss possible mitigation strategies (e.g., gated release of models, providing defenses in addition to attacks, mechanisms for monitoring misuse, mechanisms to monitor how a system learns from feedback over time, improving the efficiency and accessibility of ML).
    \end{itemize}
    
\item {\bf Safeguards}
    \item[] Question: Does the paper describe safeguards that have been put in place for responsible release of data or models that have a high risk for misuse (e.g., pretrained language models, image generators, or scraped datasets)?
    \item[] Answer: \answerNA{} 
    \item[] Justification: This paper proposes a foundational optimization algorithm aimed at improving the training efficiency and effectiveness of deep neural networks.
    \item[] Guidelines:
    \begin{itemize}
        \item The answer NA means that the paper poses no such risks.
        \item Released models that have a high risk for misuse or dual-use should be released with necessary safeguards to allow for controlled use of the model, for example by requiring that users adhere to usage guidelines or restrictions to access the model or implementing safety filters. 
        \item Datasets that have been scraped from the Internet could pose safety risks. The authors should describe how they avoided releasing unsafe images.
        \item We recognize that providing effective safeguards is challenging, and many papers do not require this, but we encourage authors to take this into account and make a best faith effort.
    \end{itemize}

\item {\bf Licenses for existing assets}
    \item[] Question: Are the creators or original owners of assets (e.g., code, data, models), used in the paper, properly credited and are the license and terms of use explicitly mentioned and properly respected?
    \item[] Answer: \answerYes{} 
    \item[] Justification: All existing assets, such as datasets, baseline model architectures, and optimizers are properly credited via citations in the paper. Key open-source projects utilized include Karpathy's NanoGPT (\cite{Karpathy2022}, MIT License) and the specific version of Muon code we adapted from github.com/KellerJordan/Muon (\cite{jordan2024muon}, MIT License). For finetuning GPT-2 (\cite{radford2019language}), we utilized example scripts from the Hugging Face Transformers library (which is Apache 2.0 licensed). The datasets are standard benchmarks with established public licenses, and we have respected their terms of use.
    \item[] Guidelines:
    \begin{itemize}
        \item The answer NA means that the paper does not use existing assets.
        \item The authors should cite the original paper that produced the code package or dataset.
        \item The authors should state which version of the asset is used and, if possible, include a URL.
        \item The name of the license (e.g., CC-BY 4.0) should be included for each asset.
        \item For scraped data from a particular source (e.g., website), the copyright and terms of service of that source should be provided.
        \item If assets are released, the license, copyright information, and terms of use in the package should be provided. For popular datasets, \url{paperswithcode.com/datasets} has curated licenses for some datasets. Their licensing guide can help determine the license of a dataset.
        \item For existing datasets that are re-packaged, both the original license and the license of the derived asset (if it has changed) should be provided.
        \item If this information is not available online, the authors are encouraged to reach out to the asset's creators.
    \end{itemize}

\item {\bf New assets}
    \item[] Question: Are new assets introduced in the paper well documented and is the documentation provided alongside the assets?
    \item[] Answer: \answerYes{} 
    \item[] Justification: An anonymized version of our code and experimental scripts is provided in the supplementary material. This includes initial documentation (e.g., a README with setup instructions and guidance for running key experiments), which will be further detailed for the full public release.
    \item[] Guidelines:
    \begin{itemize}
        \item The answer NA means that the paper does not release new assets.
        \item Researchers should communicate the details of the dataset/code/model as part of their submissions via structured templates. This includes details about training, license, limitations, etc. 
        \item The paper should discuss whether and how consent was obtained from people whose asset is used.
        \item At submission time, remember to anonymize your assets (if applicable). You can either create an anonymized URL or include an anonymized zip file.
    \end{itemize}

\item {\bf Crowdsourcing and research with human subjects}
    \item[] Question: For crowdsourcing experiments and research with human subjects, does the paper include the full text of instructions given to participants and screenshots, if applicable, as well as details about compensation (if any)? 
    \item[] Answer: \answerNA{} 
    \item[] Justification: The paper does not involve crowdsourcing nor research with human subjects.
    \item[] Guidelines:
    \begin{itemize}
        \item The answer NA means that the paper does not involve crowdsourcing nor research with human subjects.
        \item Including this information in the supplemental material is fine, but if the main contribution of the paper involves human subjects, then as much detail as possible should be included in the main paper. 
        \item According to the NeurIPS Code of Ethics, workers involved in data collection, curation, or other labor should be paid at least the minimum wage in the country of the data collector. 
    \end{itemize}

\item {\bf Institutional review board (IRB) approvals or equivalent for research with human subjects}
    \item[] Question: Does the paper describe potential risks incurred by study participants, whether such risks were disclosed to the subjects, and whether Institutional Review Board (IRB) approvals (or an equivalent approval/review based on the requirements of your country or institution) were obtained?
    \item[] Answer: \answerNA{} 
    \item[] Justification: The paper does not involve crowdsourcing nor research with human subjects.
    \item[] Guidelines:
    \begin{itemize}
        \item The answer NA means that the paper does not involve crowdsourcing nor research with human subjects.
        \item Depending on the country in which research is conducted, IRB approval (or equivalent) may be required for any human subjects research. If you obtained IRB approval, you should clearly state this in the paper. 
        \item We recognize that the procedures for this may vary significantly between institutions and locations, and we expect authors to adhere to the NeurIPS Code of Ethics and the guidelines for their institution. 
        \item For initial submissions, do not include any information that would break anonymity (if applicable), such as the institution conducting the review.
    \end{itemize}

\item {\bf Declaration of LLM usage}
    \item[] Question: Does the paper describe the usage of LLMs if it is an important, original, or non-standard component of the core methods in this research? Note that if the LLM is used only for writing, editing, or formatting purposes and does not impact the core methodology, scientific rigorousness, or originality of the research, declaration is not required.
    \item[] Answer: \answerNA{} 
    \item[] Justification: Core method development in this research does not involve LLMs as any important, original, or non-standard components. We only used an LLM for assistance with writing, editing, or formatting purposes, and this usage does not impact the core methodology, scientific rigor, or originality of the research.
    \item[] Guidelines:
    \begin{itemize}
        \item The answer NA means that the core method development in this research does not involve LLMs as any important, original, or non-standard components.
        \item Please refer to our LLM policy (\url{https://neurips.cc/Conferences/2025/LLM}) for what should or should not be described.
    \end{itemize}

\end{enumerate}

\newpage

\appendix

\section{Additional Related Work}\label{appendix:additional_related_work}
\paragraph{Optimization with Matrix Structure.} Much recent work has focused on improving full-matrix AdaGrad and Shampoo. \citet{feinberg2023sketchy} uses a sketching-based approach to approximate the full-matrix AdaGrad preconditioner with lower memory cost. \citet{morwani2024new} provides theoretical intuition and empirical evidence to claim that Shampoo should use the $-1/2$ power in its preconditioners to better approximate the Empirical Fisher (EF) matrix. \citet{vyas2024soap} demonstrates that Shampoo is like doing Adafactor in the eigenspace of gradients and proposes a novel algorithm, SOAP, that performs Adam in this eigenspace. SOAP is observed to achieve better performance than Adam and Shampoo, but suffers from a high computation load per iteration. Galore~\citep{zhao2024galore} shares a similar algorithmic design with SOAP with extra focus on lowering memory costs. Muon~\citep{jordan2024muon} follows this line of work, using a steepest descent (in the spectral norm) framework, which it shows to be scalable and effective in training large foundation models~\citep{liu2025muon}. \citet{large2024scalable,bernstein2024modular} propose the modular approach that has also been applied to improve Muon. More recently, \citet{nguyen2025improving} proposes AdaDiag, which may be viewed as SOAP doing SVD without gradient accumulation. \citet{liu2025cosmos} proposes COSMOS, a combination of SOAP and Muon, that trades off between performance and computational efficiency.

\paragraph{Rank of Gradients and Weight Updates.} It has been widely observed that gradients are naturally low-rank in DNNs, even when a large batch size is employed~\citep{gur2018gradient,zhao2021zero,yang2023spectral,cosson2023low}. This property has been widely utilized for computation and memory efficiency in training~\citep{wang2018atomo,cosson2023low,zhao2024galore}. On the other hand, the rank of the total weight update $\Delta W = W_T - W_0$, depends a lot on the training method. If we use LoRA~\citep{hu2022lora}, $\Delta W$ is determined to be low-rank. However, in pretraining or even many complex fine-tuning tasks, LoRA's performance is much worse than methods that produce high-rank weight updates like full-parameter training, which is conjectured to be due to the weight update rank~\citep{lialin2023relora,jiang2024mora,huang2025hira}.
\begin{figure}[h]
\vspace{-0.3cm}
    \centering
    \subfigure[Hessian of Query (4 heads) ]{\includegraphics[width=0.3\textwidth]{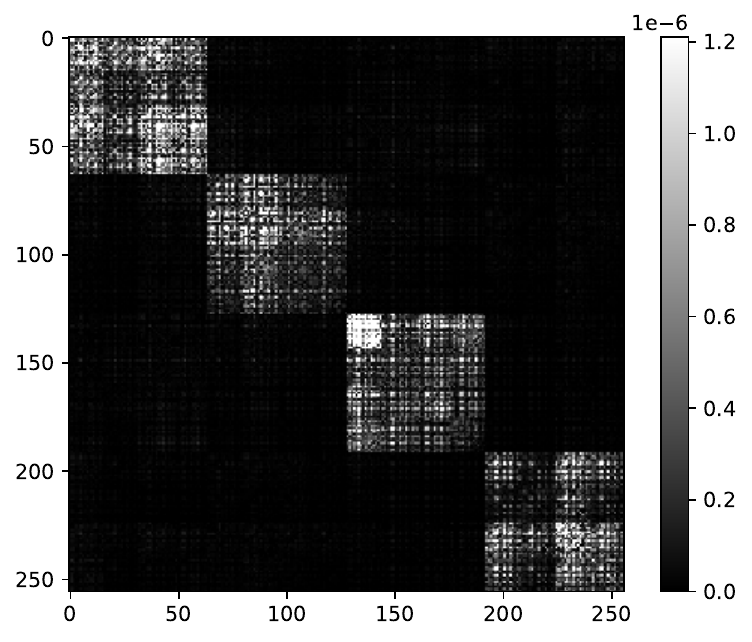}}\hfill
    \subfigure[Hessian of Key (4 heads)]{\includegraphics[width=0.3\textwidth]{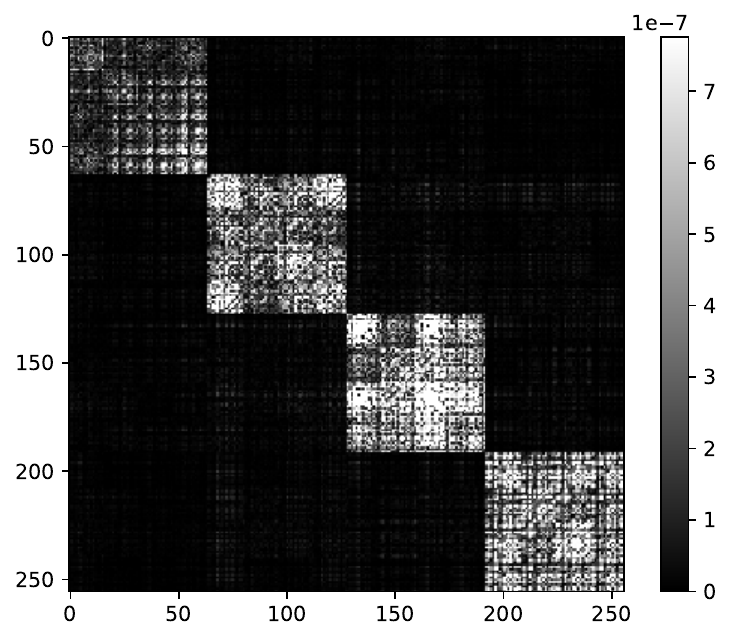}}\hfill
    \subfigure[Hessian of Value (4 heads)]{\includegraphics[width=0.3\textwidth]{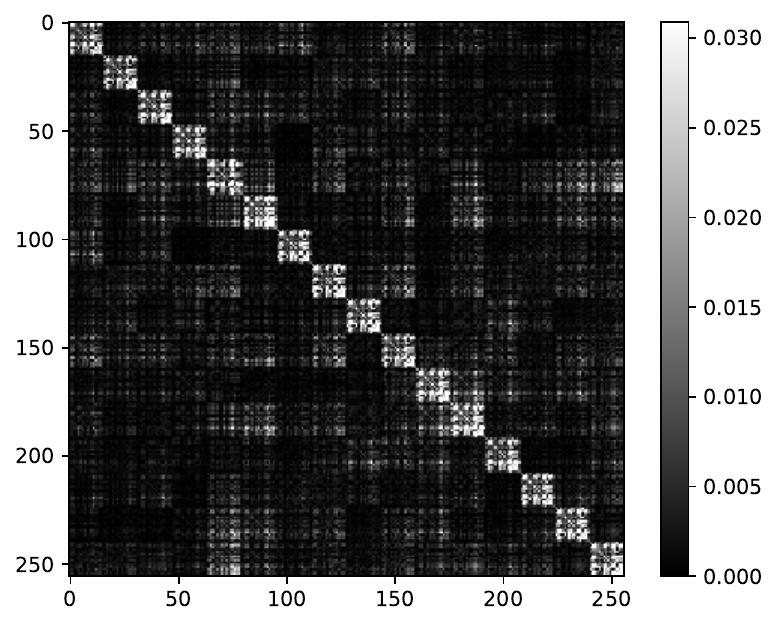}}\hfill
    \vspace{-0.3cm}
    \caption{This figure is from \citet{zhang2024adam}. It depicts the Hessian of different parameter blocks in a small Transformer at the 1\% training step. The near-block-diagonal structure maintains throughout training. But different parameter blocks have different numbers of small dense matrices, where Query and Key correspond to the number of heads. }
  \label{fig:block_diagonal}
  \vspace{-0.3cm}
\end{figure}

\paragraph{Block-wise Diagonal Hessian.} It has been observed that the Hessian of a neural network tends to be block-wise diagonal with each block corresponding to a neuron both in experiments and theory for small MLPs~\citep{collobert2004large}. \citet{zhang2024transformers,zhang2024adam} recently numerically verified this property in small transformers, and as illustrated in Figure \ref{fig:heterogeneity}, further empirically showed that transformers may exhibit heterogeneity between blocks, while CNNs may not.
\begin{figure}[h]
\vspace{-0.3cm}
    \centering
    \subfigure[VGG16 ]{\includegraphics[width=0.3\textwidth]{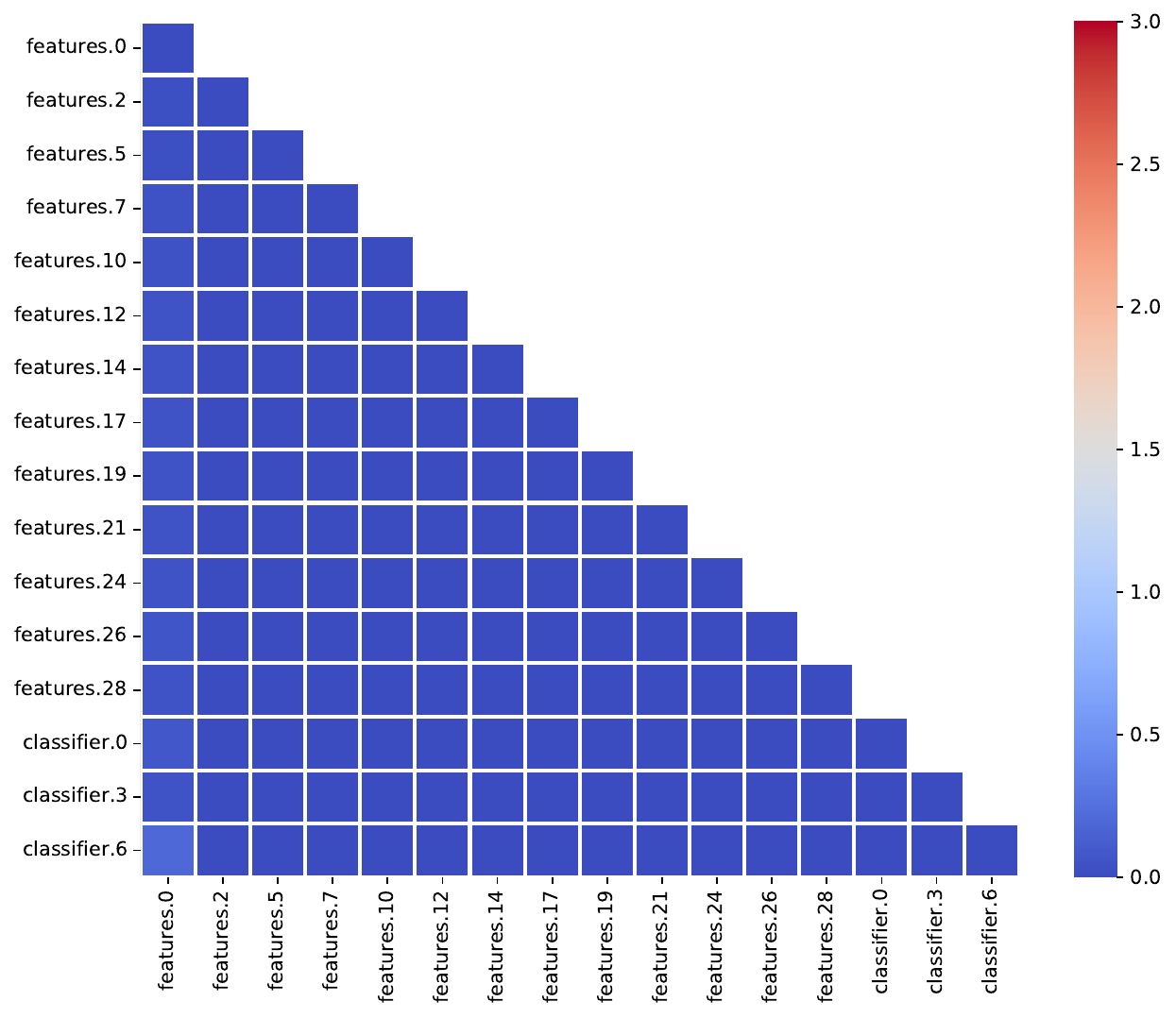}}\hfill
    \subfigure[ViT-base]{\includegraphics[width=0.3\textwidth]{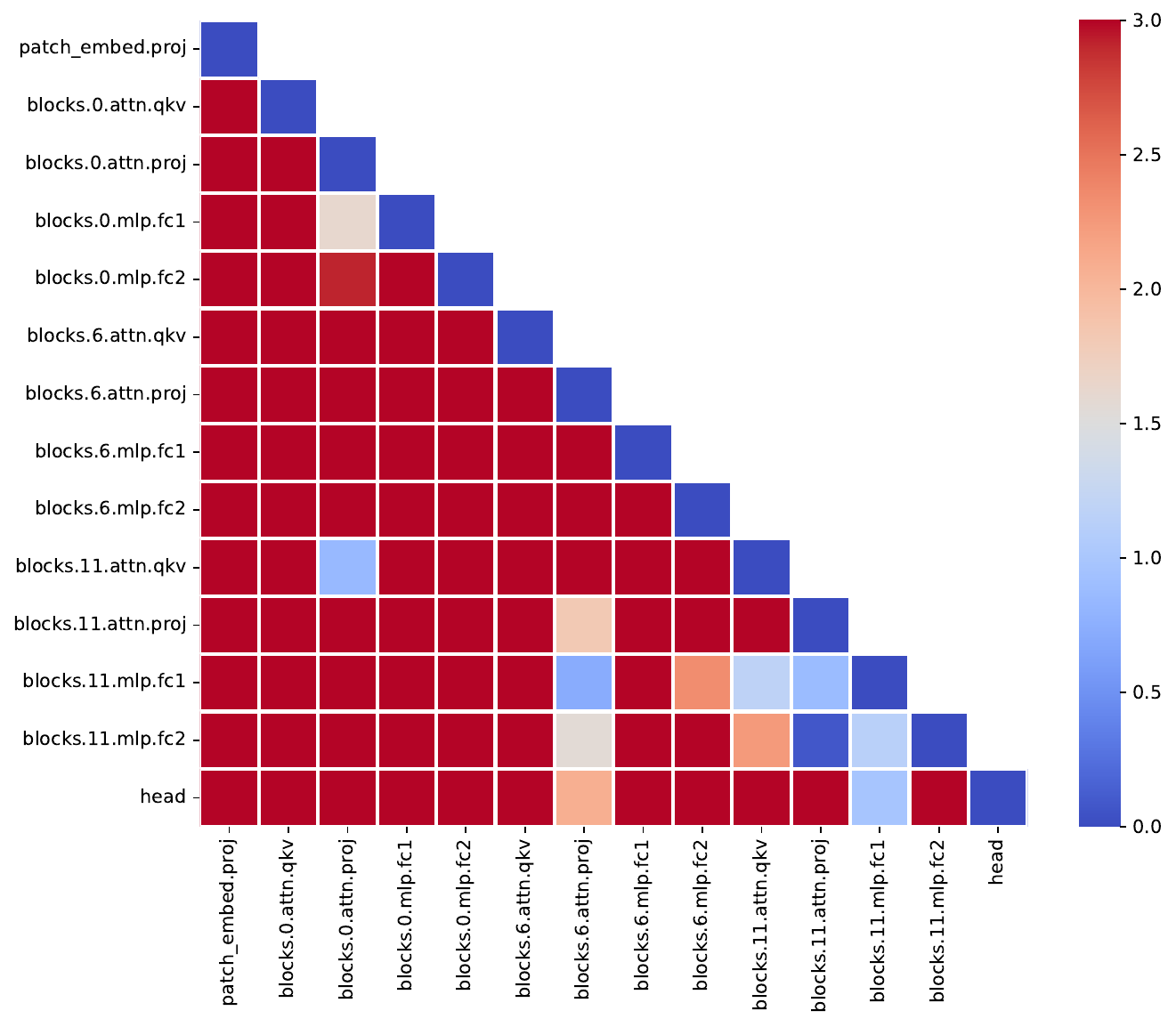}}\hfill
    \subfigure[GPT2]{\includegraphics[width=0.3\textwidth]{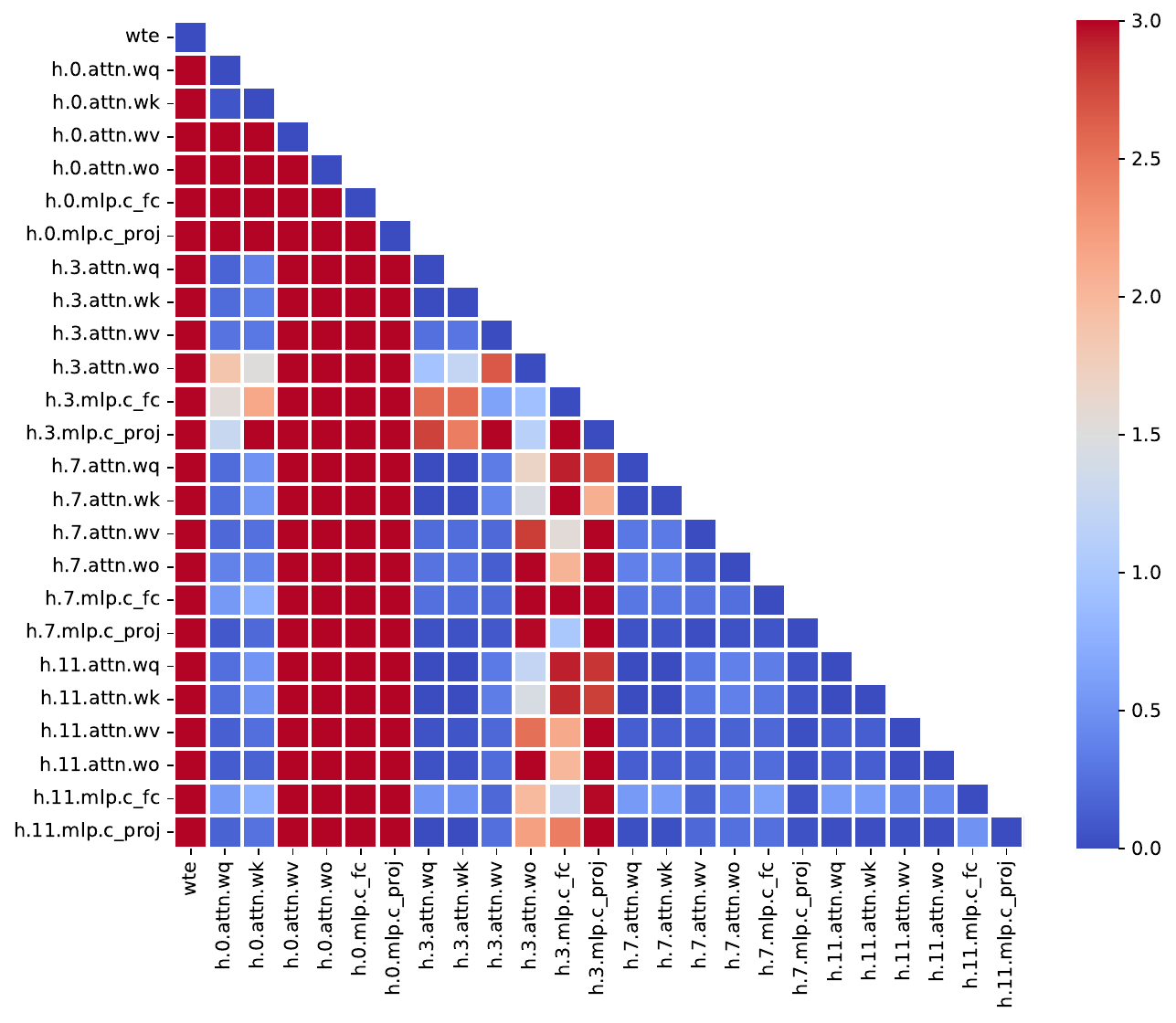}}\hfill
    \vspace{-0.3cm}
    \caption{This figure is from \citet{zhang2024transformers,zhang2024adam}, calculating the Jensen-Shannon (JS) distance between two eigenvalue densities of all possible block pairs at initialization. It shows that JS distance of blockwise spectra in CNNs is significantly smaller than that in Transformers.}
  \label{fig:heterogeneity}
  \vspace{-0.3cm}
\end{figure}

\paragraph{Concurrent Work.} When we were finishing writing this paper, we noticed that the paper~\citep{xie2025structured}, which had just appeared on arXiv, proposes and studies an algorithm, referred to as One-Sided Shampoo, that is identical to ASGO. Although the theoretical techniques and convergence results proved in ~\citep{xie2025structured} and here are very similar in general, there are still some notable differences between the two works. First, the motivations are different. In \citet{xie2025structured}, the authors develop their method from the unified preconditioning method framework AdaptReg~\citep{gupta2017unified} and mainly highlight its superior convergence results compared to Shampoo and full-matrix AdaGrad. In our paper, we focus more on theoretically discussing how ASGO can utilize the structured properties of optimization problems including low-rank gradients and block-wise diagonal Hessians, to highlight the potential of ASGO as a practical algorithm for training deep learning models. Second, our empirical results provide evidence that ASGO can perform well in practical tasks, whereas \citet{xie2025structured} focuses on convex settings and examines One-Sided Shampoo only on linear regression. Moreover, our implementation includes specialized designs for Transformer architectures (particularly query/key) to improve performance on deep learning training tasks. As a direct byproduct of our main algorithm, we also developed a lightweight diagonal version named DASGO to trade off memory consumption and performance. Third, \citet{xie2025structured} focuses primarily on developing a general proof framework applicable to multiple optimizers including One-Sided Shampoo, whereas our work specifically examines the theoretical and practical benefits of ASGO. To conclude, both works contribute to a better understanding of ASGO/One-Sided Shampoo.

\section{More Discussions on ASGO}\label{appendix:discussions}
\paragraph{A practical implementation of ASGO.} We present a practical implementation of ASGO in Algorithm~\ref{alg:practical_asgo}, with the input $W$ as a module in the network. Our implementation incorporates several common modifications to enhance computational efficiency and stability.

\begin{algorithm}[th]
   \caption{A Practical Implementation of ASGO (for a layer $W^\ell \in \RR^{m\times n}$ such that $m \ge n$)}
   \label{alg:practical_asgo}
\begin{algorithmic}[1]
   \STATE {\bfseries Input:} $W_0\in \RR^{m\times n}$, lr schedule $\{\eta_t\}$, momentums $\beta_1,\beta_2 \in [0,1)$, batch size $M\in \mathbb{N}$, $\epsilon \in \RR$ ($\epsilon$ should be small, similar to the $\epsilon$ for Adam.)
   \STATE Initialize $M_{-1} = 0 \in \RR^{m\times n}$, $V_{-1}=0 \in \RR^{n\times n}$
   \COMMENT{If $m < n$, then we set $V_{-1}=0 \in \RR^{m\times m}$}
   \FOR{$t=0$ {\bfseries to} $T-1$}
   \STATE Sample mini-batch $\cB_t$ with $\Abs{\cB_t} \equiv M$ uniformly
   \STATE $G_t = \frac{1}{M} \sum_{\xi\in\cB_t} \nabla_W f(W_t;\xi)$
   \STATE $M_t = \beta_1 M_{t-1} + (1 - \beta_1)G_t$
   \STATE $V_t = \beta_2 V_{t-1} + (1 - \beta_2) G_t^\top G_t$
   \COMMENT{If $m < n$, $V_t = \beta_2 V_{t-1} + (1 - \beta_2) G_t G_t^\top$}
   \STATE $\Lambda_t^{\mathrm{inv}} = \texttt{SqrtInverseNewtonSchulz}(V_t)$
   \COMMENT{Run Alg.~\ref{alg:sqrt-inverse-newton-schulz} to compute $V_t$ square root inverse}
   \STATE $W_{t+1} = W_t - \eta_t \frac{0.2 \sqrt{mn}  M_t \Lambda_t^{\mathrm{inv}}}{\|M_t \Lambda_t^{\mathrm{inv}}\|_{\mathrm{F}}}$
   \COMMENT{If $m < n$, $W_{t+1} = W_t - \eta_t \frac{0.2 \sqrt{mn}  \Lambda_t^{\mathrm{inv}} M_t}{\|\Lambda_t^{\mathrm{inv}} M_t\|_{\mathrm{F}}}$}
   \ENDFOR
\end{algorithmic}
\end{algorithm}

\begin{algorithm}[th]
   \caption{SqrtInverseNewtonSchulz}
   \label{alg:sqrt-inverse-newton-schulz}
\begin{algorithmic}[1]
   \STATE {\bfseries Input:} $X \in \RR^{n\times n}$, parameters $a,b,c$ (can be chosen in ways like \citet{jordan2024muon,amsel2025polar,grishina2025accelerating}), step number $K$, $\epsilon$
   \STATE Obtain normalized input $Y \gets \frac{X}{\alpha_X}$, with $\alpha_X = \Norm{X}_{\mathrm{F}}  + \epsilon$
   \STATE Initialize identity matrix $Z \gets I_n \in \RR^{n\times n}$
   \FOR{$k=1$ {\bfseries to} $K$}
   \STATE $A \gets ZY$
   \STATE $B \gets bA + cA^2$
   \STATE $Y \gets aY + YB$
   \STATE $Z \gets aZ +BZ$
   \ENDFOR
   \RETURN{$\frac{Z}{\sqrt{\alpha_X}}$}
   \COMMENT{Return the approximation of $X^{-\frac{1}{2}}$, and $\sqrt{\alpha_X}\, Y$ is an approximation of $X^{\frac{1}{2}}$}
\end{algorithmic}
\end{algorithm}

(i) Exponential moving averages are used to update the preconditioner and incorporate momentum, following established practice, which results in Adam's improvements across various tasks over AdaGrad.

(ii) As demonstrated in Section~\ref{sec:discussions}, we use an efficient numerical algorithm (Algorithm~\ref{alg:sqrt-inverse-newton-schulz}) to compute the matrix square root for ASGO, which is mathematically equivalent to applying the Newton-Schulz iteration to the $2\times 2$ block matrix $\begin{bmatrix}
    0 & A \\ I & 0
\end{bmatrix}$. The parameters $a,b,c$ can be chosen in ways like \citet{jordan2024muon,amsel2025polar,grishina2025accelerating}. In each of the Newton-Schulz iteration, we perform 4 matrix multiplications, while the Muon Newton-Schulz iteration requires 3.  

(iii) We adaptively select the preconditioning side. Instead of fixing the preconditioner to one side, we dynamically choose to precondition on the side corresponding to the smaller dimension of the gradient matrix $G_t$ in $\mathbb R^{m \times n}$. Specifically, if $m < n$, the preconditioner is formed from $G_tG_t^T$ and applied from the left; otherwise (if $m > n$), it is formed from $G_t^TG_t$ and applied from the right. Based on the intuition interpreted in Section~\ref{sec:smooth}, we also apply the preconditioner from the right when $m=n$.

(iv) We incorporate a similar update alignment to ASGO, as presented in line 9 of Algorithm~\ref{alg:practical_asgo}. Specifically, we normalize the update of ASGO for each matrix weight $W$ by the root mean square norm (RMS norm) and multiplies $0.2$ to match the RMS norm of Adam update, as suggested by \citet{liu2025muon}. In this way, the training becomes more stable and easier to tune.

\paragraph{Special Design for Transformers.} Furthermore, we introduce a specialized adaptation of ASGO for query and key matrices in attention layers. Recent work by~\citet{zhang2024adam} has demonstrated that the Hessian structure of query and key layers differs significantly from conventional MLP layers. As illustrated in Figure~\ref{fig:block_diagonal}, the number of dense blocks corresponds to the number of attention heads rather than output neurons. This observation aligns with the forward computation of multi-head attention, where attention scores are computed independently across different subspaces, suggesting that query and key parameters could be optimized in a head-wise manner. To leverage this insight, we reshape query and key parameters from matrices in $R^{n \times h d}$ to three-dimensional tensors $R^{h \times n \times d}$, and apply our optimization algorithm independently to each head's subspace. This restructuring reduces both memory consumption and computational complexity by decreasing the matrix size from $O(h^2d^2)$ into $O(h d^2)$. For instance, in NanoGPT~\citep{Karpathy2022}, this adjustment reduces the preconditioning size of a single query/key parameter from approximately $10^6$ elements into $10^5$. The empirical performance of this modification is evaluated in NanoGPT pretrain task in Section~\ref{appendix:nanogpt}.

\paragraph{A Diagonal Variant of ASGO.} Drawing inspiration from \citet{duchi2011adaptive} , we also implement a variant of ASGO using diagonal matrices, which we denote as DASGO, presented in Algorithm~\ref{alg:practical_dasgo}. DASGO can be viewed as a lightweight version of ASGO, which eliminates the need to compute the inverse square root of full matrices, and reduces memory requirements to a level comparable with Adam-mini~\citep{zhang2024adam}. It also makes the choice of which side of $M_t$ to precondition unimportant in terms of computational effort, in contrast to ASGO. For DASGO, we choose to apply the diagonal preconditioner on the right side, aligning with the neuron architecture in DNNs. However, since DASGO only employs a diagonal preconditioner, it fails to recover the superior theoretical properties of ASGO under block-wise diagonal Hessian settings . We further empirically examine this tradeoff in Section~\ref{sec:empiricalresults}.

\begin{algorithm}[h]
   \caption{Implementation of DASGO (\textbf{D}iagonal \textbf{A}daptive \textbf{S}tructured \textbf{G}radient \textbf{O}ptimization)}
   \label{alg:practical_dasgo}
\begin{algorithmic}[1]
   \STATE {\bfseries Input:} $W_0\in \RR^{m\times n}$, $\epsilon \in \RR$, lr schedule $\{\eta_t\}$, momentums $\beta_1,\beta_2 \in [0,1)$, and batch size $M\in \mathbb{N}$
   \STATE Initialize $M_{-1} = 0 \in \RR^{m\times n}$, $v_{-1}=0 \in \RR^{n}$
   \FOR{$t=0$ {\bfseries to} $T-1$}
   \STATE Sample mini-batch $\cB_t$ with $\Abs{\cB_t} \equiv M$ uniformly
   \STATE $G_t = \frac{1}{M} \sum_{\xi\in\cB_t} \nabla_W f(W_t;\xi)$
   \STATE $M_t = \beta_1 M_{t-1} + (1 - \beta_1)G_t$
   \STATE $v_t = \beta_2 v_{t-1} + (1 - \beta_2) \operatorname{diag}(G_t^\top G_t$)
   \STATE
   $W_{t+1} = W_t - \eta_t M_t \operatorname{diag}(v_t + \epsilon)^{-\frac{1}{2}}$
   \ENDFOR
\end{algorithmic}
\end{algorithm}

\section{Details of Empirical Experiments}\label{sec:HyperparamTuning}
\subsection{Pretraining NanoGPT}\label{appendix:nanogpt}

As an initial experiment, we compared the performance of ASGO, DASGO, Muon, Shampoo, and AdamW for training NanoGPT, which consists of 6 Transformer layers, 6 attention heads, and an embedding dimension of 384, on the Shakespeare character-level dataset with a sequence length of 256 tokens. For all algorithms, we trained the model for 20 epochs, where each epoch contained 128 steps, using a batch size 128 and the OneCycle learning rate (lr) schedule. In contrast to our experiments on GPT2, where we trained 1D and Embedding layer parameters as done in AdamW, we trained all the parameters independently in ASGO.

Before discussing hyperparameter (HP) tuning, we note that recent research~\citep{shi2023distributed, morwani2024new, lin2024removesquarerootadaptivegradient} has suggested treating the inverse order (IO) in Shampoo's preconditioner as a tunable HP rather than using the standard value of $-1/4$. Notably, \citet{morwani2024new} demonstrated that Shampoo with IO$=-1/2$ provides a superior approximation of the full-matrix AdaGrad optimizer. Consequently, we treated the IO as a Shampoo HP in our pretraining experiments. Additionally, Shampoo's performance is highly dependent on the initialization of its preconditioner matrices, as this is crucial for accurately approximating the Empirical Fisher Information Matrix \citep{morwani2024new}. To address this initialization sensitivity, we implemented a preconditioner warmup phase specifically for Shampoo. Following the methodology in \citet{ren2021tensornormaltrainingdeep}, we dedicated the first epoch exclusively to accumulating statistics for the preconditioner matrices without updating model parameters. This approach yields a more robust estimation of the preconditioner, which significantly improves the stability of Shampoo during subsequent training iterations. Figure \ref{fig:nanogpt_performance} depicts the training loss and validation loss for training NanoGPT with Shampoo, DASGO, Muon, ASGO and Adam-W.

Examining both test and training loss curves, ASGO consistently outperforms Shampoo despite requiring only half the memory consumption and computational effort, which highlights ASGO's practical advantages for training language models. Furthermore, ASGO and Muon achieve the lowest final training and test losses, outperforming all other methods. This consistent performance between ASGO and Muon aligns well with our discussions in Section~\ref{sec:discussions}. Moreover, The lightweight DASGO optimizer achieves competitive results against AdamW in both training and test loss metrics. DASGO demonstrates particularly strong performance during the initial training phase (first 200 training steps, which is about the first two epochs), but eventually exhibits a performance gap with ASGO. We discuss this performance gap in Section \ref{sec:pretrainGPT2}.

\paragraph{Hyperparameter Tuning}
For consistency across all optimization methods, we employ the OneCycleLR learning rate schedule, which has been shown to provide stable convergence properties in deep learning tasks.
\begin{figure}[t]
\vspace{-0.3cm}
    \centering
    \subfigure[Train Loss ]{\includegraphics[width=0.45\textwidth]{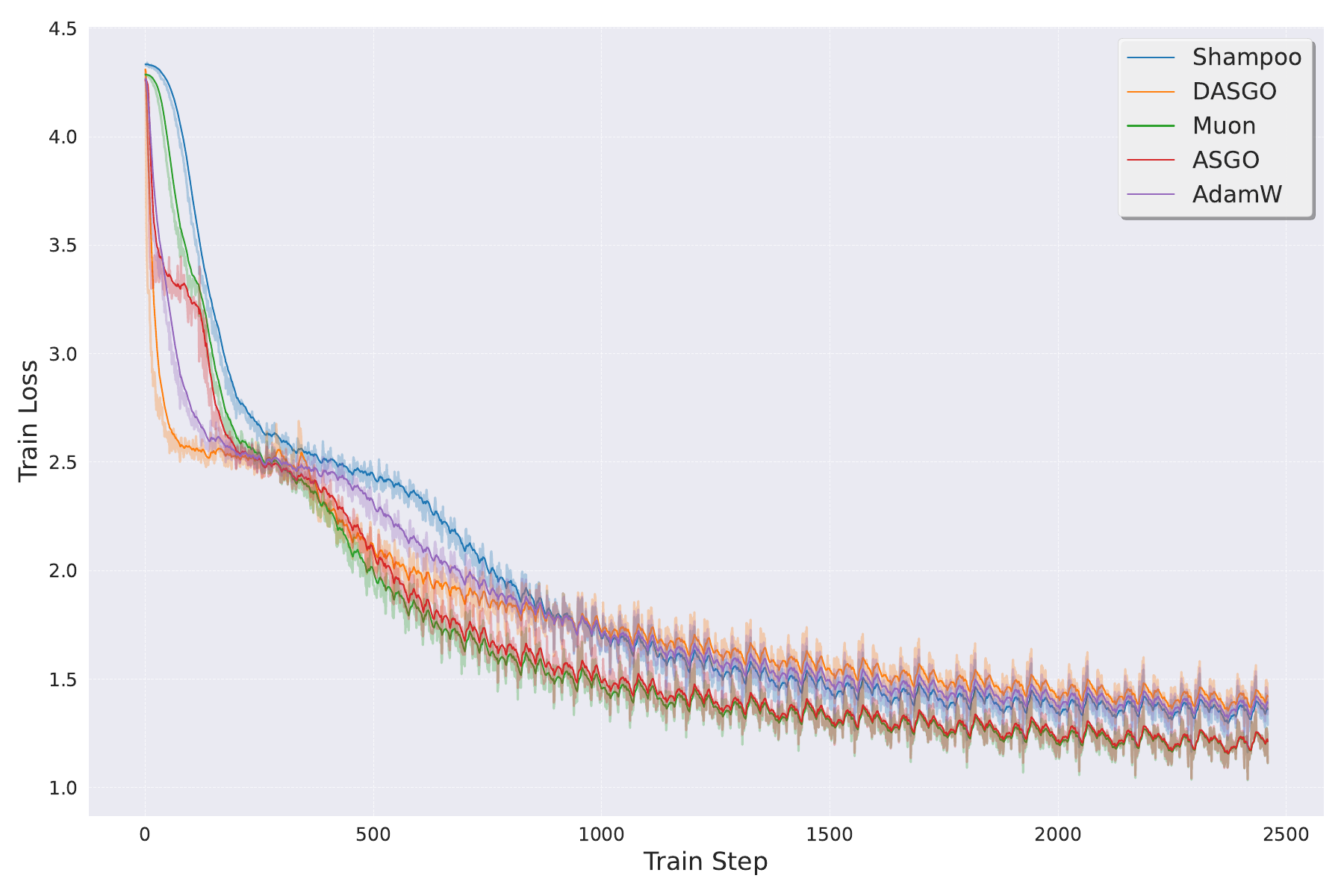}}\hfill
    \subfigure[Test Loss]{\includegraphics[width=0.45\textwidth]{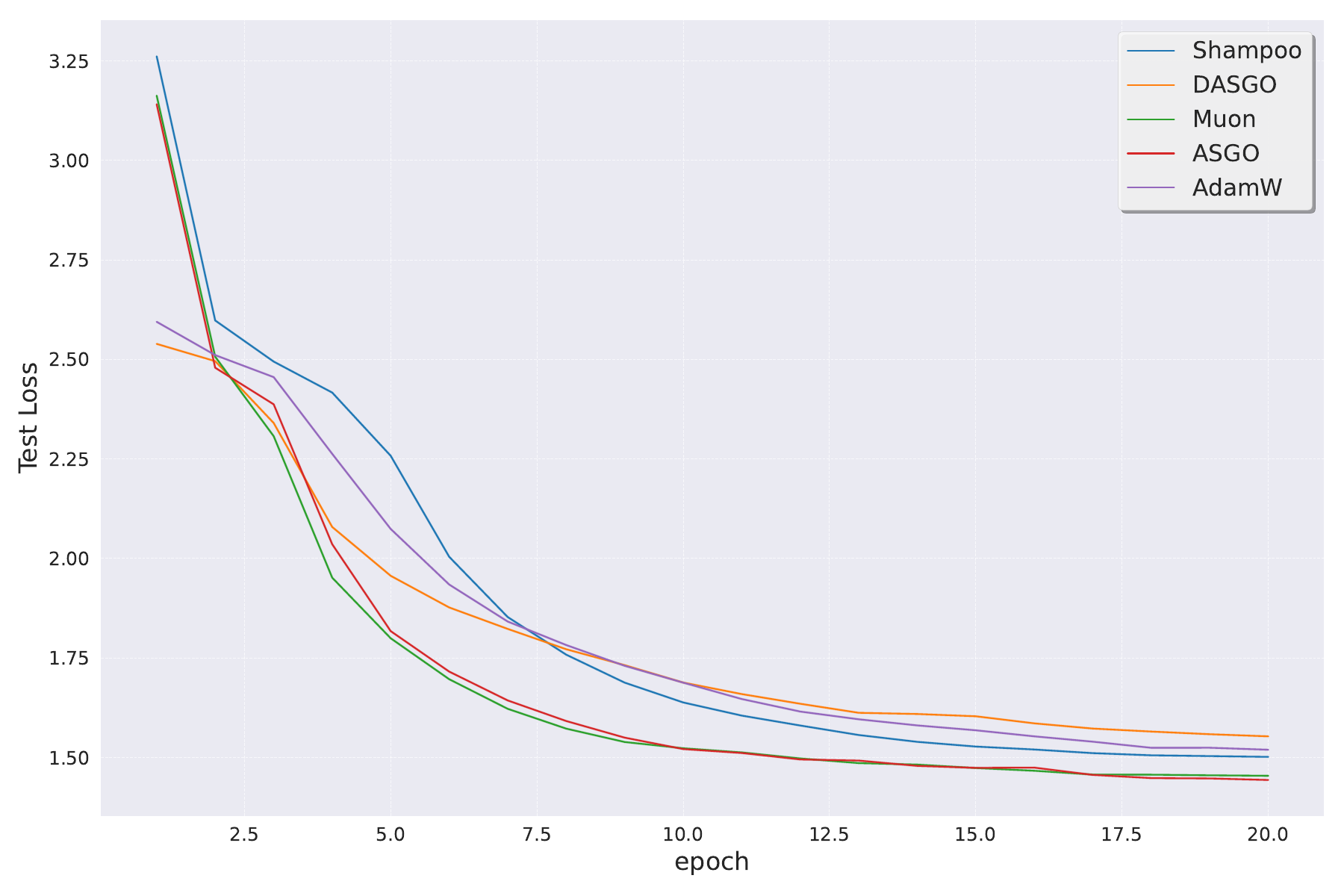}}\hfill
  \vspace{-0.3cm}
  \caption{Train Loss and Test Loss on the NanoGPT and Shakespeare Character Dataset}
  \label{fig:nanogpt_performance}
\end{figure}
To ensure optimal performance for each optimizer, we conducted a comprehensive HP search using random search~\citep{JMLR:bergstrarandomsearch, choi2019empirical} within the following predefined ranges: initial lr: $[10^{-5}, 10^{-1}]$; $\beta_1$: $[0.7, 0.99]$; $\beta_2$: $[0.7, 0.99]$; and selected the warmup factor from the set $[0.1,0.15,0.2,0.25,0.3]$. For the optimizers Shampoo and ASGO, we selected the update frequency $\tau$ from the set $[5, 10, 15]$ and for Shampoo, we selected its IO as either $-\frac{1}{2}$ or $-\frac{1}{4}$. HPs were selected based on validation loss after 20 epochs of training. For each optimizer, we performed a random search for 16 hours to find the optimal HP combination. These best HP choices for training NanoGPT are given in Table~\ref{table:nanogpt_hpselection}.
\begin{table}[h]\footnotesize
    \centering
    \caption{Hyperparameter selection for NanoGPT experiment}
    \begin{tabular}{lcccccc}
    \toprule
    \textbf{Optimizer} & \textbf{learning rate} & \textbf{$\beta_1$} & \textbf{$\beta_2$} & \textbf{warmup factor} & \textbf{update frequency} & \textbf{inverse order}\\
    \midrule
    Muon & 0.00349 & 0.9881 & N/A & 0.3 & N/A & N/A \\
    AdamW & 0.00450 & 0.9332 & 0.9528 & 0.2 & N/A & N/A \\
    DASGO & 0.060 & 0.9584 & 0.9435 & 0.2 & N/A & N/A \\
    Shampoo & 0.00593 & 0.9402 & 0.9760 & 0.3 & 15 & $-\frac{1}{4}$ \\
    ASGO & 0.01470 & 0.9541 & 0.8487 & 0.3 & 15 & N/A \\\hline
    \end{tabular}
    \label{table:nanogpt_hpselection}
\end{table}

\paragraph{Ablation Study: Impact of Special Query/Key Design}
{In Appendix~\ref{appendix:discussions}, we highlighted the computational benefits of specialized processing for Query and Key matrices in Transformer architectures. To validate these theoretical advantages empirically, we conducted an ablation study comparing the performance and training stability of ASGO and Muon on pretraining NanoGPT, both with and without the specialized Query-Key processing. We maintained the optimal HP configuration established in Table~\ref{table:nanogpt_hpselection} for all parameters except the learning rate. For each algorithm variant (with and without specialized processing), we randomly sampled learning rates from a log-uniform distribution ranging from $10^{-5}$ to $10^{-1}$. Each configuration was trained for 5 epochs on the NanoGPT model, after which we recorded the validation loss. This approach allowed us to evaluate both performance and robustness to learning rate selection. Figure~\ref{fig:design_ablation} presents the distribution of validation losses after 5 epochs for both algorithms. The violin plots\footnote{{Violin plot outlines depict empirical probability density; i.e., the width of the shaded area represents the proportion of the data located there. Box plots within a violin plot display the median and inter-quartile range.}} reveal several key insights: (1) For ASGO (Figure~\ref{fig:design_ablation}a), the benefits of specialized Query-Key processing are substantial. The specialized implementation demonstrates a narrower, lower distribution of validation losses (centered around 2.5), indicating greater stability across different learning rates. In contrast, the vanilla implementation shows a long-tailed distribution extending beyond loss values of 6.0, with several outliers representing training instability at certain learning rates. (2) For Muon (Figure~\ref{fig:design_ablation}b), both implementations demonstrate nearly identical performance distributions, with their median validation losses and distribution shapes showing minimal differences. However, the specialized implementation achieves this comparable performance while reducing memory requirements and computational complexity. This represents a clear efficiency advantage -- the specialized Query-Key processing effectively provides computational savings with no performance penalty.}

\begin{figure}[h]
\vspace{-0.3cm}
    \centering
    \subfigure[ASGO ]{\includegraphics[width=0.45\textwidth]{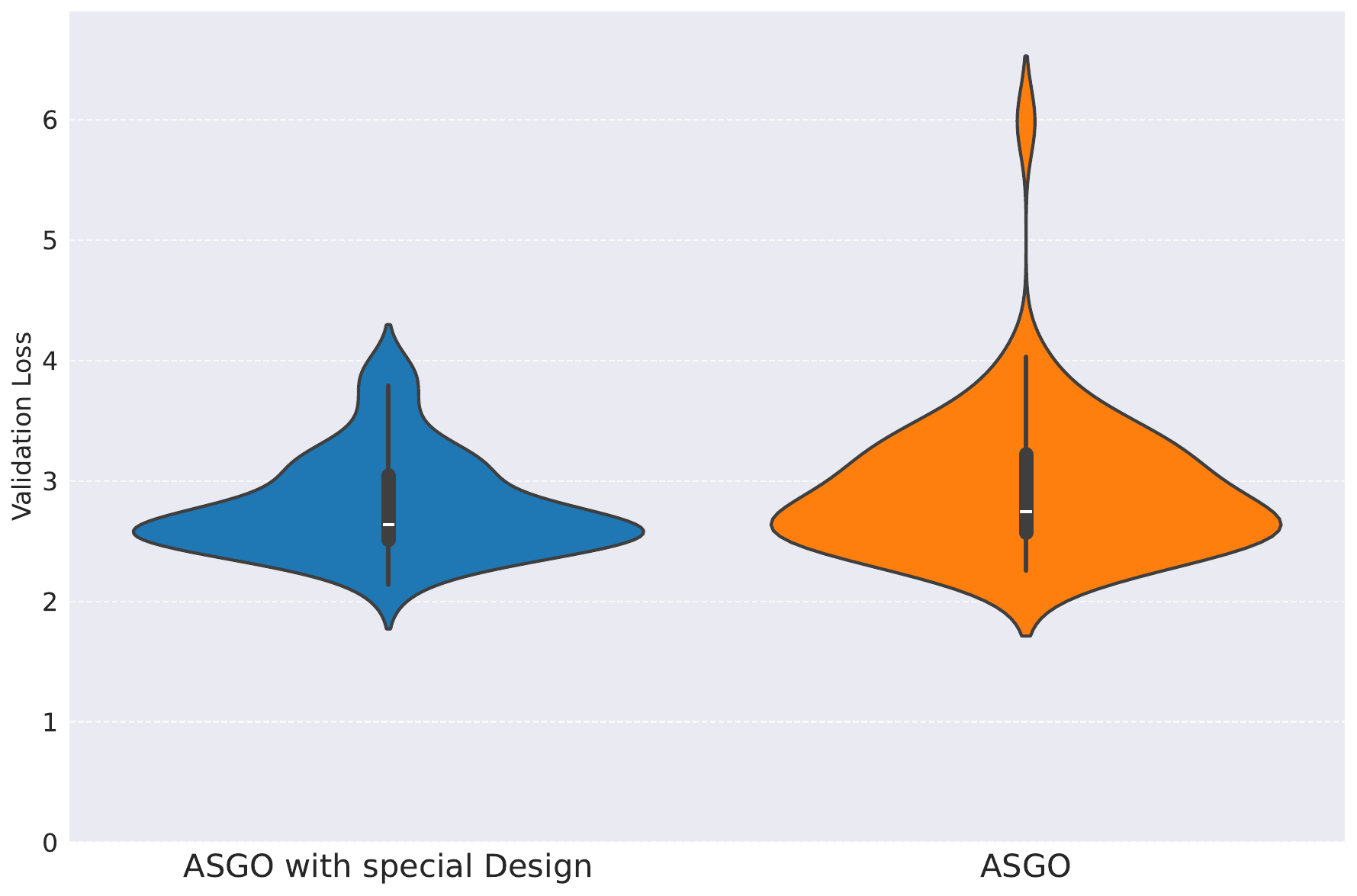}}\hfill
    \subfigure[Muon]
    {\includegraphics[width=0.45\textwidth]{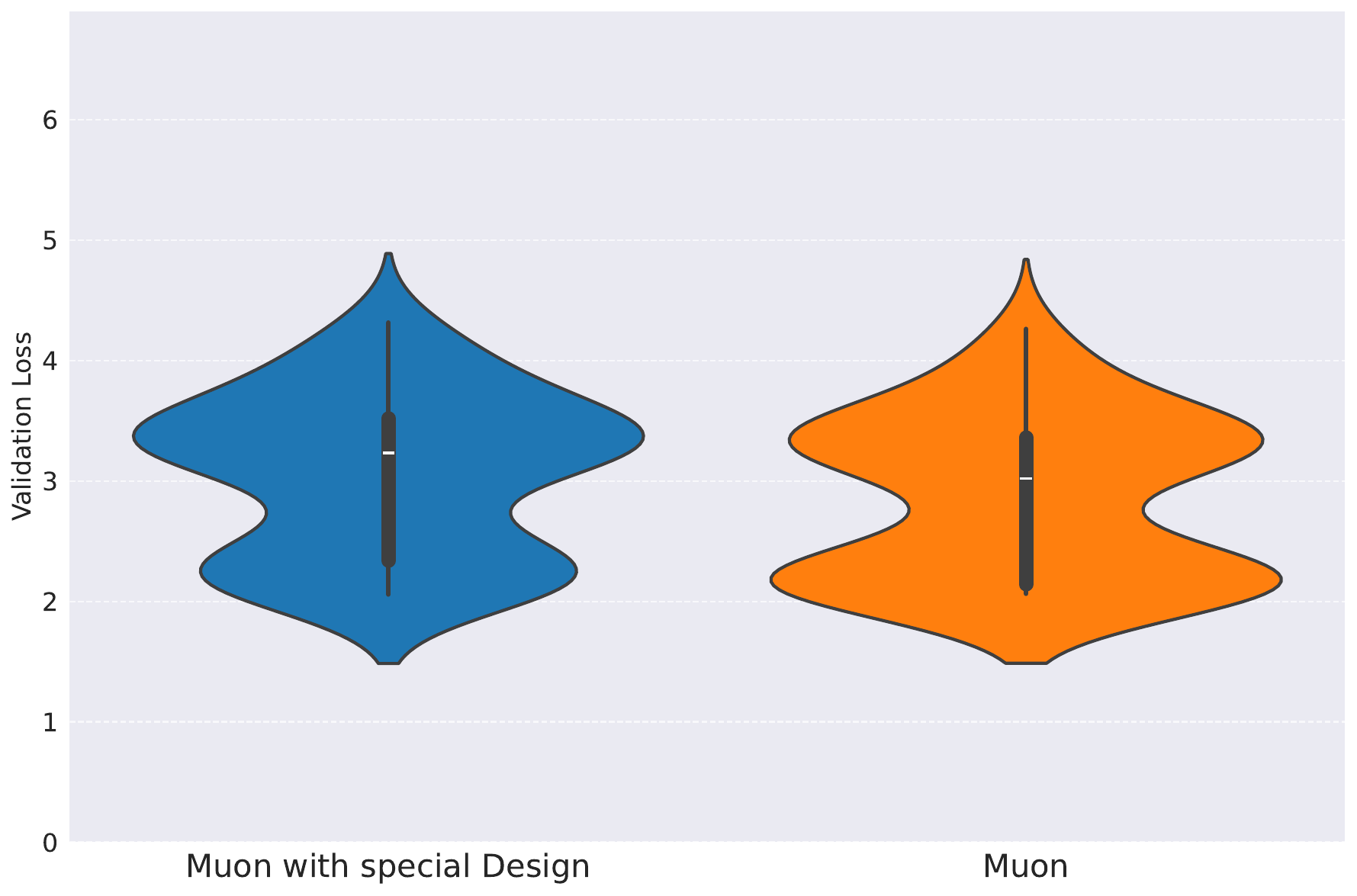}}\hfill
  \vspace{-0.3cm}
  \caption{Distribution of validation losses after 5 epochs with varying learning rates.}
  \label{fig:design_ablation}
\end{figure}

\subsection{Pretraining GPT2}
\paragraph{Experimental Setup: }
The model configuration consists of 12 Transformer layers, 12 attention heads, and an embedding dimension of 768. The total number of training steps was 2400, with a batch size per GPU of 64. The model was trained using Distributed Data Parallel (DDP) on 4 NVIDIA GH200 GPUs, employing gradient accumulation over 8 steps. To ensure a fair comparison across all optimization algorithms, a consistent learning rate schedule was utilized: a linear warmup for the first 200 steps, followed by a cosine annealing decay to a final learning rate of $1\times 10^{-5}$ for the remainder of the training process. Training was conducted on the OpenWebText dataset with a sequence length of 512 tokens. For Shampoo optimizer, we used the Adam update grafting mentioned in \citep{shi2023distributed, eschenhagen2025purifyingshampooinvestigatingshampoos}. For Muon, we used the Adam update RMS norm alignment in \citep{liu2025muon}. We ran both the Newton-Schultz and the PolarExpress algorithms for 10 steps to compute the inverse square root of $V_t$ in the ASGO optimizer using the following parameters:

\begin{itemize}
    \item Newton-Schultz: $(a,b,c) = (2, -1.5, 0.5)$
    \item PolarExpress: 
    
    coeffs\_list = [ \\
    (8.28721201814563, -23.595886519098837, 17.300387312530933), \\
    (4.107059111542203, -2.9478499167379106, 0.5448431082926601), \\
    (3.9486908534822946 , -2.908902115962949, 0.5518191394370137), \\
    (3.3184196573706015 , -2.488488024314874, 0.51004894012372), \\
    (2.300652019954817, -1.6689039845747493, 0.4188073119525673), \\
    (1.891301407787398, -1.2679958271945868, 0.37680408948524835), \\
    (1.8750014808534479 , -1.2500016453999487, 0.3750001645474248), \\
    (1.875, -1.25, 0.375), \\
    (1.875, -1.25, 0.375), \\
    (1.875, -1.25, 0.375)]  
\end{itemize}

\paragraph{Hyperparameter Tuning:}
To ensure optimal performance for each optimizer, we conducted a hyperparameter search using grid search. The search space for learning rate (lr) and 2nd momentum $\beta_2$ (where applicable for the optimizer) included the following values:
\begin{itemize}
\item Learning rate (lr): $[1\times10^{-5},1\times10^{-4}, 5\times10^{-4}, 1\times 10^{-3}, 5\times10^{-3}, 1\times10^{-2}, 5\times10^{-2}, 0.1]$
\item $\beta_2$: [0.7,0.8,0.9,0.95,0.99]
\end{itemize}
Optimal hyperparameters were selected based on the validation loss achieved after 1000 training steps. The best hyperparameter choices for pretraining GPT-2 are presented in Table~\ref{table:gpt2_hpselection}.
\begin{table}[h]
\footnotesize
\centering
\caption{Optimal hyperparameter selection for pretraining GPT-2.}
\label{table:gpt2_hpselection}
\begin{tabular}{lcccccc}
\toprule
\textbf{Optimizer} & \textbf{Learning Rate} & \textbf{$\beta_1$} & \textbf{$\beta_2$} & \textbf{Weight Decay} & \textbf{Damping $\epsilon$} & \textbf{Update Freq. ($\tau$)} \\
\midrule
Muon    & 0.01   & 0.9 & N/A  & 0.1 & N/A    & N/A \\
AdamW   & 0.005  & 0.9 & 0.99 & 0.1 & $1 \times 10^{-8}$ & N/A \\
Shampoo &  0.005 & 0.9 & 0.99 & 0.1 & $1 \times 10^{-10}$ & 1 \\
DASGO   & 0.01   & 0.9 & 0.9 & 0.1 & $1 \times 10^{-8}$ & N/A \\
ASGO    & 0.01    & 0.9 & 0.8 & 0.1 &  $1 \times 10^{-10}$ & 1   \\
\bottomrule
\end{tabular}
\end{table}
\begin{figure}[t]
\vspace{-0.5cm}
    \centering
    \subfigure[Train Loss]{\includegraphics[width=0.45\textwidth]{images/GPT2_trainloss.pdf}}\hfill
    \subfigure[Test Loss]{\includegraphics[width=0.45\textwidth]{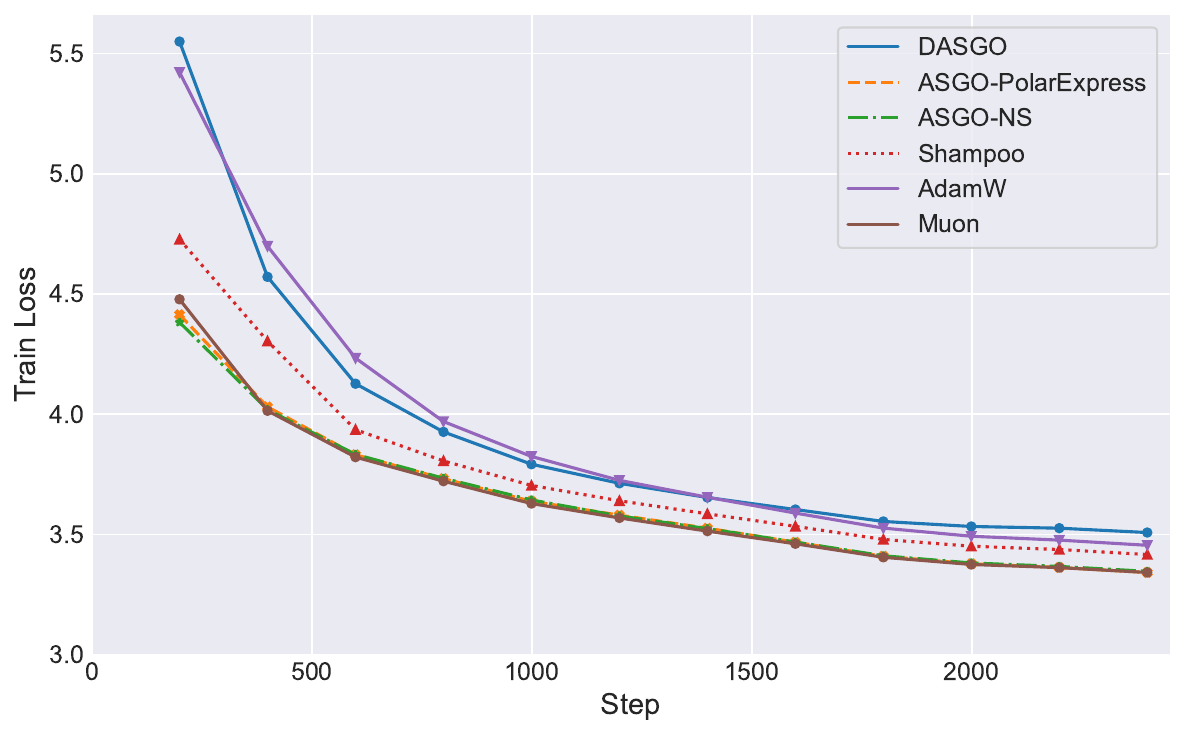}}\hfill
  \vspace{-0.3cm}
  \caption{Pretraining Performance on GPT2}
  \vspace{-0.5cm}
  \label{fig:GPT2_performance}
\end{figure}
\subsection{Finetune GPT2-Large}
\paragraph{Experimental Setup:}
We fine-tuned the GPT-2 Large model on the WikiText-2 dataset. We utilized the GPT-2 Large model, which comprises approximately 774 million parameters. This model is a transformer-based language model pretrained on English text using a Causal Language Modeling (CLM) objective, initially developed by OpenAI and was accessed via the Hugging Face Hub. The fine-tuning was performed on the WikiText-2 dataset, a standard benchmark for evaluating language model performance. The dataset was loaded directly using standard library functions. We investigated two distinct fine-tuning objectives:
\begin{itemize}[leftmargin=*]
    \item Causal Language Modeling (CLM): The conventional autoregressive language modeling task.
    \item Fill-in-the-Middle (FIM): Adopting the methodology from Bavarian et al. (2022). This objective trains the model to infill masked text spans within a document.
\end{itemize}
All models were fine-tuned for a total of 2 epochs. For the learning rate, we employed a cosine annealing decay schedule over the course of these 2 epochs, with the learning rate decaying to 0 by the end of training. No warmup phase was used for the fine-tuning learning rate schedule. The fine-tuning process used a batch size of 16 and a sequence length of 128 tokens. For each optimizer evaluated (AdamW, Muon, ASGO, and DASGO), all hyperparameters, with the exception of the learning rate, were maintained at the same values used in our GPT-2 pretraining experiments (as detailed in Table~\ref{table:gpt2_hpselection}).
\paragraph{Hyperparameter Tuning:}
For the fine-tuning experiments on WikiText-2, our hyperparameter tuning focused exclusively on identifying the optimal learning rate for each optimizer under both the CLM and FIM objectives. We performed a grid search over a predefined set of learning rate candidates: $[1 \times 10^{-6}, 5\times 10^{-5}, 1\times 10^{-4}, 5\times 10^{-4}, 1\times 10^{-3}, 5\times 10^{-3}]$. The optimal learning rate for each optimizer and fine-tuning objective combination was determined by selecting the learning rate that yielded the lowest validation perplexity on the WikiText-2 validation set after the full 2 epochs of fine-tuning. Table \ref{table:finetune_optimal_lr} shows the optimal learning rate.
\begin{table*}[h!]
    \centering
    \caption{Optimal Learning rate from Finetuning GPT2-Large}
    \begin{tabular}{l|cccc}
        \toprule
        & AdamW & Muon & ASGO & DASGO \\
        \midrule
        CLM & $5\times 10^{-5}$ & $5 \times 10^{-4}$ & $1 \times 10^{-3}$ & $1 \times 10^{-3}$ \\
        FIM & $1 \times 10^{-4}$ & $5 \times 10^{-4}$ & $1 \times 10^{-3}$ & $1 \times 10^{-3}$ \\
        \bottomrule
    \end{tabular}
    \label{table:finetune_optimal_lr}
\end{table*}

Table \ref{table:Full Finetune Result} presents the average perplexity results obtained after fine-tuning the GPT2-Large (774M) model for 2 epochs on the WikiText-2 dataset. The $\pm$ indicates the 95\% confidence intervals of the mean perplexity value over these 5 runs, starting from different random seeds.
\begin{table}[h]
    \centering
    \caption{Finetuning GPT2-Large Perplexity}
    \begin{tabular}{l|cccc}
        \toprule
        & AdamW & Muon & ASGO & DASGO \\
        \midrule
        CLM  & 14.010 $\pm$ 0.016 & 13.912 $\pm$ 0.003 & 13.879 $\pm$ 0.010 & 13.841 $\pm$ 0.019 \\
        FIM  & 17.458 $\pm$ 2.254 & 15.933 $\pm$ 0.102 & 15.661 $\pm$ 0.097 & 15.451 $\pm$ 0.072 \\
        \bottomrule
    \end{tabular}
    \label{table:Full Finetune Result}
\end{table}
\paragraph{Damping parameter $\epsilon$ sensitivity study:}
In this section, we discuss the practical differences between ASGO and Muon. Theoretically, as discussed in Section \ref{sec:discussions}, ASGO can be viewed as an adaptive extension of Muon. Specifically, ASGO's update rule degenerates to that of Muon in the absence of momentum and adaptive scaling (i.e., when $\beta_1 = 0$ and $\beta_2 = 0$).

Moreover, if we modify the formulas for $V_t$ and $W_{t+1}$ in ASGO (Algorithm \ref{alg:practical_asgo}) to:
\begin{equation}
    \begin{aligned}
        & \text{if } m < n \text{ then } V_t=\beta_2 V_{t-1}+\left(1-\beta_2\right) M_t M_t^{\top} \\
        & \text{else } V_t=\beta_2 V_{t-1}+\left(1-\beta_2\right) M_t^{\top} M_t \\
        & \text{if } m < n \text{ then } W_{t+1}=W_t-\eta_t (V_t + \epsilon I)^{-\frac{1}{2}} M_t \\
        & \text{else } W_{t+1}=W_t-\eta_t M_t(V_t + \epsilon I)^{-\frac{1}{2}} \\
    \end{aligned}
\end{equation}
and set $\beta_2 = 0$ ASGO becomes identical to Muon, even with momentum. However, in practical training implementations, a subtle yet crucial difference exists between ASGO and Muon. This distinction arises because ASGO applies an additional preconditioner with $\epsilon$ damping to the momentum term, a step not present in the standard Muon update.

To precisely study these differences, we set up an experiment using the Muon optimizer as the background optimizer to train a GPT2-model on the NanoGPT dataset for 1000 steps. At each step, we computed and compared the Muon and ASGO update directions $\Delta W$ in the embedding, query, key, value, and MLP layers. Specifically, we computed:

\begin{itemize}
    \item \textbf{Standard Muon Update Direction (SVD-based)}: $\Delta W^{\text{Muon}}_{\text{SVD}} = U_M V_M^T$, where $U_M$ and $V_M$ are the left and right singular vectors obtained from the first-order momentum $M$.
    \item \textbf{Modified ASGO Update Direction (SVD-based, $\epsilon = 0$)}: $\Delta W^{\text{ASGO}}_{\text{SVD}, \epsilon=0} = V_t^{-\frac{1}{2}} M_t$, where $V_t^{-\frac{1}{2}}$ is computed via standard Singular Value Decomposition (SVD).
    \item \textbf{Modified ASGO Update Direction (SVD-based, $\epsilon = 10^{-8}$)}: $\Delta W^{\text{ASGO}}_{\text{SVD}, \epsilon=10^{-8}} = (V_t + 10^{-8} I)^{-\frac{1}{2}} M_t$, where $(V_t + 10^{-8} I)^{-\frac{1}{2}}$ is computed via standard SVD.
    \item \textbf{Modified ASGO Update Direction (Coupled Newton, $\epsilon = 10^{-8}$)}: $\Delta W^{\text{ASGO}}_{\text{CN}, \epsilon=10^{-8}} = (V_t + 10^{-8} I)^{-\frac{1}{2}} M_t$, where $(V_t + 10^{-8} I)^{-\frac{1}{2}}$ is computed using the Coupled Newton algorithm for 50 iteration steps. \cite{0c2c071b333446c49fabe4578a9776b8, shi2023distributed}
    \item \textbf{Modified ASGO Update Direction (Newton-Schulz, $\epsilon = 10^{-8}$)}: $\Delta W^{\text{ASGO}}_{\text{NS}, \epsilon=10^{-8}} = (V_t + 10^{-8} I)^{-\frac{1}{2}} M_t$, where $(V_t + 10^{-8} I)^{-\frac{1}{2}}$ is computed using the Newton-Schulz algorithm for 50 iteration steps.\footnote{The Newton-Schulz algorithm is derived from Algorithm 6.35 on Page 153 in \cite{0c2c071b333446c49fabe4578a9776b8}. We utilized the quintic version with parameters $(a,b,c) = (2, -1.5, 0.5)$ as described in \cite{jordan2024muon}.} \cite{0c2c071b333446c49fabe4578a9776b8, bernstein2024old, jordan2024muon}
\end{itemize}
We then computed the cosine similarity\footnote{$\operatorname{Cosine Similarity}(A,B) = \frac{\operatorname{Tr}(A^T B)}{\|A\|_F\|B\|_F}$} between each of the four modified ASGO update directions and the standard Muon update direction $\Delta W^{\text{Muon}}_{\text{SVD}}$, and plotted these similarities in Figure \ref{fig:similarity}.
\begin{figure}[t]
\vspace{-0.3cm}
    \centering
    \begin{tabular}{@{}c@{\hspace{0.02\textwidth}}c@{}}
        \subfigure[$\operatorname{Cosine Similarity}(\Delta W^{\text{Muon}}_{\text{SVD}}, \Delta W^{\text{ASGO}}_{\text{SVD}, \epsilon=0})$]{\includegraphics[width=0.48\textwidth]{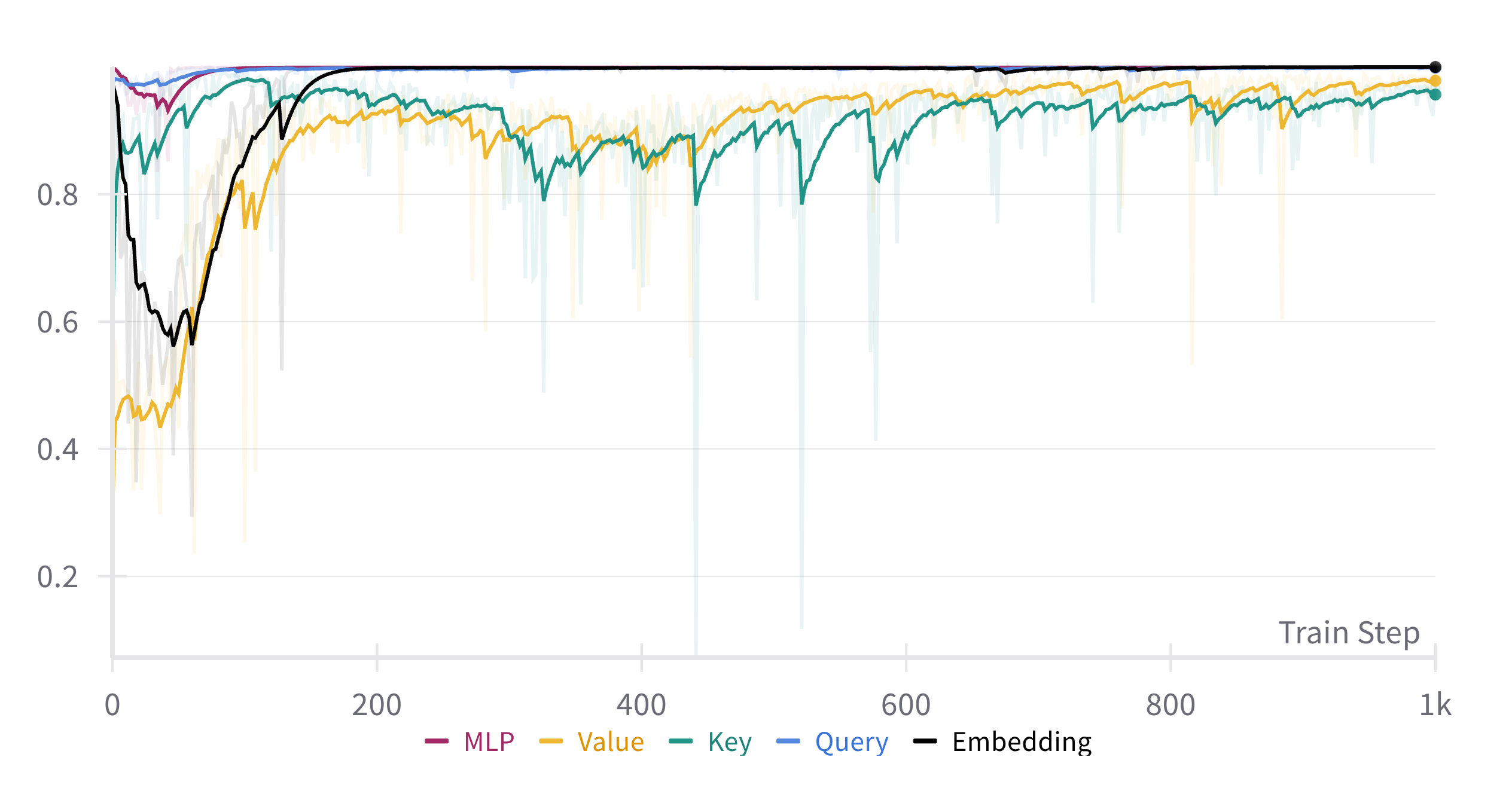}} &
        \subfigure[$\operatorname{Cosine Similarity}(\Delta W^{\text{Muon}}_{\text{SVD}}, \Delta W^{\text{ASGO}}_{\text{SVD}, \epsilon=10^{-8}})$]{\includegraphics[width=0.48\textwidth]{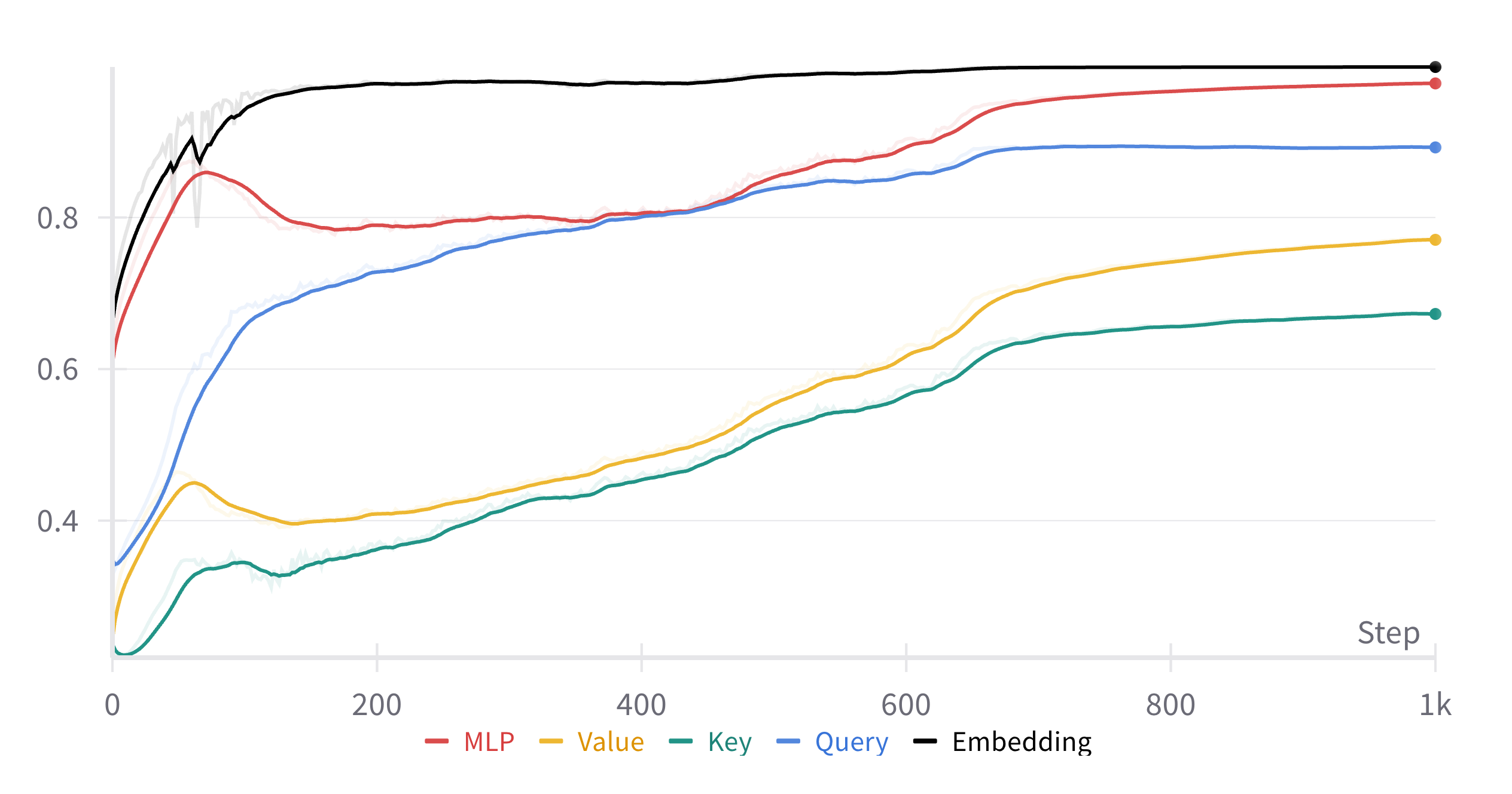}} \\
        \subfigure[$\operatorname{Cosine Similarity}(\Delta W^{\text{Muon}}_{\text{SVD}}, \Delta W^{\text{ASGO}}_{\text{CN}, \epsilon=10^{-8}})$]{\includegraphics[width=0.48\textwidth]{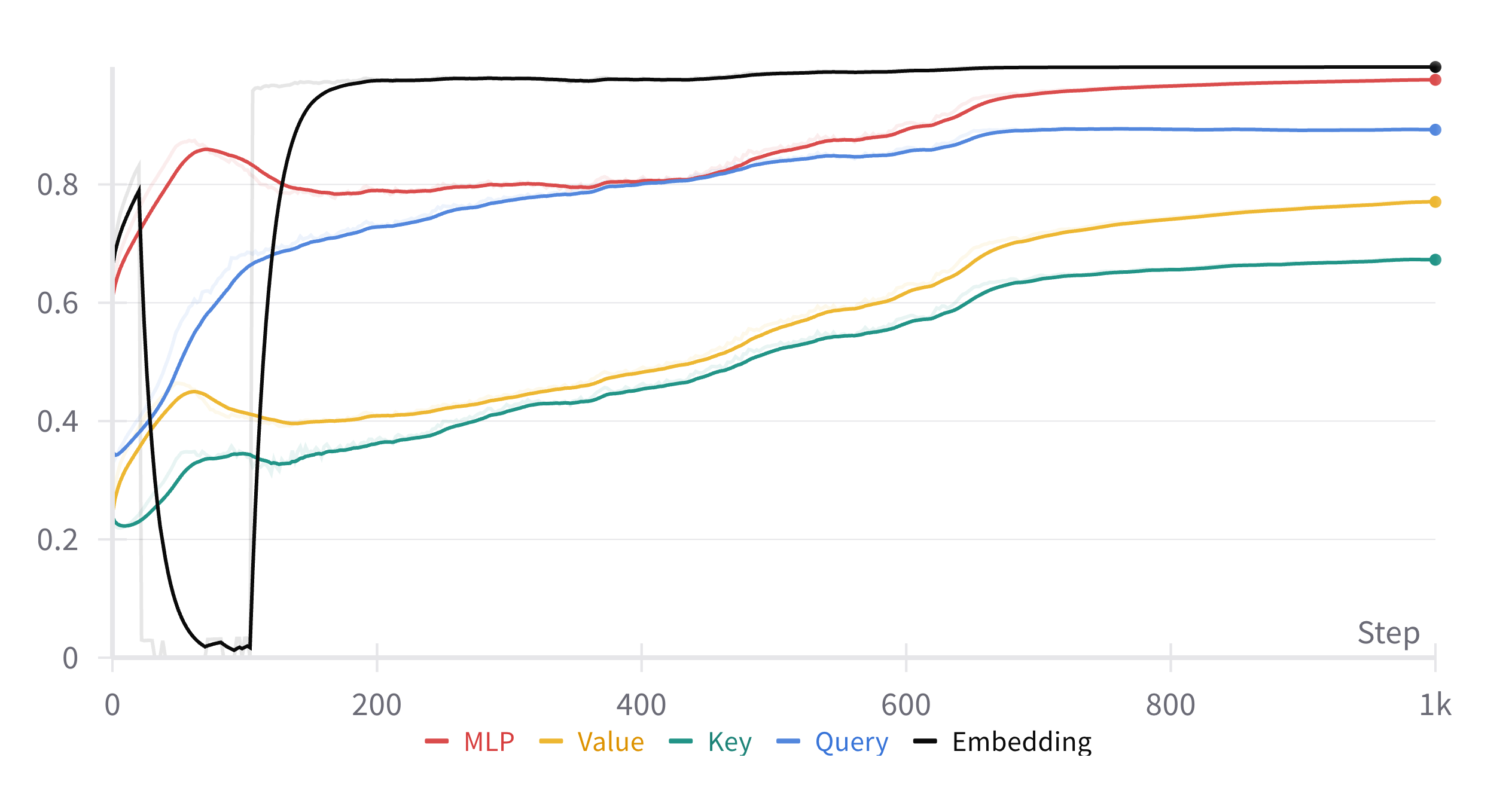}} &
        \subfigure[$\operatorname{Cosine Similarity}(\Delta W^{\text{Muon}}_{\text{SVD}}, \Delta W^{\text{ASGO}}_{\text{NS}, \epsilon=10^{-8}})$]{\includegraphics[width=0.48\textwidth]{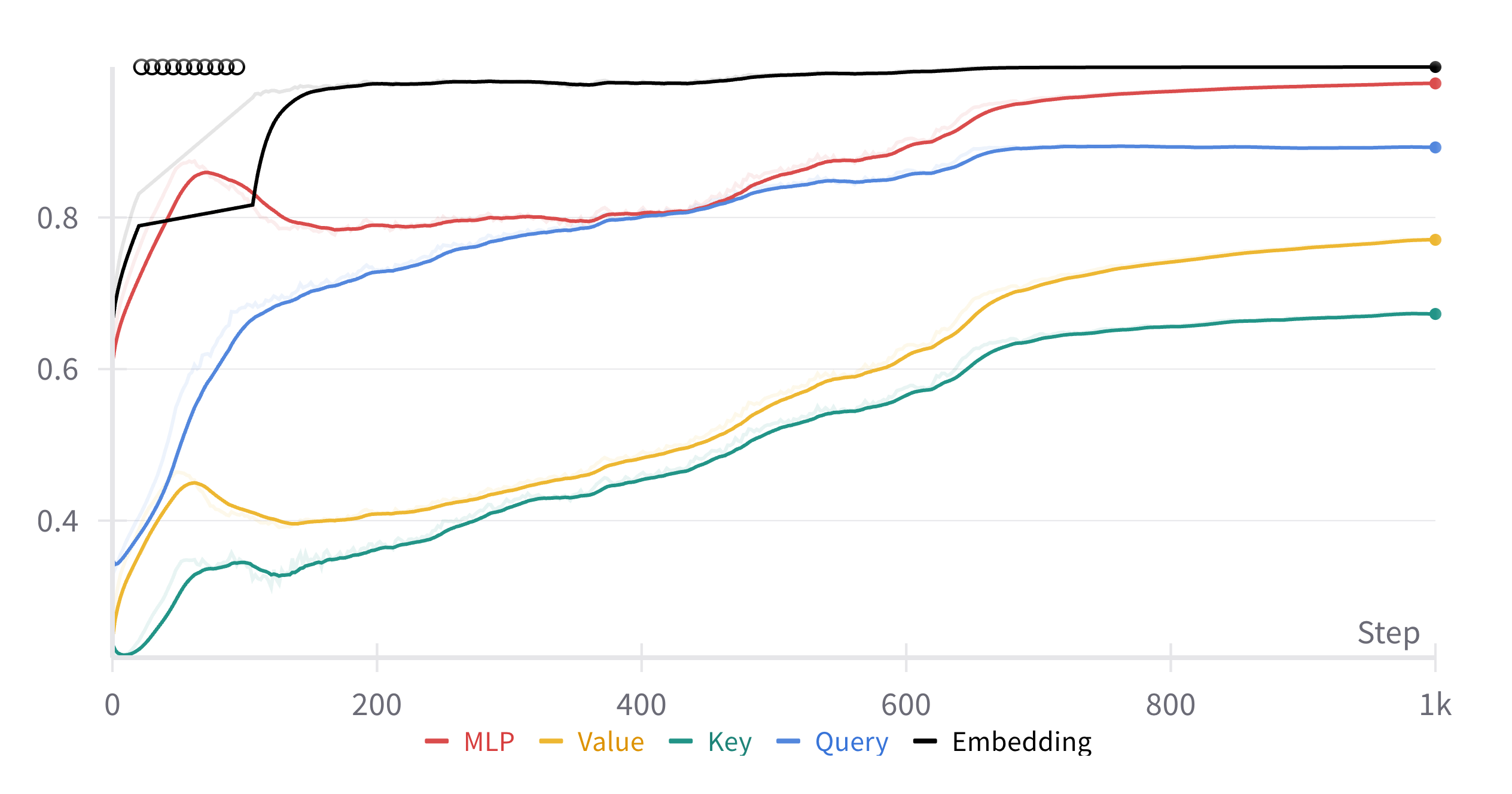}} \\
    \end{tabular}
    \vspace{-0.3cm}
    \caption{Comparison of Cosine Similarities (CSs) between the Muon update direction ($\Delta W^{\text{Muon}}_{\text{SVD}}$) and various modified ASGO update directions in different layers (MLP, Value, Key, Query, Embedding) during 1000 steps of GPT2 pretraining. Subfigures (a) and (b) plot CSs for SVD-based ASGO update directions with $\epsilon=0$ and $\epsilon=10^{-8}$, respectively. Subfigure (c) and (d) plot CSs for Coupled Newton-based and Newton-Schulz-based, respectively, ASGO update directions with $\epsilon=10^{-8}$.}
    \label{fig:similarity}
\end{figure}

Unlike AdamW, ASGO exhibits sensitivity to the choice of the $\epsilon$ damping parameter. As depicted in Figure \ref{fig:similarity}(a), when no damping term is used ($\epsilon = 0$), ASGO's update direction, though exhibiting some instability, largely reconstructs Muon's update direction, with cosine similarities generally exceeding 0.8, particularly for the MLP and Embedding layers, as training progresses. However, the introduction of a small damping hyperparameter ($\epsilon = 10^{-8}$) significantly alters these similarities. Figure \ref{fig:similarity}(b) shows that even with precise SVD computation, the similarities for layers like Value and Key only reach approximately 0.7 gradually. This suggests that the small $\epsilon$ value can introduce and accumulate errors during training, potentially leading to a degradation in optimizer performance, a behavior not typically observed in other common adaptive methods like AdamW. This phenomenon highlights the critical importance of small singular values in the preconditioning process, as applying a damping term can disproportionately affect them, thereby altering the matrix multiplication and amplifying errors. To circumvent this sensitivity, we opted to use an $\epsilon$ value of 0 in our large-scale GPT2 pretraining experiments described in Section \ref{sec:pretrainGPT2}. In this setting, we directly compute the SVD of the preconditioning matrix and its inverse square root. The results presented in Figure \ref{fig:GPT2_performance} indeed demonstrate that ASGO achieves performance highly similar to Muon.

In Figure \ref{fig:similarity}(c) and (d), we explore the use of more computationally efficient methods, Coupled Newton (CN) and Newton-Schulz (NS), respectively, to approximately compute $V_t^{-\frac{1}{2}}$ as alternatives to SVD. The results show that, aside from some numerical stability issues observed in the Embedding layer for both CN and NS at certain steps—which is reasonable given that the sparse nature of embedding layers often requiring larger damping parameters—the cosine similarities for the remaining parameters are highly comparable to those obtained with SVD. This demonstrates the feasibility of employing efficient matrix algorithms to replace SVD, thereby enhancing ASGO's computational efficiency. This direction represents one of our promising avenues for future research.

\section{Auxiliary Lemmas for the Proof}
\begin{lemma}[Trace properties]\label{lem:trace_properties}
    For arbitrary matrices $A \in \RR^{m\times n}$, $B \in \RR^{n\times m}$, $X,Y \in \RR^{m\times m}$, we have the following basic properties for the trace:
    \begin{enumerate}
        \item $\tr{X} = \tr{X^\top}$;
        \item $\tr{AB} = \tr{B A}$;
        \item if $X$ is symmetric positive semidefinite, $\tr{X} = \Norm{X}_* \ge 0$;
        \item if $X, Y \succeq 0$ and $X$ is symmetric, $\tr{XY} \ge 0$.
    \end{enumerate}
\end{lemma}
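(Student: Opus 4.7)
These are four standard identities/inequalities about the trace; the plan is to verify each one directly from the definition $\tr{X}=\sum_i X_{ii}$, using only elementary linear algebra and (for the last two) the spectral theorem for symmetric positive semidefinite matrices.

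For property 1, I would simply observe that $(X^\top)_{ii}=X_{ii}$ for every $i$, so summing over $i$ gives $\tr{X^\top}=\sum_i X_{ii}=\tr{X}$. For property 2, I would expand the diagonal entries of the products as double sums and swap the order of summation:
\begin{align*}
\tr{AB}=\sum_{i=1}^m (AB)_{ii}=\sum_{i=1}^m\sum_{j=1}^n A_{ij}B_{ji}=\sum_{j=1}^n\sum_{i=1}^m B_{ji}A_{ij}=\sum_{j=1}^n (BA)_{jj}=\tr{BA}.
\end{align*}
This is the cyclic property, which will be reused below.

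For property 3, since $X$ is symmetric positive semidefinite, the spectral theorem yields an orthogonal $U$ and a diagonal $\Lambda=\operatorname{diag}(\lambda_1,\dots,\lambda_m)$ with $\lambda_i\ge 0$ such that $X=U\Lambda U^\top$. By the cyclic property (property 2) and orthogonality of $U$, $\tr{X}=\tr{U\Lambda U^\top}=\tr{\Lambda U^\top U}=\tr{\Lambda}=\sum_i\lambda_i\ge 0$. Moreover, for symmetric positive semidefinite $X$ the singular values coincide with the eigenvalues $\lambda_i$, so $\Norm{X}_*=\sum_i \sigma_i(X)=\sum_i\lambda_i=\tr{X}$, finishing the identity.

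For property 4, I would again diagonalize $X=U\Lambda U^\top$ with $\lambda_i\ge 0$, and then use property 2 to rewrite
\begin{align*}
\tr{XY}=\tr{U\Lambda U^\top Y}=\tr{\Lambda U^\top Y U}=\sum_{i=1}^m \lambda_i\,(U^\top Y U)_{ii}.
\end{align*}
Each diagonal entry $(U^\top Y U)_{ii}=u_i^\top Y u_i\ge 0$ because $Y\succeq 0$ in the Löwner order (so $v^\top Y v\ge 0$ for all $v$), and $\lambda_i\ge 0$, giving a sum of non-negative terms. The only subtle point — and the one I would write most carefully — is this last step: one must be sure to expand $\tr{XY}$ against the eigenbasis of the symmetric factor $X$ (not $Y$), since the hypothesis only requires $X$ to be symmetric. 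Everything else is bookkeeping.
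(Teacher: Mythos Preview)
Your proof is correct; each of the four verifications is sound, and your handling of property~4 --- diagonalizing the symmetric factor $X$ and using only the quadratic-form inequality $u_i^\top Y u_i\ge 0$ from $Y\succeq 0$ --- is exactly the care required given that the lemma does not assume $Y$ symmetric. The paper itself does not give a proof at all: it simply remarks ``These are standard trace properties'' and moves on, so you have supplied the elementary details the paper chose to omit.
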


The following lemma notes the operator monotonicity of the power functions, which is a classic result~\citep{lowner1934monotone,ando2004geometric,gupta2018shampoo}.
\begin{lemma}\label{lem:matrix_operator_monotone}
    The function $f:x \to x^\alpha$ with $\alpha \in [0,1]$ is operator-monotone, i.e. if $0 \preceq A \preceq B$, it holds that $A^\alpha \preceq B^\alpha$.
\end{lemma}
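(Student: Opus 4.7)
The plan is to follow the classical integral-representation approach. The key fact, purely scalar, is that for $\alpha\in(0,1)$ and $x>0$,
\begin{equation*}
    x^\alpha = \frac{\sin(\pi\alpha)}{\pi}\int_0^\infty \frac{x}{x+t}\,t^{\alpha-1}\,dt,
\end{equation*}
which turns $x\mapsto x^\alpha$ into a nonnegative weighted superposition of the much simpler rational functions $g_t(x)=x/(x+t)$. The strategy is to prove operator monotonicity of each $g_t$ by a one-line manipulation of matrix inverses, lift the scalar identity to symmetric matrices via the spectral theorem, and then conclude by integrating.

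First I would dispose of the boundary cases: $\alpha=0$ gives the constant map (trivial) and $\alpha=1$ is the identity (trivial). So fix $\alpha\in(0,1)$.

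Next I would establish two building blocks. (i) \emph{Inversion is operator-antitone on positive definite matrices:} if $0\prec A\preceq B$, then conjugating by $A^{-1/2}$ gives $I\preceq A^{-1/2}BA^{-1/2}$, and since scalar inversion is decreasing on $[1,\infty)$ we obtain $A^{1/2}B^{-1}A^{1/2}\preceq I$, hence $B^{-1}\preceq A^{-1}$. (ii) \emph{Each $g_t$ is operator monotone on PSD matrices:} writing $g_t(X)=I-t(X+tI)^{-1}$, the hypothesis $A\preceq B$ yields $A+tI\preceq B+tI$ (both $\succ 0$), so (i) gives $(A+tI)^{-1}\succeq(B+tI)^{-1}$, and therefore $g_t(A)\preceq g_t(B)$.

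Then I would lift the scalar identity to symmetric matrices. For $A\succ 0$ with spectral decomposition $A=U\,\mathrm{diag}(\lambda_i)\,U^\top$, applying the scalar identity coordinatewise yields
\begin{equation*}
    A^\alpha \;=\; \frac{\sin(\pi\alpha)}{\pi}\int_0^\infty A(A+tI)^{-1}\, t^{\alpha-1}\, dt,
\end{equation*}
since $U$ diagonalizes every $A+tI$ simultaneously. The singular case $A\succeq 0$ is handled by the perturbation $A_\delta=A+\delta I$ and continuity as $\delta\downarrow 0$. Finally, for $0\preceq A\preceq B$,
\begin{equation*}
    B^\alpha-A^\alpha \;=\; \frac{\sin(\pi\alpha)}{\pi}\int_0^\infty \bigl[g_t(B)-g_t(A)\bigr]\, t^{\alpha-1}\, dt \;\succeq\; 0,
\end{equation*}
because the integrand is PSD pointwise (by (ii)) and the weight $t^{\alpha-1}$ is nonnegative, and the set of PSD matrices is a closed convex cone stable under integration of continuous integrands.

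The main obstacle is the careful justification of the operator-valued integral: verifying that the integrand is integrable near $t=0$ and $t=\infty$ (using $\|g_t(A)\|_{\mathrm{op}}\le 1$ and $\|g_t(A)\|_{\mathrm{op}}\le \|A\|_{\mathrm{op}}/t$) and that the PSD cone is preserved by Riemann/Lebesgue integration against the positive measure $t^{\alpha-1}dt$. Everything else is routine once the scalar Pick-type representation is granted, which is a standard contour computation.
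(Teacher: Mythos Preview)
The paper does not actually prove this lemma: it simply states it as a classical result and cites \citet{lowner1934monotone,ando2004geometric,gupta2018shampoo}. Your proposal supplies a full, correct proof via the standard L\"owner--Heinz route: the scalar integral representation $x^\alpha=\frac{\sin(\pi\alpha)}{\pi}\int_0^\infty \frac{x}{x+t}\,t^{\alpha-1}\,dt$, operator monotonicity of each $g_t(X)=I-t(X+tI)^{-1}$ (from antitonicity of the inverse), and integration against the positive weight. This is exactly the classical argument underlying the cited references, so there is nothing to compare---you have filled in what the paper deliberately omitted.
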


\begin{lemma}\label{lem:square_trace_sum_2}
    For symmetric positive definite matrices $X,Y\in \RR^{m\times m}$, it holds that
    \begin{align*}
        \tr{(X+Y)^{\frac{1}{2}} } \le \tr{X^{\frac{1}{2}} + Y^{\frac{1}{2}}} .
    \end{align*}
\end{lemma}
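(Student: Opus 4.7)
The plan is to prove the inequality by reinterpreting $\tr{(X+Y)^{1/2}}$ as the trace (Schatten $1$-) norm of an auxiliary tall block matrix, and then invoking the triangle inequality for that norm. Note that a direct attempt to deduce the bound from an operator inequality of the form $(X+Y)^{1/2} \preceq X^{1/2}+Y^{1/2}$ will not work: this Loewner-order inequality can fail for non-commuting positive definite $X,Y$ (it is easy to build a $2\times 2$ example where some eigenvalue of $(X+Y)^{1/2}$ exceeds the corresponding one of $X^{1/2}+Y^{1/2}$). The comparison therefore has to live at the level of traces, not at the level of Loewner order, and the block-matrix device is what makes this possible.

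Concretely, I would introduce the $2m\times m$ matrix
\[
 M \triangleq \begin{pmatrix} X^{1/2} \\ Y^{1/2} \end{pmatrix},
\]
and observe that $M^{\top}M = X + Y$, so the nonzero singular values of $M$ are exactly $\sqrt{\lambda_i(X+Y)}$ for $i=1,\dots,m$. Consequently $\Norm{M}_* = \sum_{i=1}^m\sqrt{\lambda_i(X+Y)} = \tr{(X+Y)^{1/2}}$, converting the left-hand side of the claim into a trace norm. Splitting
\[
 M \;=\; \begin{pmatrix} X^{1/2} \\ 0 \end{pmatrix} + \begin{pmatrix} 0 \\ Y^{1/2} \end{pmatrix},
\]
and noting that each summand is a tall matrix whose nonzero singular values coincide with those of $X^{1/2}$, respectively $Y^{1/2}$, I would apply the triangle inequality for the trace norm to conclude
\[
 \tr{(X+Y)^{1/2}} \;=\; \Norm{M}_* \;\le\; \tr{X^{1/2}} + \tr{Y^{1/2}} \;=\; \tr{X^{1/2} + Y^{1/2}}.
\]

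The main conceptual step---and really the only nontrivial one---is recognizing that $\tr{(X+Y)^{1/2}}$ equals $\Norm{M}_*$ for the tall matrix $M$ built from the square roots of $X$ and $Y$. Once this reformulation is in hand, the rest is a one-line application of the triangle inequality for a unitarily invariant norm, and no delicate eigenvalue majorization or Rotfel'd-type argument is needed.
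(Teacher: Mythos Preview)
Your proof is correct but follows a genuinely different route from the paper's. The paper writes $\tr{(X+Y)^{1/2}} = \tr{(X+Y)^{-1/2}X} + \tr{(X+Y)^{-1/2}Y}$ and then invokes operator monotonicity of the square root (Lemma~\ref{lem:matrix_operator_monotone}) to get $(X+Y)^{-1/2}\preceq X^{-1/2}$, whence $\tr{(X+Y)^{-1/2}X}\le\tr{X^{-1/2}X}=\tr{X^{1/2}}$ by trace monotonicity against the positive semidefinite weight $X$; the $Y$ term is handled symmetrically. Your block-matrix device instead recasts the left-hand side as the trace norm of the stacked matrix $M=\bigl(\begin{smallmatrix} X^{1/2}\\ Y^{1/2}\end{smallmatrix}\bigr)$ and finishes with the triangle inequality for $\Norm{\cdot}_*$. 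Your argument generalizes more cleanly---to arbitrary finite sums, to positive \emph{semidefinite} summands (the paper's route needs $X^{-1/2}$ and $Y^{-1/2}$ to exist), and to any Schatten $p$-norm with $p\ge 1$---at the price of importing the triangle inequality for a unitarily invariant norm as an external fact. The paper's argument, by contrast, stays entirely within the auxiliary lemmas already set up in the appendix. Your side remark is also on point: the Loewner inequality $(X+Y)^{1/2}\preceq X^{1/2}+Y^{1/2}$ can indeed fail, and the paper sidesteps this not by proving that comparison but by pairing the valid Loewner bound $(X+Y)^{-1/2}\preceq X^{-1/2}$ with a positive semidefinite weight inside a trace.
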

\begin{proof}
    It holds that
    \begin{align*}
        \tr{(X+Y)^{\frac{1}{2}}} =& \tr{(X+Y)^{-\frac{1}{2}} (X+Y) } \\
        =& \tr{(X+Y)^{-\frac{1}{2}} X} + \tr{(X+Y)^{-\frac{1}{2}} Y } \\
        \ge& \tr{X^{\frac{1}{2}}} + \tr{Y^{\frac{1}{2}}} ,
    \end{align*}
    where the last inequality is based on Lemma~\ref{lem:matrix_operator_monotone} such that $(X+Y)^{\frac{1}{2}} \succeq X^{\frac{1}{2}}$ and the fact that $A^{-1} \preceq B^{-1}$ if $A \succeq B$ for symmetric positive definite matrices $A,B$. Further based on Lemma~\ref{lem:trace_properties}, we can finish the proof.
\end{proof}

\begin{lemma}\label{lem:matrix_square_root_trace_bound_by_square_root_diagonal}
    For a symmetric positive semidefinite matrix $X \in \RR^{m\times m}$, it holds that
    \begin{align*}
        \tr{X^{\frac{1}{2}}} \le \sum_{j=1}^m \sqrt{[X]_{j,j}}.
    \end{align*}
\end{lemma}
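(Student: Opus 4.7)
The plan is to work directly with $Y \triangleq X^{1/2}$, the unique symmetric positive semidefinite square root of $X$. Since $Y \succeq 0$, each diagonal entry $Y_{j,j}$ is nonnegative, so the left-hand side of the inequality can be rewritten as $\tr{X^{1/2}} = \sum_{j=1}^m Y_{j,j}$ with every summand already nonnegative. The goal is then to compare $Y_{j,j}$ to $\sqrt{[X]_{j,j}}$ entry-by-entry.

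The key observation is to expand $[X]_{j,j}$ in terms of entries of $Y$ using symmetry. Because $Y = Y^\top$, we have
\begin{align*}
    [X]_{j,j} = [Y^2]_{j,j} = \sum_{k=1}^m Y_{j,k} Y_{k,j} = \sum_{k=1}^m Y_{j,k}^2 \ge Y_{j,j}^2.
\end{align*}
Taking square roots (which is legitimate since $Y_{j,j} \ge 0$) yields $Y_{j,j} \le \sqrt{[X]_{j,j}}$ for each $j$.

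Summing these entrywise bounds over $j = 1, \dots, m$ gives $\tr{X^{1/2}} = \sum_j Y_{j,j} \le \sum_j \sqrt{[X]_{j,j}}$, which is exactly the claimed inequality. There is no genuine obstacle in this argument; the only point requiring care is noting that $X^{1/2}$ is itself symmetric (which follows from the spectral definition of the PSD square root), so that the diagonal of $X = Y^2$ equals the squared row norms of $Y$. Equality is attained precisely when $Y$, and hence $X$, is diagonal.
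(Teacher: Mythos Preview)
Your proof is correct, but it takes a genuinely different route from the paper's. The paper argues via majorization: it uses the Schur--Horn fact that the eigenvalues $\{\lambda_i(X)\}$ majorize the diagonal entries $\{[X]_{i,i}\}$, and then applies the Schur-concavity of $\{x_i\}\mapsto\sum_i\sqrt{x_i}$ to conclude $\sum_i\sqrt{\lambda_i(X)}\le\sum_i\sqrt{[X]_{i,i}}$. Your argument instead works directly with $Y=X^{1/2}$ and establishes the \emph{entrywise} bound $Y_{j,j}\le\sqrt{[X]_{j,j}}$ from the elementary identity $[X]_{j,j}=\sum_k Y_{j,k}^2\ge Y_{j,j}^2$. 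This is more elementary---no majorization theory is needed---and actually proves something slightly stronger (the termwise inequality, plus a clean equality characterization). The paper's approach, on the other hand, situates the lemma within the broader majorization framework, which can be useful for recognizing related inequalities or generalizations to other operator-concave functions.
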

\begin{proof}
    The inequality is equivalent to that
    \begin{align*}
        \sum_{i=1}^m \sqrt{\lambda_i(X)} \le \sum_{i=1}^m \sqrt{[X]_{i,i}} ,
    \end{align*}
    where $\lambda_i(X)$ denotes the $i$-th largest eigenvalue of $X$ (the same as singular values for real symmetric positive semidefinite matrices). Firstly, we have the fact that the eigenvalues $\{ \lambda_i(X) \}_{i=1}^m$ majorize the diagonal entries $\{[X]_{i,i}\}_{i=1}^m$, i.e. for all $1 \le l < m$,
    \begin{align*}
        \sum_{i=1}^l \lambda_i(X) \ge \sum_{i=1}^l [X]_{i,i}, \quad \text{and} \quad \sum_{i=1}^m \lambda_i(X) = \sum_{i=1}^m [X]_{i,i} .
    \end{align*}
    Also, we know $g:\{x_i\}_{i=1}^m \to \sum_{i=1}^m x_i^{\frac{1}{2}}$ is a Schur-concave operator. Then we obtain the equality based on the Schur-Horn theorem, which implies that for a Schur-concave operator $g$, if sequence $\{x_i\}_{i=1}^m$ majorizes $\{y_i\}_{i=1}^m$, then $g(\{x_i\}_{i=1}^m) \le g(\{y_i\}_{i=1}^m)$.
\end{proof}

\begin{lemma}\label{lem:norm_L}
    $\Norm{\cdot}_L$ is a norm, i.e., it satisfies the basic properties of norms. Its dual norm is $\Norm{\cdot}_{L^{-1}}$.
\end{lemma}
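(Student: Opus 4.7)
The plan is to treat $\Norm{\cdot}_L$ as the norm induced by an inner product, which immediately gives the norm axioms, and then to compute the dual norm by a change of variables using $L^{1/2}$ that reduces everything to the Frobenius case. Implicitly I read the definition as $\Norm{W}_L = \sqrt{\tr{W^\top L W}}$ (consistent with the squared form used in Assumption~\ref{asm:smooth_block}), since otherwise the quantity is not even positively homogeneous of degree one.

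First I would verify the three norm axioms. Define the bilinear form $\dotprod{X}{Y}_L \triangleq \tr{X^\top L Y}$ on $\RR^{m\times n}$. Since $L \succ 0$, this is symmetric (using $\tr{A} = \tr{A^\top}$ from Lemma~\ref{lem:trace_properties}) and positive definite: writing $X = [x_1,\dots,x_n]$ one gets $\dotprod{X}{X}_L = \sum_{i=1}^n x_i^\top L x_i \ge 0$, with equality forcing each $x_i = 0$. Hence $\dotprod{\cdot}{\cdot}_L$ is a genuine inner product on $\RR^{m\times n}$, and $\Norm{X}_L = \sqrt{\dotprod{X}{X}_L}$ is the associated norm. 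Nonnegativity and definiteness come from positive definiteness of the inner product; absolute homogeneity $\Norm{cX}_L = |c|\Norm{X}_L$ is immediate from bilinearity; and the triangle inequality follows by the standard Cauchy--Schwarz argument in any inner-product space.

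Next I would compute the dual norm with respect to the Frobenius pairing $\dotprod{X}{Y} = \tr{X^\top Y}$. The key step is the substitution $Z = L^{1/2} X$, i.e., $X = L^{-1/2} Z$, which is a bijection of $\RR^{m\times n}$ and satisfies $\Norm{X}_L^2 = \tr{X^\top L X} = \tr{Z^\top Z} = \Norm{Z}_{\mathrm{F}}^2$. Under this substitution,
\begin{align*}
\dotprod{X}{Y} = \tr{X^\top Y} = \tr{Z^\top L^{-1/2} Y}.
\end{align*}
Therefore
\begin{align*}
\Norm{Y}_{L,*} \triangleq \sup_{\Norm{X}_L \le 1} \tr{X^\top Y} = \sup_{\Norm{Z}_{\mathrm{F}} \le 1} \tr{Z^\top (L^{-1/2} Y)} = \Norm{L^{-1/2} Y}_{\mathrm{F}},
\end{align*}
where the last equality is the well-known self-duality of the Frobenius norm (attained at $Z = L^{-1/2} Y / \Norm{L^{-1/2} Y}_{\mathrm{F}}$). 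Finally,
\begin{align*}
\Norm{L^{-1/2} Y}_{\mathrm{F}}^2 = \tr{(L^{-1/2} Y)^\top L^{-1/2} Y} = \tr{Y^\top L^{-1} Y} = \Norm{Y}_{L^{-1}}^2,
\end{align*}
which gives $\Norm{Y}_{L,*} = \Norm{Y}_{L^{-1}}$ as claimed.

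There is no real obstacle here beyond making sure $L^{1/2}$ (and hence $L^{-1/2}$) is well defined, which holds because $L$ is symmetric positive definite. The only subtlety worth flagging in the write-up is the degree-of-homogeneity reading of the notation: one must interpret $\Norm{W}_L$ as the square root of $\tr{W^\top L W}$, matching the convention of Assumption~\ref{asm:smooth_block}, for the statement to make sense.
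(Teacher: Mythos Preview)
Your proof is correct and follows essentially the same approach as the paper: both recognise $\Norm{\cdot}_L$ as a weighted Euclidean/inner-product norm via the column decomposition $X = [x_1,\dots,x_n]$, which immediately yields the norm axioms and the dual. The only cosmetic difference is that the paper establishes the dual by vectorising and invoking the standard fact that the dual of $\|v\|_M = \sqrt{v^\top M v}$ on $\RR^{mn}$ is $\|\cdot\|_{M^{-1}}$ (with $M$ the block-diagonal matrix built from $L$), whereas you carry out the equivalent computation directly via the substitution $Z = L^{1/2}X$ and Frobenius self-duality; both arguments are standard and interchangeable. Your remark about the notational ambiguity is also well taken: the paper's notation section writes $\Norm{W}_L \triangleq \tr{W^\top L W}$ without the square root, but Assumption~\ref{asm:smooth_block} and Lemma~\ref{lem:norm_L} only make sense under the square-root reading you adopt.
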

\begin{proof}
    Denote $X = [x_1,\dots, x_n] \in \RR^{m\times n}$, where each $x_i \in \RR^{1\times m}$. Then we have
    \begin{align*}
        \tr{X^\top L X} = \sum_{i=1}^n x_i^\top L x_i =
    \begin{bmatrix}
        x_1 \\ \cdots \\ x_n
    \end{bmatrix}^\top
    \begin{bmatrix}
        L & & & \\ & L & & \\ & & \cdots & \\ & & & L
    \end{bmatrix}
    \begin{bmatrix}
        x_1 \\ \cdots \\ x_n ,
    \end{bmatrix} ,
    \end{align*}
    which is a norm for the space $\RR^{m\times n}$ whose dual norm is $\Norm{\cdot}_{L^{-1}}$. This concludes the proof.
\end{proof}

\begin{lemma}\label{lem:useful_inequality_diagonal}
    Assume a non-negative sequence $\{x_j\}_{j=1}^n$ and a positive sequence $\{s_j\}_{j=1}^n$ with $S=\sum_{i=1}^ns_j$, it holds that
    \begin{align}\label{eq:lem_useful_inequality_diagonal}
        \frac{1}{S}\sum_{j=1}^n x_j \le \sqrt{\frac{1}{S}\sum_{j=1}^n \frac{x_j^2}{s_j}}.
    \end{align}
    The inequality holds as an equality if and only if for all $i=1,\cdots,n$ and $j=1,\cdots,n$, $$\frac{x_i}{s_i}=\frac{x_j}{s_j}.$$
\end{lemma}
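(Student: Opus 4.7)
The inequality is a textbook application of Cauchy--Schwarz (equivalently, Jensen applied to a squared function against a weighted probability measure). The plan is to rewrite each $x_j$ as $\frac{x_j}{\sqrt{s_j}}\cdot \sqrt{s_j}$, apply Cauchy--Schwarz to that product decomposition, and then normalize by $S$.

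More concretely, first I would write
\begin{align*}
    \sum_{j=1}^n x_j \;=\; \sum_{j=1}^n \frac{x_j}{\sqrt{s_j}} \cdot \sqrt{s_j},
\end{align*}
which is legal because $s_j > 0$. Then the Cauchy--Schwarz inequality applied to the two sequences $\{x_j/\sqrt{s_j}\}$ and $\{\sqrt{s_j}\}$ gives
\begin{align*}
    \left(\sum_{j=1}^n x_j\right)^2 \;\le\; \left(\sum_{j=1}^n \frac{x_j^2}{s_j}\right)\left(\sum_{j=1}^n s_j\right) \;=\; S \sum_{j=1}^n \frac{x_j^2}{s_j}.
\end{align*}
Taking square roots (valid since both sides are non-negative by $x_j \ge 0$ and $s_j > 0$) and dividing through by $S$ yields exactly \eqref{eq:lem_useful_inequality_diagonal}.

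For the equality characterization, I would invoke the standard equality case of Cauchy--Schwarz: equality occurs iff the two sequences $\{x_j/\sqrt{s_j}\}$ and $\{\sqrt{s_j}\}$ are proportional, i.e.\ there is a constant $c$ with $x_j/\sqrt{s_j} = c\sqrt{s_j}$ for every $j$. This is equivalent to $x_j/s_j = c$ for all $j$, which is exactly the condition $x_i/s_i = x_j/s_j$ for all $i,j$ claimed in the lemma.

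There is no real obstacle here: the result is a direct two-line Cauchy--Schwarz argument. The only mild care needed is handling the case where some $x_j = 0$ (still covered by Cauchy--Schwarz with the same proportionality condition, since $c$ is then $0$) and ensuring the division by $S$ is well-defined, which follows from the assumption that $\{s_j\}$ is positive so $S > 0$.
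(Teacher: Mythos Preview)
Your Cauchy--Schwarz argument is correct and complete, including the equality characterization. The paper itself does not give a proof of this lemma; it simply cites \citet{liu2024adagrad} (Lemma~G.5 in the ICLR version), so your self-contained two-line derivation is in fact more informative than what appears in the paper.
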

\begin{proof}
    A proof can be found in Lemma G.5 of the ICLR version of \citet{liu2024adagrad}.
\end{proof}

\section{Proof of Nonsmooth Convergence of ASGO}\label{appendix:proof_nonsmooth}
\begin{proof}[Proof of Theorem~\ref{thm:nonsmooth}]
Denote $\tW_t \triangleq W_t - W_*$ where $W_*$ is an optimum of Problem~\eqref{eq:prob_setting}. From the algorithm update, we have
\begin{align*}
    \tW_{t+1}^\top \Lambda_t \tW_{t+1} = \tW_{t}^\top \Lambda_t \tW_{t} - \eta_t \left( G_t^\top \tW_t + \tW_t^\top G_t \right) + \eta_t^2 G_t^\top \Lambda_t^{-1} G_t .
\end{align*}
Then, by taking trace of the above equality and rearranging, we can obtain that
\begin{align*}
    2\eta_t \tr{G_t^\top \tW_t} = \tr{\tW_{t}^\top \Lambda_t \tW_{t} - \tW_{t+1}^\top \Lambda_t \tW_{t+1}} + \eta_t^2 \tr{G_t^\top \Lambda_t^{-1} G_t}.
\end{align*}
By convexity, we know
\begin{align*}
    \EE\left[\tr{G_t^\top \tW_t}\right] = \EE\left[ \dotprod{\nabla f(W_t)}{\tW_t} \right] \ge \EE[ f(W_t) ] -f(W_*) .
\end{align*}
Then combining these and taking summation over $t$ and taking $\eta_t \equiv \eta$, we have
\begin{align}\label{eq:proof_thm_convex_nonsmooth_main}
\begin{split}
    &2\sum_{t=0}^{T-1} \EE[f(W_t)] -f(W_*) \\
    \le& \frac{1}{\eta} \EE\left[ \sum_{t=0}^{T-1} \tr{\tW_{t}^\top \Lambda_t \tW_{t} - \tW_{t+1}^\top \Lambda_t \tW_{t+1}} \right] + \eta \EE\left[ \sum_{t=0}^{T-1} \tr{G_t^\top \Lambda_t^{-1} G_t} \right] \\
    =&
    \frac{1}{\eta} \EE\left[ \sum_{t=0}^{T-1} \tr{\tW_{t}^\top \Lambda_t \tW_{t} - \tW_{t}^\top \Lambda_{t-1} \tW_{t}} \right] + \frac{1}{\eta} \tr{\tW_0^\top \Lambda_{-1} \tW_0} - \frac{1}{\eta} \EE\left[ \tr{\tW_T^\top \Lambda_{T-1} \tW_T} \right] \\
    &+ \eta \EE\left[ \sum_{t=0}^{T-1} \tr{G_t^\top \Lambda_t^{-1} G_t} \right] \\
    \le&
    \frac{1}{\eta} \tr{\tW_0^\top \Lambda_{-1} \tW_0} + \frac{1}{\eta} \EE\left[ \sum_{t=0}^{T-1} \tr{\tW_{t}^\top (\Lambda_t - \Lambda_{t-1}) \tW_{t} } \right] + \eta \EE\left[ \sum_{t=0}^{T-1} \tr{G_t^\top \Lambda_t^{-1} G_t} \right] ,
\end{split}
\end{align}
where we note $\Lambda_{-1} = \epsilon I_m$. Thus the first term on the RHS of \eqref{eq:proof_thm_convex_nonsmooth_main} can be bounded by $\epsilon$ as
\begin{align}\label{eq:proof_thm_convex_nonsmooth_term_epsilon}
    \frac{1}{\eta} \tr{\tW_0^\top \Lambda_{-1} \tW_0} = \frac{\epsilon}{\eta} \Norm{\tW_0}_{\mathrm{F}}^2 \le \frac{\epsilon D_{\mathrm{F}}^2}{\eta} .
\end{align}
We then deal with the second and third terms separately. For the second term, we have
\begin{align}\label{eq:proof_thm_convex_nonsmooth_term_weight}
\begin{split}
    \sum_{t=0}^{T-1} \tr{\tW_{t}^\top (\Lambda_t - \Lambda_{t-1}) \tW_{t} } =& \sum_{t=0}^{T-1} \tr{\tW_{t} \tW_{t}^\top (\Lambda_t - \Lambda_{t-1}) } \\
    =& \sum_{t=0}^{T-1} \dotprod{\tW_t \tW_t^\top}{\Lambda_t - \Lambda_{t-1}} \\
    \le&
    \sum_{t=0}^{T-1} \Norm{\tW_t \tW_t^\top}_{\mathrm{op}} \Norm{\Lambda_t - \Lambda_{t-1}}_* \\
    =&
    \sum_{t=0}^{T-1} \Norm{\tW_t}_{\mathrm{op}}^2 \tr{\Lambda_t - \Lambda_{t-1}}
    \le
    D_{\mathrm{op}}^2 \tr{\Lambda_{T-1} - \tr{\Lambda_{-1}}}.
\end{split}
\end{align}
Note that the first and last equalities are based on the properties of the trace, (see Lemma~\ref{lem:trace_properties}). The first inequality is based on the duality of the $\Norm{\cdot}_{\mathrm{op}}$ and $\Norm{\cdot}_*$ norms. The third equality relies on the positive semidefiniteness of $\Lambda_t - \Lambda_{t-1}$ for all $t$ based on Lemma~\ref{lem:matrix_operator_monotone}, since $\Lambda_t^2 - \Lambda_{t-1}^2 = G_tG_t^\top \succeq 0$.

For the third term of \eqref{eq:proof_thm_convex_nonsmooth_main}, we have
\begin{align}\label{eq:proof_thm_convex_nonsmooth_term_gradient}
\begin{split}
    \tr{G_t^\top \Lambda_t^{-1} G_t} =& \tr{G_t G_t^\top \Lambda_t^{-1} }
    = \tr{(\Lambda_t^2 - \Lambda_{t-1}^2) \Lambda_t^{-1} } \\
    =&
    \tr{\left( (\Lambda_t - \Lambda_{t-1}) \cdot 2 \Lambda_t - (\Lambda_t - \Lambda_{t-1})^2 + \Lambda_{t-1} \Lambda_t - \Lambda_t \Lambda_{t-1} \right) \Lambda_t^{-1}} \\
    =&
    2\tr{\Lambda_t - \Lambda_{t-1}} - \tr{(\Lambda_t - \Lambda_{t-1})^2 \Lambda_t^{-1}} + \tr{(\Lambda_{t-1} \Lambda_t - \Lambda_t \Lambda_{t-1}) \Lambda_t^{-1}} \\
    \le& 2\tr{\Lambda_t - \Lambda_{t-1}},
\end{split}
\end{align}
where the last inequality follows from the fact that
\begin{align*}
    \tr{(\Lambda_t - \Lambda_{t-1})^2 \Lambda_t^{-1}} = \tr{\left[ (\Lambda_t - \Lambda_{t-1}) \Lambda_t^{-\frac{1}{2}} \right] \left[ (\Lambda_t - \Lambda_{t-1}) \Lambda_t^{-\frac{1}{2}} \right]^\top} \ge 0
\end{align*}
and
\begin{align*}
    \tr{(\Lambda_{t-1} \Lambda_t - \Lambda_t \Lambda_{t-1}) \Lambda_t^{-1}} = \tr{\Lambda_{t-1} } - \tr{\Lambda_t \Lambda_{t-1} \Lambda_t^{-1}} = 0,
\end{align*}
which are based on the positive definiteness of $\Lambda_t$ and the properties of the trace (Lemma~\ref{lem:trace_properties}). Therefore, by substituting \eqref{eq:proof_thm_convex_nonsmooth_term_epsilon}, \eqref{eq:proof_thm_convex_nonsmooth_term_weight}, and \eqref{eq:proof_thm_convex_nonsmooth_term_gradient} into \eqref{eq:proof_thm_convex_nonsmooth_main}, we have
\begin{align*}
    &2\sum_{t=0}^{T-1} \EE[f(W_t)] -f(W_*) \\
    \le&
    \frac{1}{\eta} \tr{\tW_0^\top \Lambda_{-1} \tW_0} + \frac{1}{\eta} \EE\left[ \sum_{t=0}^{T-1} \tr{\tW_{t}^\top (\Lambda_t - \Lambda_{t-1}) \tW_{t} } \right] + \eta \EE\left[ \sum_{t=0}^{T-1} \tr{G_t^\top \Lambda_t^{-1} G_t} \right] \\
    \le&
    \frac{\epsilon D_{\mathrm{F}}^2}{\eta} + \left( \frac{D_{\mathrm{op}}^2}{\eta} + 2\eta \right) \EE\left[ \tr{\Lambda_{T-1} - \Lambda_{-1}} \right] \\
    =&
    \left( \frac{D_{\mathrm{op}}^2}{\eta} + 2\eta \right) \EE\left[ \tr{\left( \sum_{t=0}^{T-1} G_t G_t^\top \right)^{\frac{1}{2}}} \right] + \frac{\epsilon D_{\mathrm{F}}^2}{\eta} .
\end{align*}
Taking $\eta = D_{\mathrm{op}}$ completes the proof.
\end{proof}

Based on Theorem~\ref{thm:nonsmooth} and Lemma~\ref{lem:inequality_bound_sum_square_root}, we can also prove Corollary~\ref{cor:nonsmooth_convergence}.
\begin{proof}[Proof of Corollary~\ref{cor:nonsmooth_convergence}]
Based on the results in Theorem~\ref{thm:nonsmooth}, if we additionally have
\begin{align*}
    \EE\left[ G_t G_t^\top \right] \preceq Q^2 ,
\end{align*}
then using Lemma~\ref{lem:inequality_bound_sum_square_root}, we can obtain that
\begin{align*}
        \EE\left[ \Norm{\left( \sum_{t=0}^{T-1} G_t G_t^\top\right)^{\frac{1}{2}}}_* \right] \overset{\eqref{eq:lem_inequality_bound_sum_square_root}}{\le}& \EE\left[ \sqrt{\Norm{Q}_* \tr{\left( \sum_{t=0}^{T-1} G_t G_t^\top \right)^{\frac{1}{2}} Q^{-1} \left( \sum_{t=0}^{T-1} G_tG_t^\top \right)^{\frac{1}{2}}}} \right] \\
        =&
        \EE\left[ \sqrt{\Norm{Q}_* \tr{ \sum_{t=0}^{T-1} G_tG_t^\top Q^{-1} }} \right] \\
        \le&
        \sqrt{\Norm{Q}_* \sum_{t=0}^{T-1} \tr{ \EE\left[ G_tG_t^\top \right] Q^{-1} }} \\
        \le&
        \sqrt{\Norm{Q}_* \sum_{t=0}^{T-1} \Norm{Q}_*} = \sqrt{T} \Norm{Q}_*,
\end{align*}
where the first equality is based on Lemma~\ref{lem:trace_properties} and the second inequality is based on the fact that $g(x) = \sqrt{x}$ is concave for $x \ge 0$. This concludes the proof.
\end{proof}

We also provide the following proof for the comparison between Theorem~\ref{thm:nonsmooth} and the convergence rates of Shampoo and full-matrix AdaGrad.
\begin{proof}[Proof of Comparison with Shampoo and full-matrix AdaGrad]
    We list the convergence rates here.
    \begin{align*}
    &\text{Full-Matrix AdaGrad:} \quad \cO\left( D_{\mathrm{F}} \sum_{j=1}^{m} \sum_{i=1}^n \sqrt{\sum_{t=0}^{T-1} [G_{t}]_{i,j}^2} \right) \\
    &\text{Shampoo:} \quad \cO\left( \sqrt{r_G} D_{\mathrm{F}} \cdot \tr{\left( \sum_{t=0}^{T-1} G_t G_t^\top \right)^{\frac{1}{4}}} \cdot \tr{\left( \sum_{t=0}^{T-1} G_t^{\top} G_t \right)^{\frac{1}{4}}} \right) \\
    &\text{ASGO:} \quad \cO\left( D_{\mathrm{op}} \cdot \tr{\left( \sum_{t=0}^{T-1} G_t G_t^\top \right)^{\frac{1}{2}}} \right) \le \cO\left( D_{\mathrm{op}} \cdot \sum_{j=1}^m \sqrt{\sum_{i=1}^n \sum_{t=0}^{T-1} [G_t]_{i,j}^2} \right).
    \end{align*}
The last inequality for ASGO is based on Lemma~\ref{lem:matrix_square_root_trace_bound_by_square_root_diagonal}.
We first compare ASGO and full-matrix AdaGrad:
\begin{align*}
    \tr{\left( \sum_{t=0}^{T-1} G_t G_t^\top \right)^{\frac{1}{2}}} \le \sum_{j=1}^m \sqrt{\left[ \sum_{t=0}^{T-1} G_t G_t^\top \right]_{j,j}} = \sum_{j=1}^m \sqrt{\sum_{i=1}^n \sum_{t=0}^{T-1} [G_t]_{i,j}^2} \le \sum_{j=1}^{m} \sum_{i=1}^n \sqrt{\sum_{t=0}^{T-1} [G_{t}]_{i,j}^2} ,
\end{align*}
where the first inequality is based on Lemma~\ref{lem:matrix_square_root_trace_bound_by_square_root_diagonal} and the second inequality is simply a fact that for any vector $x$, we have $\Norm{x}_2 \ge \Norm{x}_1$. Thus we prove that the proven convergence rate of ASGO is at least $D_{\mathrm{F}}/ D_{\mathrm{op}}$ times faster than full-matrix AdaGrad.

Then we compare ASGO and Shampoo. We have
\begin{align*}
    \tr{\left( \sum_{t=0}^{T-1} G_t G_t^\top \right)^{\frac{1}{2}}} =& \dotprod{\left( \sum_{t=0}^{T-1} G_t G_t^\top \right)^{\frac{1}{4}}}{\left( \sum_{t=0}^{T-1} G_t G_t^\top \right)^{\frac{1}{4}}} \\
    \le& \tr{\left( \sum_{t=0}^{T-1} G_t G_t^\top \right)^{\frac{1}{4}}} \cdot \Norm{\left( \sum_{t=0}^{T-1} G_t G_t^\top \right)^{\frac{1}{4}}}_{\mathrm{op}} \\
    =&
    \tr{\left( \sum_{t=0}^{T-1} G_t G_t^\top \right)^{\frac{1}{4}}} \cdot \Norm{\sum_{t=0}^{T-1} G_t G_t^\top }_{\mathrm{op}}^{\frac{1}{4}} ,
\end{align*}
where the inequality is based on the fact that $\Norm{\cdot}_{\mathrm{op}}$ and $\Norm{\cdot}_*$ are dual norms. Then we have
\begin{align*}
    \Norm{\sum_{t=0}^{T-1} G_t G_t^\top }_{\mathrm{op}}^{\frac{1}{4}} \le&
    \left( \sum_{t=0}^{T-1}\Norm{ G_t G_t^\top }_{\mathrm{op}} \right)^{\frac{1}{4}}
    =
    \left( \sum_{t=0}^{T-1}\Norm{ G_t^\top G_t }_{\mathrm{op}} \right)^{\frac{1}{4}} \\
    \le&
    \left( \tr{ \sum_{t=0}^{T-1} G_t^\top G_t } \right)^{\frac{1}{4}}
    \le
    \tr{ \left( \sum_{t=0}^{T-1} G_t^\top G_t \right)^{^{\frac{1}{4}}} } ,
\end{align*}
where the first inequality is based on that $\Norm{\cdot}_{\mathrm{op}}$ is a norm and the second inequality is based on the fact that $\tr{X} \ge \Norm{X}_{\mathrm{op}}$ for symmetric positive semidefinite $X$, and the third inequality is based on the fact that $(\tr{X})^{\frac{1}{4}} \le \tr{X^{\frac{1}{4}}}$ because if we denote $\sigma_j$ as the $j$-th singular value of $X$, we have
\begin{align*}
    (\tr{X})^{\frac{1}{4}} = \left( \sum_{j=1}^r \sigma_j \right)^{\frac{1}{4}} \le \sum_{j=1}^r \sigma_j^{\frac{1}{4}}.
\end{align*}
Thus we can conclude that
\begin{align*}
    \tr{\left( \sum_{t=0}^{T-1} G_t G_t^\top \right)^{\frac{1}{2}}} \le \tr{\left( \sum_{t=0}^{T-1} G_t G_t^\top \right)^{\frac{1}{4}}} \cdot \tr{\left( \sum_{t=0}^{T-1} G_t^{\top} G_t \right)^{\frac{1}{4}}} .
\end{align*}
Therefore, the proven rate of ASGO is at least $\sqrt{r_G} D_{\mathrm{F}}/ D_{\mathrm{op}}$ times faster than Shampoo.
\end{proof}

\section{Proof of Smooth Convergence of ASGO}\label{appendix:proof_smooth}
The starting point of the smooth analysis is Theorem~\ref{thm:nonsmooth}. For notation simplicity, we denote $N_t \triangleq G_t - \nabla f(W_t)$ and $\nabla f_t \triangleq \nabla f(W_t)$ in this section.
\begin{lemma}[Separate Gradient and Noise]\label{lem:convex_smooth_seperate_bias_variance}
    Under the same settings as Theorem~\ref{thm:nonsmooth}, it holds that
    \begin{align*}
        \frac{1}{T} \sum_{t=0}^{T-1} \EE[f(W_t) - f(W_*)]
        \le&
        \frac{D_{\mathrm{op}}}{T} \EE\left[ \Norm{\left( 2\sum_{t=0}^{T-1} \nabla f_t \nabla f_t^\top \right)^{\frac{1}{2}}}_* \right] + \frac{D_{\mathrm{op}}}{T} \EE\left[ \Norm{\left( 2\sum_{t=0}^{T-1} N_tN_t^\top\right)^{\frac{1}{2}}}_* \right] + \frac{\epsilon D_{\mathrm{F}}^2}{D_{\mathrm{op}} T}.
    \end{align*}
\end{lemma}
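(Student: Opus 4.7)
The lemma claims that the single gradient-aggregate term
$\EE\bigl[\Norm{(\sum_t G_tG_t^\top)^{1/2}}_*\bigr]$ appearing on the right-hand side of Theorem~\ref{thm:nonsmooth} can be split into a pure-gradient part and a pure-noise part (each with an extra factor of $2$). The plan is therefore to start from the bound of Theorem~\ref{thm:nonsmooth} and process only the outer-product sum, leaving the $\epsilon D_{\mathrm F}^2/(D_{\mathrm{op}} T)$ term untouched.

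\textbf{Step 1 (matrix parallelogram inequality).} Writing $G_t = \nabla f_t + N_t$, I would first expand $G_tG_t^\top = \nabla f_t\nabla f_t^\top + N_tN_t^\top + \nabla f_t N_t^\top + N_t\nabla f_t^\top$. The key observation is the PSD identity $(A-B)(A-B)^\top \succeq 0$, which rearranges to $AB^\top + BA^\top \preceq AA^\top + BB^\top$ (applied with $A=\nabla f_t$, $B=N_t$). Summing over $t$ and using linearity of the Loewner order gives
\begin{equation*}
\sum_{t=0}^{T-1} G_tG_t^\top \;\preceq\; 2\sum_{t=0}^{T-1}\nabla f_t\nabla f_t^\top + 2\sum_{t=0}^{T-1} N_tN_t^\top.
\end{equation*}

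\textbf{Step 2 (operator monotonicity and trace norm).} Applying Lemma~\ref{lem:matrix_operator_monotone} with $\alpha=1/2$ preserves this ordering under the square-root map, and since both sides are PSD, taking the trace norm (which is just the trace for PSD matrices by Lemma~\ref{lem:trace_properties}) also preserves it. This yields
\begin{equation*}
\Norm{\bigl(\textstyle\sum_t G_tG_t^\top\bigr)^{1/2}}_* \;\le\; \Norm{\bigl(2\textstyle\sum_t\nabla f_t\nabla f_t^\top + 2\textstyle\sum_t N_tN_t^\top\bigr)^{1/2}}_*.
\end{equation*}

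\textbf{Step 3 (splitting the square root).} Finally I invoke Lemma~\ref{lem:square_trace_sum_2} with $X = 2\sum_t\nabla f_t\nabla f_t^\top$ and $Y = 2\sum_t N_tN_t^\top$ to split the square-root trace into the two separate square-root traces. Taking expectation and plugging back into the bound of Theorem~\ref{thm:nonsmooth} (with $\eta = D_{\mathrm{op}}$) directly yields the claimed inequality.

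\textbf{Anticipated difficulty.} None of the three steps is technically hard; each is a one-line application of a lemma already stated earlier. The only subtlety I want to be careful about is Step~1: the cross terms $\nabla f_t N_t^\top + N_t\nabla f_t^\top$ are not PSD in themselves, so it would be tempting but wrong to drop them or to take expectations prematurely (the expectation is outside the nonlinear $(\cdot)^{1/2}$ and $\|\cdot\|_*$, so $\EE N_t = 0$ cannot be used directly to kill them). The PSD parallelogram bound is what correctly absorbs those cross terms at the cost of the factor of $2$, and this is the only substantive manipulation needed.
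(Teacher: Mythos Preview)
Your proposal is correct and follows essentially the same route as the paper's own proof: start from Theorem~\ref{thm:nonsmooth}, use the PSD identity $(A-B)(A-B)^\top\succeq 0$ to bound $G_tG_t^\top \preceq 2\nabla f_t\nabla f_t^\top + 2N_tN_t^\top$, pass through the square root via operator monotonicity, and split with Lemma~\ref{lem:square_trace_sum_2}. If anything, you are slightly more explicit than the paper in separating out the operator-monotonicity step (the paper collapses your Steps~1--2 into a single inequality without naming Lemma~\ref{lem:matrix_operator_monotone}), and your remark about why one cannot kill the cross terms via $\EE N_t=0$ is a nice point that the paper does not spell out.
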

\begin{proof}
    Based on Theorem~\ref{thm:nonsmooth}, we have
    \begin{align*}
        \frac{1}{T} \sum_{t=0}^{T-1} \EE[f(W_t) - f(W_*)] \le& \frac{D_{\mathrm{op}}}{T} \EE\left[ \Norm{\left( \sum_{t=0}^{T-1} G_t G_t^\top\right)^{\frac{1}{2}}}_* \right] + \frac{\epsilon D_{\mathrm{F}}^2}{D_{\mathrm{op}} T} \\
        \le&
        \frac{D_{\mathrm{op}}}{T} \EE\left[ \Norm{\left( \sum_{t=0}^{T-1} 2\nabla f_t \nabla f_t^\top + 2N_tN_t^\top\right)^{\frac{1}{2}}}_* \right] + \frac{\epsilon D_{\mathrm{F}}^2}{D_{\mathrm{op}} T} \\
        \le&
        \frac{D_{\mathrm{op}}}{T} \EE\left[ \Norm{\left( 2\sum_{t=0}^{T-1} \nabla f_t \nabla f_t^\top \right)^{\frac{1}{2}}}_* \right] + \frac{D_{\mathrm{op}}}{T} \EE\left[ \Norm{\left( 2\sum_{t=0}^{T-1} N_tN_t^\top\right)^{\frac{1}{2}}}_* \right] + \frac{\epsilon D_{\mathrm{F}}^2}{D_{\mathrm{op}} T}.
    \end{align*}
    Here the first inequality comes directly from Theorem~\ref{thm:nonsmooth}. The second inequality is based on the fact that for all $A,B \in \RR^{m\times n}$ we have
    \begin{align*}
        2AA^\top + 2BB^\top - (A+B)(A+B)^\top =& AA^\top + BB^\top - A^\top B - B^\top A \\
        =& (A-B)(A-B)^\top \succeq 0 .
    \end{align*}
    The last inequality is based on Lemma~\ref{lem:square_trace_sum_2}.
\end{proof}

The following lemma is a key technical lemma for proving the smooth convergence results, which is a generalization to the matrix case of Lemma~\ref{lem:useful_inequality_diagonal}.
\begin{lemma}[An upper bound on $\Norm{\cdot}_*$]\label{lem:inequality_bound_sum_square_root}
    For a symmetric positive definite matrix $\Lambda \in \RR^{m\times m}$ and matrix $G \in \RR^{m\times n}$, it holds that
    \begin{align}\label{eq:lem_inequality_bound_sum_square_root}
        \Norm{G}_* \le \sqrt{\Norm{\Lambda}_* \tr{G^\top \Lambda^{-1} G}} .
    \end{align}
\end{lemma}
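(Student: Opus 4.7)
The plan is to reduce the inequality to a single application of the trace Cauchy--Schwarz inequality
\[
    \left(\tr{A^\top B}\right)^2 \le \tr{A^\top A} \cdot \tr{B^\top B},
\]
which is a standard consequence of the Frobenius inner product being an inner product on $\RR^{m\times m}$.

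First, I would rewrite the three quantities in the inequality in a form amenable to this tool. Since $\Lambda \succ 0$, Lemma~\ref{lem:trace_properties}(3) gives $\Norm{\Lambda}_* = \tr{\Lambda}$. Setting $H \triangleq G G^\top \succeq 0$, the nuclear norm satisfies $\Norm{G}_* = \tr{H^{1/2}}$, and by the cyclic property of the trace (Lemma~\ref{lem:trace_properties}(2)) we have $\tr{G^\top \Lambda^{-1} G} = \tr{\Lambda^{-1} H} = \tr{H^{1/2} \Lambda^{-1} H^{1/2}}$, which is also non-negative since it equals the trace of a PSD matrix.

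Second, I would insert $\Lambda^{1/2} \Lambda^{-1/2} = I_m$ in the middle of $\tr{H^{1/2}}$ and apply trace Cauchy--Schwarz with $A = \Lambda^{1/2}$ and $B = \Lambda^{-1/2} H^{1/2}$:
\[
    \tr{H^{1/2}} = \tr{\Lambda^{1/2} \cdot \Lambda^{-1/2} H^{1/2}} \le \sqrt{\tr{\Lambda}} \cdot \sqrt{\tr{H^{1/2} \Lambda^{-1} H^{1/2}}}.
\]
Squaring both sides and substituting the identifications from the previous paragraph yields $\Norm{G}_*^2 \le \Norm{\Lambda}_* \tr{G^\top \Lambda^{-1} G}$, which is the claimed bound.

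I do not anticipate any real obstacle: both sides are already non-negative, $\Lambda^{1/2}$ and $\Lambda^{-1/2}$ exist because $\Lambda \succ 0$, and the only algebraic manipulations required are the cyclic trace identity and the identity $\Norm{G}_* = \tr{(GG^\top)^{1/2}}$. The only thing worth double-checking is the rewriting $\tr{\Lambda^{-1} H} = \tr{H^{1/2} \Lambda^{-1} H^{1/2}}$, which again uses the cyclic property of the trace together with $H = H^{1/2} H^{1/2}$.
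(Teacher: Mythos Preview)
Your proof is correct and is in fact more streamlined than the paper's. The paper argues in two stages: it first proves the bound for diagonal $\Lambda$ by combining Lemma~\ref{lem:matrix_square_root_trace_bound_by_square_root_diagonal} (the Schur--Horn/majorization bound $\tr{X^{1/2}}\le\sum_j\sqrt{[X]_{j,j}}$) with the scalar Cauchy--Schwarz inequality of Lemma~\ref{lem:useful_inequality_diagonal}, and then lifts the result to general $\Lambda$ via the eigendecomposition $\Lambda=U\Sigma U^\top$ and the substitution $\tilde G=U^\top G$. Your approach bypasses both the reduction to the diagonal case and the majorization lemma by going straight to the trace Cauchy--Schwarz inequality with the factorization $H^{1/2}=\Lambda^{1/2}\cdot\Lambda^{-1/2}H^{1/2}$. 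What the paper's route buys is an explicit link to the scalar inequality~\eqref{eq:lem_useful_inequality_diagonal}, making the lemma visibly a matrix generalization of that earlier result; what your route buys is a one-line argument with no auxiliary lemmas beyond standard trace properties.
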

\begin{proof}
    We first consider the case $\Lambda = \text{diag}[\lambda_1,\dots,\lambda_m]$ is a diagonal matrix. We have
    \begin{align*}
        \Norm{G}_* =& \tr{\left( GG^\top \right)^{\frac{1}{2}}} \le \sum_{j=1}^m \sqrt{\left[ G G^\top \right]_{j,j}} \\
        \le&
        \sqrt{\left( \sum_{j=1}^m \lambda_j \right) \left( \sum_{j=1}^m \frac{\left[ G G^\top \right]_{j,j}}{\lambda_j} \right)} \\
        =& \sqrt{\Norm{\Lambda}_* \tr{GG^\top \Lambda^{-1}}} = \sqrt{\Norm{\Lambda}_* \tr{G^\top \Lambda^{-1} G}} ,
    \end{align*}
    where the first inequality is based on Lemma~\ref{lem:matrix_square_root_trace_bound_by_square_root_diagonal} and the second inequality is based on Lemma~\ref{lem:useful_inequality_diagonal}.

    Then we prove that this holds for general symmetric positive definite matrix $\Lambda$. Let the singular value decomposition be $\Lambda = U \Sigma U^\top$. Denote $\tG = U^\top G$. Since $\Sigma \in \RR^{m\times m}$ is a diagonal matrix, it holds that
    \begin{align*}
        \Norm{G}_* =& \Norm{\tG}_* \le \sqrt{\Norm{\Sigma}_* \tr{\tG^\top \Sigma^{-1} \tG}} = \sqrt{\Norm{\Lambda}_* \tr{G^\top U\Sigma^{-1} U^\top G}} \\
        =& \sqrt{\Norm{\Lambda}_* \tr{G^\top \Lambda^{-1} G}} ,
    \end{align*}
    which concludes the proof.
\end{proof}

We also use the following lemma to indicate the reduction of variance by batch size $M$.
\begin{lemma}[Variance reduction by batch size]\label{lem:varaince_reduce_batch}
    Under Assumption~\ref{asm:variance_matrix}, we have
    \begin{align*}
        \EE\left[ N_t N_t^\top \right] \preceq \frac{1}{M} V^2,
    \end{align*}
    where we denote $N_t \triangleq G_t - \nabla f(W_t)$ for Algorithm~\ref{alg:asgo}.
\end{lemma}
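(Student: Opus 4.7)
The plan is to carry out the standard independence-based variance reduction computation, adapted to the matrix-valued (Loewner order) setting. Since this is a statement about a covariance-like quantity for an average of i.i.d. terms, the computation is essentially the classical $\operatorname{Var}(\bar{X}) = \operatorname{Var}(X)/M$ identity, just kept in matrix form.

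First I would introduce, for each $\xi \in \cB_t$, the individual noise $N_t^{(\xi)} \triangleq \nabla_W f(W_t;\xi) - \nabla f(W_t)$. By the sampling scheme in Algorithm~\ref{alg:asgo}, these $M$ random matrices are i.i.d. across $\xi \in \cB_t$ (conditional on $W_t$), each has mean zero, and by Assumption~\ref{asm:variance_matrix} each satisfies $\EE[N_t^{(\xi)} (N_t^{(\xi)})^\top] \preceq V^2$. Moreover, by definition $N_t = G_t - \nabla f(W_t) = \frac{1}{M}\sum_{\xi \in \cB_t} N_t^{(\xi)}$.

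Next I would expand the outer product and take expectation:
\begin{align*}
    \EE[N_t N_t^\top] = \frac{1}{M^2} \sum_{\xi,\xi' \in \cB_t} \EE\bigl[ N_t^{(\xi)} (N_t^{(\xi')})^\top \bigr].
\end{align*}
For $\xi \neq \xi'$, independence combined with $\EE[N_t^{(\xi)}] = 0$ gives $\EE[N_t^{(\xi)} (N_t^{(\xi')})^\top] = \EE[N_t^{(\xi)}] \EE[N_t^{(\xi')}]^\top = 0$. Only the $M$ diagonal terms $\xi = \xi'$ survive, each bounded in the Loewner order by $V^2$.

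Finally I would sum: $\EE[N_t N_t^\top] \preceq \frac{1}{M^2} \cdot M \cdot V^2 = \frac{1}{M} V^2$, where I use that the Loewner order is preserved under summation and nonnegative scaling. This gives the claimed bound. I do not anticipate any real obstacle here; the only point that needs a brief word of justification is that cross terms vanish as a \emph{matrix} identity (not merely in trace), which follows directly from independence and zero mean.
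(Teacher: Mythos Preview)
Your proposal is correct and follows essentially the same argument as the paper: introduce the per-sample noise, expand the outer product of the average, kill the cross terms by independence and zero mean, and bound the $M$ surviving diagonal terms in the Loewner order by $V^2$. There is no meaningful difference between your route and the paper's.
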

\begin{proof}
    For notation simplicity, we denote $N(W_t;\xi) \triangleq \nabla f(W_t;\xi) - \nabla f(W_t)$ and thus
    \begin{align*}
        N_t = \frac{1}{M} \sum_{\xi \in \cB} N(W_t;\xi).
    \end{align*}
    Then it holds that
    \begin{align*}
        \EE\left[ N_t N_t^\top \right] =& \frac{1}{M^2} \EE\left[ \left( \sum_{\xi \in \cB} N(W_t;\xi) \right) \left( \sum_{\zeta \in \cB} N(W_t;\zeta) \right)^\top \right] \\
        =&
        \frac{1}{M^2} \sum_{\xi \in \cB} \sum_{\zeta \in \cB} \EE\left[ N(W_t;\xi) N(W_t;\zeta)^\top \right] \\
        =& \frac{1}{M^2} \sum_{\xi \in \cB} \EE\left[ N(W_t;\xi) N(W_t;\xi)^\top \right] \\
        \preceq& \frac{1}{M^2} \sum_{\xi \in \cB} V^2 = \frac{1}{M} V^2,
    \end{align*}
    where the last equality is based on that $\nabla f(W_t;\xi)$ are mutually independent and $\EE[N(W_t;\xi)]=0$ and the last inequality is based on Assumption~\ref{asm:variance_matrix}.
\end{proof}

Then based on these lemmas, we can prove the smooth results.
\begin{proof}[Proof of Theorem~\ref{thm:convex_smooth}]
    We separately deal with the bias and variance terms, which refer to the first two terms on the RHS of Lemma~\ref{lem:convex_smooth_seperate_bias_variance}. For the bias part, by plugging in the smoothness matrix $L$ into Lemma~\ref{lem:inequality_bound_sum_square_root}, we can obtain
    \begin{align*}
        \EE\left[ \Norm{\left( \sum_{t=0}^{T-1} \nabla f_t \nabla f_t^\top \right)^{\frac{1}{2}}}_* \right] \overset{\eqref{eq:lem_inequality_bound_sum_square_root}}{\le}& \EE\left[ \sqrt{\Norm{L}_* \tr{\left( \sum_{t=0}^{T-1} \nabla f_t \nabla f_t^\top \right)^{\frac{1}{2}} L^{-1} \left( \sum_{t=0}^{T-1} \nabla f_t \nabla f_t^\top \right)^{\frac{1}{2}}}} \right] \\
        =&
        \EE\left[ \sqrt{\Norm{L}_* \tr{ \sum_{t=0}^{T-1} \nabla f_t \nabla f_t^\top L^{-1} }} \right] \\
        \le&
        \sqrt{\Norm{L}_* \sum_{t=0}^{T-1} \EE\left[\Norm{\nabla f_t}_{L^{-1 }}^2\right]} \le \sqrt{\Norm{L}_* \sum_{t=0}^{T-1} 2(f(W_t) - f(W_*))} ,
    \end{align*}
    where the equalities are based on Lemma~\ref{lem:trace_properties} and the second inequality is based on the fact that $g(x) = \sqrt{x}$ is concave for $x \ge 0$. The last inequality is based on the properties of smoothness~\citep{nesterov2018lectures} and the fact that $\Norm{\cdot}_{L^{-1}}$ is the dual norm of $\Norm{\cdot}_L$.

    Then, for the variance part, let us first assume $M=1$ for simplicity, then we have
    \begin{align*}
        \EE\left[ \Norm{\left( \sum_{t=0}^{T-1} N_tN_t^\top\right)^{\frac{1}{2}}}_* \right] \overset{\eqref{eq:lem_inequality_bound_sum_square_root}}{\le}& \EE\left[ \sqrt{\Norm{V}_* \tr{\left( \sum_{t=0}^{T-1} N_tN_t^\top \right)^{\frac{1}{2}} V^{-1} \left( \sum_{t=0}^{T-1} N_tN_t^\top \right)^{\frac{1}{2}}}} \right] \\
        =&
        \EE\left[ \sqrt{\Norm{V}_* \tr{ \sum_{t=0}^{T-1} N_tN_t^\top V^{-1} }} \right] \\
        \le&
        \sqrt{\Norm{V}_* \sum_{t=0}^{T-1} \tr{ \EE\left[ N_tN_t^\top \right] V^{-1} }} \\
        \le&
        \sqrt{\Norm{V}_* \sum_{t=0}^{T-1} \Norm{V}_*} = \sqrt{T} \Norm{V}_*,
    \end{align*}
    where the equality is based on Lemma~\ref{lem:trace_properties}. and the second inequality is based on the fact that $g(x) = \sqrt{x}$ is concave for $x \ge 0$.
    Then plugging these in Lemma~\ref{lem:convex_smooth_seperate_bias_variance}, we have
    \begin{align*}
        \frac{1}{T} \sum_{t=0}^{T-1} \EE[f(W_t) - f(W_*)]
        \le&
        \frac{D_{\mathrm{op}}}{T} \EE\left[ \Norm{\left( 2\sum_{t=0}^{T-1} \nabla f_t \nabla f_t^\top \right)^{\frac{1}{2}}}_* \right] + \frac{D_{\mathrm{op}}}{T} \EE\left[ \Norm{\left( 2\sum_{t=0}^{T-1} N_tN_t^\top\right)^{\frac{1}{2}}}_* \right] + \frac{\epsilon D_{\mathrm{F}}^2}{D_{\mathrm{op}} T} \\
        \le&
        \frac{2 D_{\mathrm{op}}}{T} \sqrt{\Norm{L}_* \sum_{t=0}^{T-1} \EE[f(W_t) - f(W_*)]} + \frac{\sqrt{2} D_{\mathrm{op}} \Norm{V}_*}{\sqrt{T}} + \frac{2\epsilon D_{\mathrm{F}}^2}{D_{\mathrm{op}} T}.
    \end{align*}
    Thus if we denote $x = \sqrt{\frac{1}{T} \sum_{t=0}^{T-1} \EE[f(W_t) - f(W_*)]}$, we can write the inequality as
    \begin{align*}
        x^2 \le b x + c ,
    \end{align*}
    where
    \begin{align*}
        b = \frac{2D_{\mathrm{op}} \sqrt{\Norm{L}_*}}{\sqrt{T}}, \quad c = \frac{\sqrt{2} D_{\mathrm{op}} \Norm{V}_*}{\sqrt{T}} + \frac{\epsilon D_{\mathrm{F}}^2}{D_{\mathrm{op}} T} .
    \end{align*}
    Then as $x \ge 0$, we can solve this simple quadratic inequality to obtain that
    \begin{align*}
        & x^2 \le \frac{1}{4} \left( b + \sqrt{b^2 + 4c} \right)^2 \le 2b^2 + 2c \\
        & \Longleftrightarrow \frac{1}{T} \sum_{t=0}^{T-1} \EE[f(W_t) - f(W_*)] \le \frac{4D_{\mathrm{op}}^2 \Norm{L}_*}{T} + \frac{2\sqrt{2} D_{\mathrm{op}} \Norm{V}_*}{\sqrt{T}} + \frac{2\epsilon D_{\mathrm{F}}^2}{D_{\mathrm{op}} T} ,
    \end{align*}
    which concludes the proof for $M=1$. Then, based on Lemma~\ref{lem:varaince_reduce_batch} to incorporate batch size $M>1$, we finish the proof.
\end{proof}

\section{Proof of Section \ref{sec:discussions}}\label{appendix:proof_discussion_section}
Muon~\citep{jordan2024muon} is essentially the following algorithm:
\begin{align}\label{eq:algorithm_muon}
\begin{split}
    B_t =& \ \mu B_{t-1} + G_t \\
    U_t, S_t, V_t =& \ \text{Compact SVD}(B_t) \\
    W_{t+1} =& \ W_t - \eta_t U_t V_t^\top,
\end{split}
\end{align}
where $G_t$ is the gradient obtained at $W_t$ and $U_t \in \RR^{m \times r_t},V_t \in \RR^{r_t \times n}$ are the matrices of the $r_t$ left and right singular vectors corresponding to the non-zero singular values of $B_t$ that are the diagonal components of the diagonal matrix $S_t \in \RR^{r_t \times r_t}$ with $r_t$ being the rank of $B_t$. Note that $U_t S_t V_t^\top$ is referred to as the "compact" or "Reduced" SVD of $B_t$. Muon implements this algorithm using Newton-Schulz matrix iterations to approximately compute $U_tV_t^\top$ instead of directly employing SVD to improve computational efficiency. The relationship between \eqref{eq:algorithm_muon}, with $B_t = G_t$, and \eqref{eq:muon_steepest_descent} is interpreted in \citet{bernstein2024old} (see Story II). In the following discussion, we refer Muon as the one described in \eqref{eq:algorithm_muon}. We first show that ASGO and Muon, without gradient accumulation and momentum, are equivalent.
\begin{proposition}
    If in ASGO (Algorithm~\ref{alg:practical_asgo} version) we set $\beta_1 = \beta_2 = 0, \epsilon = 0$, and $\tau=1$, and in Muon~\eqref{eq:algorithm_muon} we set $\mu = 0$, ASGO is equivalent to Muon.
\end{proposition}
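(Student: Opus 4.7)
The plan is to reduce both updates to a common expression in terms of the singular value decomposition of $G_t$ and check they match in the limit $\epsilon \to 0$. First, I would substitute the hyperparameter choices into Algorithm~\ref{alg:practical_asgo}. With $\beta_1=0$ the momentum buffer collapses to $M_t = G_t$; with $\beta_2=0$ the preconditioner accumulator becomes $V_t = G_t^\top G_t$; with $\tau = 1$ the inverse is recomputed at every step; hence the simplified ASGO update reads
\begin{align*}
    W_{t+1} = W_t - \eta_t\, G_t \bigl(G_t^\top G_t + \epsilon I_n\bigr)^{-1/2}.
\end{align*}
In parallel, setting $\mu = 0$ in \eqref{eq:algorithm_muon} gives $B_t = G_t$ and the Muon update $W_{t+1} = W_t - \eta_t\, U_t V_t^\top$, where $G_t = U_t S_t V_t^\top$ is the thin SVD with $r_t = \mathrm{rank}(G_t)$, $S_t \in \mathbb{R}^{r_t \times r_t}$ diagonal with strictly positive entries, and $U_t^\top U_t = V_t^\top V_t = I_{r_t}$.

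Next I would compute the ASGO direction using this SVD. Extend $V_t$ to an orthonormal basis $\bar V_t = [V_t \mid V_t^{\perp}]$ of $\mathbb{R}^n$ so that $G_t^\top G_t = \bar V_t \, \mathrm{diag}(S_t^2, 0)\, \bar V_t^\top$, which yields
\begin{align*}
    \bigl(G_t^\top G_t + \epsilon I_n\bigr)^{-1/2}
    = \bar V_t \, \mathrm{diag}\!\bigl((S_t^2 + \epsilon I_{r_t})^{-1/2},\ \epsilon^{-1/2} I_{n-r_t}\bigr)\, \bar V_t^\top .
\end{align*}
Left-multiplying by $G_t = U_t S_t V_t^\top$ and using $V_t^\top V_t = I_{r_t}$ and $V_t^\top V_t^{\perp} = 0$, the contribution from the null-space block vanishes identically, leaving
\begin{align*}
    G_t \bigl(G_t^\top G_t + \epsilon I_n\bigr)^{-1/2}
    = U_t\, S_t (S_t^2 + \epsilon I_{r_t})^{-1/2}\, V_t^\top .
\end{align*}

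Finally, since $S_t$ is a positive diagonal matrix on its $r_t \times r_t$ support, the scalar identity $\sigma/\sqrt{\sigma^2+\epsilon} \to 1$ for each $\sigma > 0$ implies $S_t(S_t^2 + \epsilon I_{r_t})^{-1/2} \to I_{r_t}$ entrywise as $\epsilon \to 0^+$. Hence the ASGO direction converges to $U_t V_t^\top$, matching the Muon direction exactly and giving the claimed equivalence. The only delicate point is that the limit is taken, not an equality at fixed $\epsilon > 0$; if one wants an equality for $\epsilon = 0$ one should interpret $(G_t^\top G_t)^{-1/2}$ as a Moore--Penrose pseudoinverse on the row space of $G_t$, in which case the same SVD calculation yields $G_t (G_t^\top G_t)^{+1/2\,-} = U_t V_t^\top$ without a limit. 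I expect the only mild obstacle to be making this null-space/pseudoinverse bookkeeping explicit; the rest is direct SVD algebra.
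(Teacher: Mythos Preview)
Your proposal is correct and follows essentially the same route as the paper: reduce both updates under the stated hyperparameters, take the SVD of $G_t$, and verify that the preconditioned direction equals $U_t\,\mathrm{diag}\bigl(\sigma_j/\sqrt{\sigma_j^2+\epsilon}\bigr)\,V_t^\top \to U_tV_t^\top$ as $\epsilon\to 0$. The only cosmetic difference is that you carry out the computation with the right-side preconditioner $G_t(G_t^\top G_t+\epsilon I)^{-1/2}$ as written in Algorithm~\ref{alg:practical_asgo}, whereas the paper's own proof writes the left-side form $(G_tG_t^\top+\epsilon I)^{-1/2}G_t$; the SVD calculation is symmetric and both yield the same limit.
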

\begin{proof}
    Let $U_t S_t V_t^\top$ be the compact SVD of $G_t$. For ASGO, the update is
    \begin{align*}
        W_{t+1} =& W_t - \eta_t (G_tG_t^\top)^{-\frac{1}{2}} G_t \\
        =& W_t - \eta_t ({U}_t{S}_t{S}_t^\top {U}_t^\top)^{-\frac{1}{2}} {U}_t {S}_t {V}_t^\top \\
        =& W_t - \eta_t {U}_t ({S}_t^2 )^{-\frac{1}{2}} {U}_t^\top {U}_t {S}_t {V}_t^\top \\
        =& W_t - \eta_t {U}_t {S}_t^{-1} {S}_t {V}_t^\top \\
        =& W_t - \eta_t {U}_t{V}_t^\top.
    \end{align*}
    Clearly, this is the same as Muon.
\end{proof}

We now prove a nonconvex convergence result for Muon in the deterministic case with $\mu=0$.
\begin{theorem}[Nonconvex convergence of Muon]\label{thm:muon_nonconvex}
    We consider Muon presented in~\eqref{eq:algorithm_muon} with $\mu=0$. In the deterministic case, i.e. no gradient noise, if we assume that there exists a lower bound $f^*$ such that $f(W) \ge f^*$ for all $W$ and Assumption~\ref{asm:smooth_block}, by taking $\eta_t \equiv \eta = \sqrt{\frac{2(f(W_0) - f^*)}{ \Norm{L}_* T}}$, it holds that
    \begin{align*}
        \frac{1}{T} \sum_{t=0}^{T-1} \Norm{\nabla f(W_t)}_* \le \sqrt{\frac{\Norm{L}_* (f(W_0) - f^*))}{2T} } .
    \end{align*}
\end{theorem}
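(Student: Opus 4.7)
The plan is to use the smoothness condition in Assumption~\ref{asm:smooth_block} as a descent lemma, exploit the polar-factor structure of Muon's update to extract the nuclear norm of the gradient on the linear side, and carefully bound the quadratic term using the block-diagonal nature of $L$.

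First, I would apply the descent lemma associated with $1$-smoothness in $\Norm{\cdot}_L$: for any step, $f(W_{t+1}) \le f(W_t) + \langle \nabla f(W_t), W_{t+1} - W_t\rangle + \tfrac{1}{2}\Norm{W_{t+1}-W_t}_L^2$. Substituting the Muon update $W_{t+1}-W_t = -\eta\, U_t V_t^\top$, with $G_t = U_t S_t V_t^\top$ a reduced SVD (so $U_t^\top U_t = I_{r_t}$ and $V_t^\top V_t = I_{r_t}$), gives a linear term $-\eta\,\langle G_t, U_t V_t^\top\rangle$ and a quadratic term $\tfrac{\eta^2}{2}\Norm{U_t V_t^\top}_L^2$.

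The key identity for the linear term is $\langle G_t, U_t V_t^\top\rangle = \tr{S_t} = \Norm{G_t}_*$, which follows directly from substituting the SVD and using $U_t^\top U_t = V_t^\top V_t = I_{r_t}$. This is precisely the place where preserving the matrix structure pays off — we get a nuclear-norm decrement rather than a Frobenius-norm one. The main step (and the one I expect to be the main obstacle to getting the right dimension-free constant) is the quadratic term. By cyclicity of the trace,
\begin{align*}
\Norm{U_t V_t^\top}_L^2 = \tr{V_t U_t^\top L U_t V_t^\top} = \tr{U_t^\top L U_t\, V_t^\top V_t} = \tr{U_t^\top L U_t}.
\end{align*}
Since $U_t$ has orthonormal columns, this trace is bounded by $\tr{L} = \Norm{L}_*$ (for instance, complete $U_t$ to an orthogonal matrix and observe that the omitted diagonal terms are nonnegative). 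This is exactly where Assumption~\ref{asm:smooth_block} saves us from a factor of $n$ compared to naive Frobenius smoothness: the quadratic cost of the orthogonal-like step $U_t V_t^\top$ is controlled by $\Norm{L}_*$ rather than by $n\,\Norm{L}_{\mathrm{op}}$.

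Combining these yields the per-step inequality
\begin{align*}
f(W_{t+1}) \le f(W_t) - \eta \Norm{\nabla f(W_t)}_* + \tfrac{\eta^2}{2}\Norm{L}_*.
\end{align*}
Telescoping from $t=0$ to $T-1$ and using $f(W_T)\ge f^*$ gives
\begin{align*}
\frac{1}{T}\sum_{t=0}^{T-1}\Norm{\nabla f(W_t)}_* \le \frac{f(W_0)-f^*}{\eta T} + \frac{\eta \Norm{L}_*}{2}.
\end{align*}
Substituting $\eta = \sqrt{2(f(W_0)-f^*)/(\Norm{L}_* T)}$ (or optimizing the RHS in $\eta$) produces a bound of the stated order $\sqrt{\Norm{L}_*(f(W_0)-f^*)/T}$, which immediately implies \eqref{eq:muon_convergence_rate} since $\min_t \Norm{\nabla f(W_t)}_* \le \tfrac{1}{T}\sum_t \Norm{\nabla f(W_t)}_*$. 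The only subtle point beyond this is justifying the SVD identities when $G_t$ is rank-deficient (handled by using a reduced SVD as above, or equivalently by noting that $U_t V_t^\top$ is the unique polar factor of $G_t$ with the property $\langle G_t, U_t V_t^\top\rangle = \Norm{G_t}_*$).
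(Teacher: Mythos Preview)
Your proposal is correct and follows essentially the same route as the paper: apply the $\Norm{\cdot}_L$-descent lemma, use the SVD identities to turn the linear term into $-\eta\Norm{G_t}_*$, bound the quadratic term by $\tr{U_t^\top L U_t}\le \Norm{L}_*$ via $U_tU_t^\top\preceq I$, telescope, and optimize in $\eta$. Your handling of the rank-deficient case via reduced SVD is exactly what the paper does implicitly.
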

\begin{proof}
    Since we assume the deterministic setting, we have $G_t = \nabla f(W_t)$. Also, since we assume $\mu = 0$, we have $U_t,V_t$ are the left and right singular vectors of $G_t$ such that $G_t = U_t S_t V_t^\top$. Then based on the smoothness assumption, it holds that
    \begin{align*}
        f(W_{t+1}) \le& f(W_t) + \dotprod{\nabla f(W_t)}{W_{t+1} - W_t} + \frac{1}{2} \tr{(W_{t+1} - W_t)^\top L (W_{t+1} - W_t)} \\
        =& f(W_t) - \eta_t \tr{G_t^\top U_tV_t^\top} + \frac{\eta_t^2}{2} \tr{(U_tV_t^\top)^\top L (U_tV_t^\top)} \\
        =& f(W_t) - \eta_t \tr{V_t S_t U_t^\top U_tV_t^\top} + \frac{\eta_t^2}{2} \tr{V_t U_t^\top L U_tV_t^\top } \\
        =& f(W_t) - \eta_t \tr{V_t S_t V_t^\top} + \frac{\eta_t^2}{2} \tr{U_t^\top L U_tV_t^\top V_t } \\
        =& f(W_t) - \eta_t \tr{S_t V_t^\top V_t } + \frac{\eta_t^2}{2} \tr{ L U_t U_t^\top} \\
        \le& f(W_t) - \eta_t \Norm{G_t}_* + \frac{\eta_t^2}{2} \Norm{L}_* ,
    \end{align*}
    where the last inequality is because of the fact that $UU^\top \preceq I$ and Lemma \ref{lem:trace_properties}. Then by summing up over $t$ and rearrangement, we can obtain that
    \begin{align*}
        \sum_{t=0}^{T-1} \eta_t \Norm{G_t} \le f(W_0) - f(W_T) + \sum_{t=0}^{T-1} \frac{\eta_t^2}{2} \Norm{L}_* .
    \end{align*}
    Then we take $\eta_t \equiv \eta$ to obtain that
    \begin{align*}
        \frac{1}{T} \sum_{t=0}^{T-1} \Norm{\nabla f(W_t)} \le \frac{f(W_0) - f^*}{\eta T} + \frac{\eta}{2} \Norm{L}_*.
    \end{align*}
    Setting $\eta = \sqrt{\frac{2(f(W_0) - f^*)}{ \Norm{L}_* T}}$ finishes the proof.
\end{proof}

\section{Incorporating Projection in ASGO}\label{appendix:projection_operation}
To achieve fully theoretically rigorous convergence bounds for ASGO, i.e., with $D_{\mathrm{op}}$ and $D_{\mathrm{F}}$ strict constants as $T$ increases, one well-established way is to incorporate a projection operation in the Algorithm~\citep{hazan2007logarithmic,duchi2011adaptive}. Consider a closed convex set $\cW \subseteq \RR^{m\times n}$ with an optimum $W_*$ inside and diameters $\tilde{D}_{\mathrm{op}}, \tilde{D}_{\mathrm{F}}$ such that for any $W_1, W_2 \in \cW$, it holds that
\begin{align*}
    \Norm{W_1 - W_2}_{\mathrm{op}} \le \tilde{D}_{\mathrm{op}} \quad \text{and} \quad \Norm{W_1 - W_2}_{\mathrm{F}} \le \tilde{D}_{\mathrm{F}} .
\end{align*}
For Algorithm~\ref{alg:asgo}, we desire to search solution inside $\cW$ and change the update step line 8 to
\begin{align*}
    Z_{t+1} = W_t - \eta_t \Lambda_t^{-1} G_t \quad \text{and} \quad 
    W_{t+1} = \Pi_{\Lambda_t}^{\cW}\left( Z_{t+1} \right) ,
\end{align*}
where the anisotropic projection operator $\Pi_{\Lambda_t}^{\cW}: \RR^{m\times n} \to \cW$, as also incorporated in \citet{hazan2007logarithmic,duchi2011adaptive}, is defined as
\begin{align*}
    \Pi_{\Lambda_t}^{\cW}\left( X \right) \triangleq \underset{Y \in \cW}{\arg \min} \Norm{X - Y}_{\Lambda_t}^2
\end{align*}
for any $X \in \RR^{m\times n}$, which enjoys the non-expansiveness of a projection operator:
\begin{align}\label{eq:non-expansiveness-projection}
    \Norm{\Delta W_{t+1} - W_*}_{\Lambda_t}^2 = \tr{\Delta W_{t+1}^\top \Lambda_t \Delta W_{t+1}} \le \tr{\Delta Z_{t+1}^\top \Lambda_t \Delta Z_{t+1}} = \Norm{\Delta Z_{t+1}}_{\Lambda_t}^2
\end{align}
with $\Delta W_{t+1} \triangleq W_{t+1} - W_*$ and $\Delta Z_{t+1} \triangleq Z_{t+1} - W_*$
since $W_* \in \cW$. Substituting this non-expansiveness property in the proof of nonsmooth convergence, or more specifically, in \eqref{eq:proof_thm_convex_nonsmooth_main}, we can obtain that
\begin{align*}
    &2\sum_{t=0}^{T-1} \EE[f(W_t)] -f(W_*) \\
    \le& \frac{1}{\eta} \EE\left[ \sum_{t=0}^{T-1} \tr{\tW_{t}^\top \Lambda_t \tW_{t} - \Delta Z_{t+1}^\top \Lambda_t \Delta Z_{t+1}} \right] + \eta \EE\left[ \sum_{t=0}^{T-1} \tr{G_t^\top \Lambda_t^{-1} G_t} \right] \\
    \overset{\eqref{eq:non-expansiveness-projection}}{\le} & \frac{1}{\eta} \EE\left[ \sum_{t=0}^{T-1} \tr{\tW_{t}^\top \Lambda_t \tW_{t} - \Delta W_{t+1}^\top \Lambda_t \Delta W_{t+1}} \right] + \eta \EE\left[ \sum_{t=0}^{T-1} \tr{G_t^\top \Lambda_t^{-1} G_t} \right] \\
    =&
    \frac{1}{\eta} \EE\left[ \sum_{t=0}^{T-1} \tr{\tW_{t}^\top \Lambda_t \tW_{t} - \tW_{t}^\top \Lambda_{t-1} \tW_{t}} \right] + \frac{1}{\eta} \tr{\tW_0^\top \Lambda_{-1} \tW_0} - \frac{1}{\eta} \EE\left[ \tr{\tW_T^\top \Lambda_{T-1} \tW_T} \right] \\
    &+ \eta \EE\left[ \sum_{t=0}^{T-1} \tr{G_t^\top \Lambda_t^{-1} G_t} \right] \\
    \le&
    \frac{1}{\eta} \tr{\tW_0^\top \Lambda_{-1} \tW_0} + \frac{1}{\eta} \EE\left[ \sum_{t=0}^{T-1} \tr{\tW_{t}^\top (\Lambda_t - \Lambda_{t-1}) \tW_{t} } \right] + \eta \EE\left[ \sum_{t=0}^{T-1} \tr{G_t^\top \Lambda_t^{-1} G_t} \right] ,
\end{align*}
which 
yields 
exactly the same result as \eqref{eq:proof_thm_convex_nonsmooth_main}. Since the rest of the proofs of both nonsmooth and smooth theory are based on \eqref{eq:proof_thm_convex_nonsmooth_main}, we can achieve the same convergence results as Theorems~\ref{thm:nonsmooth} and \ref{thm:convex_smooth}, while we have the upper bound on $\max_{0\le t \le T-1} \Norm{W_t - W_*}_{\mathrm{op}} \le \tilde{D}_{\mathrm{op}}$ and $\max_{0\le t \le T-1} \Norm{W_t - W_*}_{\mathrm{F}} \le \tilde{D}_{\mathrm{F}}$, where the diameters $\tilde{D}_{\mathrm{op}}$ and $\tilde{D}_{\mathrm{F}}$ depend on the geometry of the convex set $\cW$ and remain constants when $T$ increases.

\end{document}